\documentclass[11pt]{article}
\usepackage[a4paper,margin=1in]{geometry}
\usepackage{amsmath,amssymb,amsfonts,amsthm,mathtools,bm}
\usepackage{booktabs,tabularx,array,multirow}
\usepackage[table]{xcolor}
\usepackage{graphicx}
\usepackage{subcaption}
\usepackage{float}
\usepackage{pgfplots}
\pgfplotsset{compat=1.18}
\usepgfplotslibrary{groupplots}
\usepgfplotslibrary{statistics}
\usepackage{tikz}
\usetikzlibrary{
    arrows.meta,
    positioning,
    fit,
    backgrounds,
    calc,
    decorations.pathreplacing
}
\usepackage[normalem]{ulem}

\usepackage[section]{placeins}
\usepackage{tcolorbox}
\tcbuselibrary{skins,breakable}
\newcounter{algctr}
\newenvironment{myalgo}[1]{%
  \refstepcounter{algctr}%
  \begin{tcolorbox}[
    breakable, enhanced,
    colback=white, colframe=black!35, arc=3pt, boxrule=0.55pt,
    title={\textbf{Algorithm~\thealgctr}\;#1},
    fonttitle=\small\bfseries, coltitle=black!80,
    attach boxed title to top left={yshift=-2mm,xshift=5mm},
    boxed title style={colback=white,colframe=black!35,arc=2pt,boxrule=0.55pt},
    left=6pt, right=6pt, top=4pt, bottom=4pt
  ]%
  \small
}{%
  \end{tcolorbox}%
}
\newcommand{\kwb}[1]{\textbf{#1}}
\newcommand{\AlgIn}[1]{\noindent\textbf{Input:}\enspace #1\par\vspace{2pt}}
\newcommand{\AlgOut}[1]{\noindent\textbf{Output:}\enspace #1\par\vspace{4pt}\hrule height 0.3pt\vspace{4pt}}
\newcommand{\AlgSec}[1]{\par\vspace{3pt}\noindent\textit{\textbf{#1}}\par\vspace{2pt}}
\usepackage{enumitem}
\usepackage{microtype}
\usepackage[numbers,sort&compress]{natbib}
\usepackage{hyperref}
\hypersetup{
  pdftitle={Fixed and Adaptive Topological DeepONets: Functional Measurements on Hausdorff Locally Convex Spaces},
  pdfauthor={Khemraj Shukla and George Em Karniadakis},
  pdfsubject={Topological Operator learning},
  pdfkeywords={DeepONet, operator learning, locally convex spaces, SciDAC}
}
\usepackage[nameinlink,noabbrev]{cleveref}

\hypersetup{
  colorlinks=true,
  linkcolor=blue!55!black,
  citecolor=blue!55!black,
  urlcolor=blue!55!black
}

\newtheorem{theorem}{Theorem}[section]
\newtheorem{corollary}{Corollary}[section]
\newtheorem{remark}{Remark}[section]

\definecolor{fixedcolor}{RGB}{31,119,180}
\definecolor{adaptivecolor}{RGB}{255,127,14}
\definecolor{sensorcolor}{RGB}{44,160,44}
\definecolor{twostepcolor}{RGB}{214,39,40}
\definecolor{vanillacolor}{RGB}{148,103,189}
\definecolor{fnocolor}{RGB}{23,23,23}

\title{Fixed and Adaptive Topological DeepONets:
Functional Measurements on Hausdorff Locally Convex Spaces}
\author{
Khemraj Shukla\thanks{
Corresponding author:
\href{mailto:khemraj_shukla@brown.edu}
{khemraj\_shukla@brown.edu}
}
\\
\small Division of Applied Mathematics, Brown University\\
\small Providence, Rhode Island 02912, USA
\and
George Em Karniadakis
\\
\small Division of Applied Mathematics, Brown University\\
\small Providence, Rhode Island 02912, USA
}
\date{}
\begin{document}
\maketitle
\begin{abstract}
Deep Operator Networks (DeepONets)~\cite{lu2021deeponet} typically encode an
input function through point values on a fixed discretization. Building on
the Topological DeepONet framework of
Ismailov~\cite{ismailov2026topological}, we replace point samples by
continuous linear functionals drawn from the continuous dual
$(\mathcal{V},\{p_\alpha\}_{\alpha\in A})$\footnote{Here $\mathcal{V}$ is a
vector space and $\{p_\alpha\}_{\alpha\in A}$ is a point-separating family of
seminorms indexed by a (possibly infinite) index set $A$; the seminorms
generate the locally convex topology on $\mathcal{V}$, and $\mathcal{V}'$
denotes its continuous dual, the space of continuous linear functionals on
$\mathcal{V}$.}, whose topology is generated by a
point-separating family of seminorms rather than a single norm, and develop
fixed and adaptive functional measurement systems. These measurements are
combined with a coefficient-space Two-Step procedure~\cite{lee2024training},
while a training-only decoder and regularization stabilize the adaptive
coordinates. We also derive a discrete error decomposition separating
measurement, output-basis, and neural-approximation errors, together with a
Barron-rate refinement.
The framework is evaluated on the antiderivative operator, a genuinely
non-normable locally convex input space, heterogeneous Darcy flow, a
controlled operator, and fixed-time and time-evolving Navier--Stokes
vorticity operators. The non-normable benchmark directly demonstrates that
the method can learn operators when the input topology is generated by a
family of seminorms and cannot be represented by a single norm. In the
heterogeneous Darcy problem, the functional models retain nearly
resolution-independent errors of approximately \(5.5\%\)--\(5.6\%\) on
unseen grids, while in the controlled problem adaptive measurements reduce
the mean error to below \(1.2\%\).
For the fixed-time Navier--Stokes problem, the Adaptive Topological DeepONet
is the most accurate DeepONet-based model, attaining a three-seed mean
relative \(L^2\) error of \(1.685\%\pm0.017\%\) using only \(128\)
functional coordinates. A comparably sized Fourier neural operator
(FNO)~\cite{li2021fourier} achieves the lower error
\(0.832\%\pm0.172\%\) on the uniform periodic grid, but requires the full
\(64\times64\) input field, approximately twice the training time, and about
\(10.7\times\) greater peak GPU memory. The proposed formulation therefore
provides compact, interpretable, and discretization-portable coordinates in
the continuous dual \(\mathcal V'\), including for input spaces that are not
normable.
\end{abstract}
\section{Introduction}
\label{sec:introduction}
Many problems in computational science and engineering require the repeated
evaluation of a possibly nonlinear operator
\begin{equation}
    \mathcal{G}:\mathcal{V}\longrightarrow\mathcal{U},
    \qquad v\longmapsto\mathcal{G}(v),
    \label{eq:operator-map}
\end{equation}
where $\mathcal{V}$ and $\mathcal{U}$ are spaces of input and output
functions, respectively
\cite{shukla2024deep,oommen2022learning,oommen2024rethinking,de2025deep,
laudato2025neural,shukla2024deepI, raissi2019physics,karniadakis2021physics,rackauckas2020universal,
cuomo2022scientific}.
Examples include parameter-to-solution maps for partial differential
equations, integral operators, constitutive relations, and evolution operators
that advance an initial state to its future configuration. Within this broader area, neural operators learn maps
between function spaces and have been developed using branch--trunk,
spectral, graph-based, wavelet, transformer, and neural-field
representations
\cite{lu2021deeponet,kovachki2023neuraloperator,li2021fourier,
gupta2021multiwavelet,hao2023gnot,pfaff2021meshgraphnets,
serrano2023coral}. Learning such maps
directly from data yields efficient surrogates for simulation, optimization,
inverse problems, design, and uncertainty quantification
\cite{kovachki2023neuraloperator, shukla2026uncertainty}.
Neural operators are designed to learn mappings between infinite-dimensional
function spaces rather than between finite-dimensional vectors tied to a single
discretization \cite{kovachki2023neuraloperator,li2021fourier}.
Prominent architectures include graph-based neural operators, Fourier Neural
Operators (FNOs), and Deep Operator Networks (DeepONets), which have been
applied to parametric PDEs, fluid mechanics, porous-media flow, solid
mechanics, and multiscale systems
\cite{li2021fourier,seidman2022nomad,wang2021physicsinformed,
shukla2024deep,oommen2022learning}.
Deep Operator Networks approximate nonlinear operators through a
branch--trunk architecture motivated by universal approximation results for
operators \cite{chen1995operator,lu2021deeponet}.
Given an input function $v$ sampled at $m$ prescribed sensor locations, the
branch network maps the resulting vector of pointwise values to a set of $p$
expansion coefficients, while the trunk network evaluates a learned output
basis at the query coordinate $y$:
\begin{equation}
    \widehat{\mathcal{G}}(v)(y)
    =
    \sum_{k=1}^{p}
    b_k\!\bigl(v(x_1),\ldots,v(x_m)\bigr)\,t_k(y).
    \label{eq:standard-deeponet}
\end{equation}
Several extensions have improved DeepONets' flexibility and physical
consistency. Physics-informed DeepONets incorporate governing equations into
the training objective \cite{wang2021physicsinformed}.
MIONet handles operators with multiple input functions \cite{jin2022mionet}.
SVD-DeepONet links the trunk representation to singular-value decomposition
and reduced-order modeling \cite{venturi2023svd}, while Two-Step DeepONet
separates output-basis learning from the branch coefficient map to improve
stability and generalization \cite{lee2024training}.
Latent-space and nonlinear-decoder formulations address solution sets
that resist low-dimensional linear description \cite{seidman2022nomad},
and multiscale architectures target oscillatory and high-frequency phenomena
\cite{liu2022multiscale}.
Despite these advances, a fundamental limitation of the original DeepONet
branch network is its reliance on a fixed-length vector of pointwise function
values. As a consequence, inputs observed on different meshes, resolutions, or
experimental sampling patterns must be interpolated, resampled, or padded to
a common representation before training. Furthermore, pointwise values can be
an inefficient coordinate system when the target operator depends primarily on
global structures, moments, averages, or spatial correlations of the input
function.
Variable-Input Deep Operator Networks (VIDONs) \cite{prasthofer2022vidon},
BelNet \cite{zhang2022belnet}, and discretization-invariant extensions
\cite{zhang2023discretization} have relaxed the fixed-grid constraint by
allowing sensor locations to vary across samples. These developments show that
the fixed-grid restriction is not intrinsic to operator learning and motivate
a more fundamental question: can the branch representation be formulated
directly in terms of the topology and continuous dual of the underlying
input space?
Topological DeepONets address this question by extending the classical
framework from Banach spaces of continuous functions to general Hausdorff
locally convex spaces (HLCSs), whose topology is generated by a family of
seminorms $\{p_\alpha\}_{\alpha\in A}$ \cite{ismailov2026topological}.
Rather than sampling $v$ at prescribed points, Topological DeepONets encode
the input through a finite collection of continuous linear functionals
$\ell_1,\ldots,\ell_q$ drawn from the continuous dual $\mathcal{V}'$.
The resulting coordinate vector $(\ell_1(v),\ldots,\ell_q(v)) \in \mathbb{R}^q$
is then processed by the branch network in exactly the same way as before.
Classical point-sensor DeepONets are recovered as a special case when each
functional is a continuous point-evaluation map.
This functional perspective separates the infinite-dimensional mathematical
input from the finite-dimensional branch encoding, and it provides a natural
mechanism for handling variable-resolution observations: the same $q$
functionals, approximated by quadrature rules adapted to whatever grid is
available, convert heterogeneous input representations into a common
coordinate space.
In this work we develop and evaluate two concrete instantiations of the
topological framework. The fixed Topological DeepONet employs
prescribed global functionals (e.g.\ inner products against a Lagrange or
spectral basis), while the adaptive Topological DeepONet learns the
measurement functionals directly from operator data, with a training-only
reconstruction decoder and soft regularization to prevent coordinate collapse.
We integrate both variants with an enhanced Two-Step DeepONet strategy
\cite{lee2024training}, which first constructs a stable low-dimensional
output basis via weighted singular-value decomposition and then trains the
branch network to predict the corresponding coefficients. This separation
allows point-sensor, fixed-functional, and adaptive-functional branch inputs
to be compared under a uniform output representation and on equal
parameter budgets. Full mathematical details of all three input encodings,
the Two-Step construction, and the training objectives are developed in
Section~\ref{sec:methods}.
We evaluate the proposed framework on three benchmark operator-learning
problems: the antiderivative operator, the heterogeneous Darcy-flow
equation, and the two-dimensional Navier--Stokes vorticity
problem, the latter in both fixed-time and time-evolving settings.
These benchmarks span integral, elliptic, and nonlinear time-dependent
operators, and the Darcy and Navier--Stokes datasets follow standard
neural-operator evaluation protocols from the FNO literature
\cite{li2021fourier,kovachki2023neuraloperator}.
\paragraph{Contributions.}
The principal contributions of this work are as follows.
\begin{enumerate}
    \item We provide a computational realization of Topological DeepONets in
    which finite-dimensional branch coordinates are constructed from continuous
    linear functionals in the dual space $\mathcal{V}'$ of a Hausdorff locally
    convex input space
    $(\mathcal{V},\{p_\alpha\}_{\alpha\in A})$.
    Unlike canonical point-sensor DeepONets, the resulting coordinates may
    represent weak, local, spectral, finite-element, or distributional
    measurements compatible with the topology of $\mathcal{V}$.
    \item We introduce fixed and supervised-adaptive functional measurement
    systems. The adaptive coordinates remain continuous linear functionals
    because they are learned within the span of an admissible dual dictionary.
    A training-only decoder and soft regularization are used to control
    information loss, feature collapse, and excessive drift from the structured
    initialization.
    \item We integrate these measurements with a coefficient-space Two-Step
    DeepONet construction. A weighted singular value decomposition produces a
    rank-stable output basis, and the branch model is trained directly to
    predict the corresponding reduced coefficients. This separates input
    representation learning from output-subspace approximation.
    \item We derive a discrete error decomposition that separates measurement
    reconstruction, output-basis truncation, and neural approximation errors,
    together with a Barron-rate refinement of the neural term. For the
    antiderivative operator, we evaluate the decomposition over
    $q\in\{8,16,32,64,128\}$ and verify that the theorem-consistent bound holds
    for every test sample while the measurement and reconstruction errors
    decrease as the number of functionals increases.
    \item We demonstrate discretization transfer in a heterogeneous-resolution
    Darcy benchmark. The models are trained using coefficient fields observed
    on $33\times33$, $49\times49$, $65\times65$, and $85\times85$ grids and are
    evaluated on previously unseen $57\times57$, $73\times73$, and
    $97\times97$ grids. The same functionals are evaluated directly on each
    native grid using grid-dependent quadrature, without first interpolating
    every input to a common mesh.
    \item We isolate the role of functional representation through a controlled
    operator with a known task-relevant input subspace. Under a common latent
    dimension, output basis, branch architecture, and test set, we compare the
    proposed measurements with PCA, random projections, fixed and optimized
    sensors, a learned dense bottleneck, decoder variants, and a full-field
    multilayer perceptron.
    \item We provide matched computational comparisons on the antiderivative,
    Darcy, and two-dimensional Navier--Stokes benchmarks. For the fixed-time
    Navier--Stokes problem, the Adaptive Topological DeepONet is the most
    accurate DeepONet-based model, while a parameter-matched Fourier neural
    operator achieves lower error on the uniform periodic grid. This comparison
    clarifies that the principal contribution of the topological formulation is
    compact, interpretable, operator-adapted, and discretization-portable
    functional coordinates rather than universal superiority over
    grid-specific spectral architectures.
\end{enumerate}
\paragraph{Paper organization.}
Section~\ref{sec:def} reviews the required function-space background and
fixes notation.
Section~\ref{sec:methods} presents the fixed and adaptive functional
measurements, the enhanced Two-Step construction, and the training
objectives, together with the complete algorithm and architecture
(Section~\ref{sec:algorithm}).
Section~\ref{sec:theory} establishes a discrete approximation error bound for
Topological DeepONets.
Section~\ref{sec:experiments} describes the benchmark problems, numerical
settings, and results for the antiderivative, Darcy-flow, and
fixed-time and time-evolving Navier--Stokes
problems, and Section~\ref{sec:summary} summarizes the conclusions and
directions for future work.
\begin{figure}[h!]
\centering
\includegraphics[width=\linewidth]{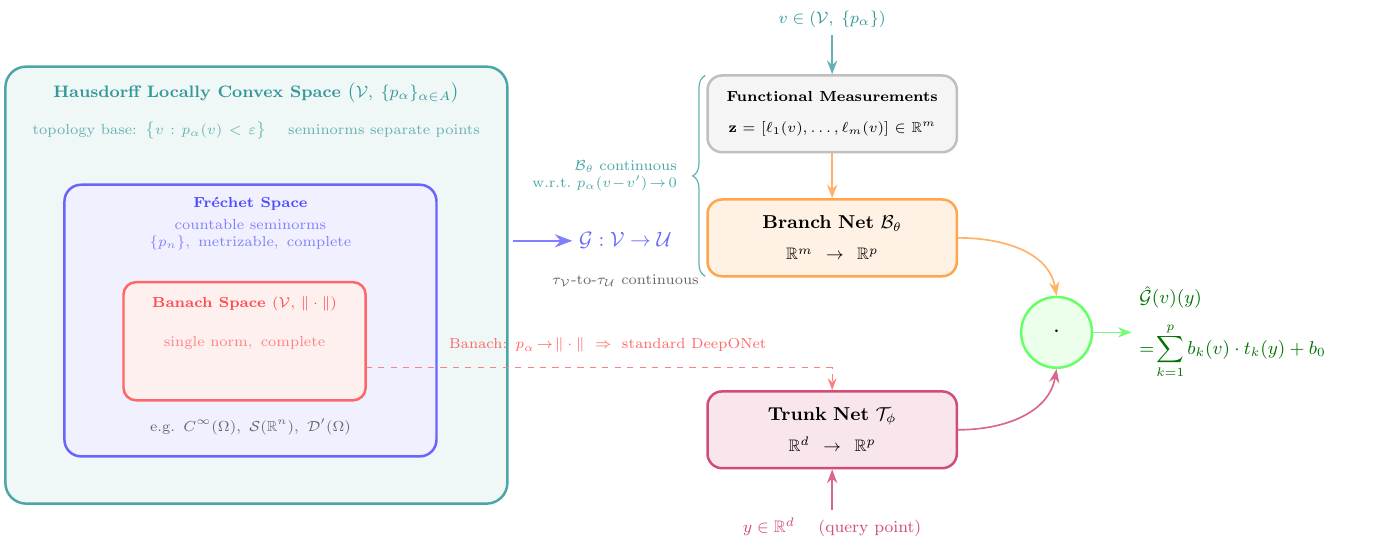}
\caption{Topological DeepONet: operator learning framework
$\mathcal{G}:\mathcal{V}\to\mathcal{U}$ where $\mathcal{V}$ is a Hausdorff
locally convex input space and $\mathcal{U}$ is the Hilbert output space.
\emph{Left}: space hierarchy
$\mathrm{Banach}(\mathcal{V},\|\cdot\|)
 \subset\mathrm{Fr\acute{e}chet}
 \subset\mathrm{HLCS}(\mathcal{V},\{p_\alpha\})$.
\emph{Right}: branch--trunk architecture; the input is encoded by
functional measurements $\ell_j\in\mathcal{V}'$ (point evaluations
being the canonical special case), and continuity of $\mathcal{B}_\theta$
is enforced with respect to the seminorm topology $\{p_\alpha\}$ on
$\mathcal{V}$ rather than a single norm.}
\label{fig:topodeeponet}
\end{figure}
\section{Mathematical Preliminaries}\label{sec:def}
This section collects the functional-analytic background required for the
topological formulation: seminorms and locally convex topologies
(\autoref{subsec:seminorm}), the
Hausdorff separation property (\autoref{subsec:HLCS}), and continuous linear
functionals on such
spaces (\autoref{subsec:cdcf}). Throughout the paper, generic operator inputs are denoted by
$v\in\mathcal{V}$ and outputs by $u\in\mathcal{U}$; in the PDE benchmarks of
Section~\ref{sec:experiments}, where the input is a coefficient or initial
field, we follow the neural-operator literature and write $a$ for the input.
\subsection{Seminorms and Locally Convex Spaces} \label{subsec:seminorm}
A seminorm on a vector space $\mathcal{V}$ is a map
$p:\mathcal{V}\to[0,\infty)$ satisfying
\begin{enumerate}
    \item $p(\lambda v) = |\lambda|\,p(v)$ for all $v\in\mathcal{V}$,
      $\lambda\in\mathbb{R}$,
    \item $p(u+v) \leq p(u) + p(v)$ for all $u,v\in\mathcal{V}$.
\end{enumerate}
Unlike a norm, a seminorm permits $p(v)=0$ for $v\neq 0$. A locally convex space is a vector space $\mathcal{V}$ whose
topology is generated by a family of seminorms
$\{p_\alpha\}_{\alpha\in A}$; a net $v_\lambda\to v$ in this topology
if and only if $p_\alpha(v_\lambda - v)\to 0$ for every $\alpha\in A$.
\subsection{Hausdorff Locally Convex Spaces (HLCS)} \label{subsec:HLCS}
A locally convex space $(\mathcal{V},\{p_\alpha\}_{\alpha\in A})$ is
called Hausdorff (or separated) if the seminorm family
separates points: for every $u\neq v$ in $\mathcal{V}$ there exists
$\alpha\in A$ such that $p_\alpha(u-v)>0$.
This condition ensures that limits of convergent nets are unique.
The class of Hausdorff locally convex spaces (HLCSs) encompasses
Banach spaces (a single norm, complete),
Fr\'{e}chet spaces (countably many seminorms, metrizable, complete),
and spaces of distributions or smooth functions such as
$C^\infty(\Omega)$, $\mathcal{S}(\mathbb{R}^n)$, and
$\mathcal{D}'(\Omega)$.
\subsection{Continuous Dual and Continuous Functionals}\label{subsec:cdcf}
The continuous dual $\mathcal{V}'$ of a locally convex space
$\mathcal{V}$ consists of all continuous linear functionals
$\ell:\mathcal{V}\to\mathbb{R}$.
A linear functional $\ell$ is continuous in the seminorm topology if
and only if there exist finitely many seminorms
$p_{\alpha_1},\ldots,p_{\alpha_k}$ and a constant $C>0$ such that
\[
    |\ell(v)| \leq C\max_{i}\,p_{\alpha_i}(v)
    \quad\text{for all }v\in\mathcal{V}.
\]
Point evaluations $\ell(v)=v(x_0)$ belong to $\mathcal{V}'$ whenever
pointwise evaluation is continuous in the topology of $\mathcal{V}$,
as in $C(\Omega)$ with the uniform norm. More general examples include
inner products $\ell_j(v)=\int_\Omega v(x)\varphi_j(x)\,\mathrm{d}x$
against fixed test functions $\varphi_j\in L^2(\Omega)$.
The Banach-space setting does not, by itself, imply that point
sensors are admissible. The allowable measurements are arbitrary elements
of the continuous dual $\mathcal{V}'$. Point evaluation is continuous on
$C(\Omega)$ equipped with the supremum norm, but it is neither well defined
on equivalence classes in $L^2(\Omega)$ nor continuous in the $L^2$ topology.
Thus, canonical point-sensor DeepONet is a narrower special case obtained
only when point evaluations belong to the chosen continuous dual.
\section{Methodology}\label{sec:methods}
In this section, we first present a detailed comparison between the
conventional DeepONet and the Topological DeepONet
(\autoref{subsub:dnetvtopodnet}). We then introduce the proposed Two-Step
Topological DeepONet (\autoref{subsec:formulation}) and summarize the
complete algorithm and architecture (\autoref{sec:algorithm}).
\subsection{Canonical versus Topological DeepONet}\label{subsub:dnetvtopodnet}
Figure~\ref{fig:topodeeponet} presents the overall architecture of the
proposed Topological DeepONet framework.
The left panel situates the framework within a strict hierarchy of
topological vector spaces.
At the outermost level sits the Hausdorff locally convex space
$(\mathcal{V}, \{p_\alpha\}_{\alpha \in A})$, whose topology is generated
by an arbitrary family of seminorms $\{p_\alpha\}$ that jointly separate
points --- the defining property that replaces the single norm of classical
functional analysis.
Nested within it is the class of Fr\'{e}chet spaces, which impose the
additional regularity that the seminorm family be countable and the space be
complete with respect to the induced metric.
The innermost class, the Banach space $(\mathcal{V}, \|\cdot\|)$,
corresponds to the degenerate case $|A|=1$ in which a single norm suffices;
this is precisely the setting assumed by the original DeepONet of
Lu~et~al.~\cite{lu2021deeponet}.
The right panel depicts the operator learning pipeline: an input function
$v \in (\mathcal{V}, \{p_\alpha\})$ is first encoded by $m$ continuous
functional measurements into the vector
$\mathbf{z} = [\ell_1(v), \ldots, \ell_m(v)] \in \mathbb{R}^m$, which is
processed by the Branch Net $\mathcal{B}_\theta : \mathbb{R}^m \to
\mathbb{R}^p$, while a query point $y \in \mathbb{R}^d$ is processed
independently by the Trunk Net $\mathcal{T}_\phi : \mathbb{R}^d \to
\mathbb{R}^p$.
The operator output is reconstructed as
\begin{equation}
    \hat{\mathcal{G}}(v)(y)
    \;=\;
    \sum_{k=1}^{p} b_k(v)\, t_k(y) \;+\; b_0,
    \label{eq:deeponet-output}
\end{equation}
an inner product between the branch and trunk embeddings.
The dashed arrow connecting the Banach sub-box to the right panel makes
explicit that all existing DeepONet theory is recovered as a strict special
case of this more general construction.
The critical topological subtlety introduced by this framework is indicated
by the brace annotation alongside the Branch Net in
Figure~\ref{fig:topodeeponet}: the composite branch map
$v \mapsto \mathcal{B}_\theta(\mathbf{z}(v))$ is
required to be continuous with respect to the seminorm topology on
$\mathcal{V}$, meaning that $p_\alpha(v - v') \to 0$ for all $\alpha \in A$
must imply
$\|\mathcal{B}_\theta(\mathbf{z}(v)) -
  \mathcal{B}_\theta(\mathbf{z}(v'))\|_{\mathbb{R}^p} \to 0$.
In a Banach space this reduces to ordinary norm continuity, a condition
already implicit in classical DeepONet.
The class of admissible continuous linear functionals depends on the
chosen topology. If $\tau_1\subset\tau_2$ are two locally convex topologies
on the same vector space $\mathcal{V}$, then $\tau_1$ is coarser than
$\tau_2$ and
\[
(\mathcal{V},\tau_1)' \subseteq (\mathcal{V},\tau_2)'.
\]
Thus, coarsening the domain topology generally imposes a stronger continuity
requirement and may reduce the continuous dual, whereas refining the topology
may enlarge the class of continuous linear functionals. Moreover,
nonnormability does not mean that the topology is simply coarser than every
norm topology; rather, it means that the topology cannot be generated by a
single norm. This distinction is relevant when $\mathcal{V}$ contains spaces
such as distributions $\mathcal{D}'(\Omega)$, Schwartz functions
$\mathcal{S}(\mathbb{R}^n)$, or smooth functions $C^\infty(\Omega)$, whose
natural locally convex topologies are generated by families of seminorms.
By grounding the operator $\mathcal{G} : \mathcal{V} \to \mathcal{U}$ in
the $\tau_{\mathcal{V}}$-to-$\tau_{\mathcal{U}}$ continuity of the
\textsc{hlcs} topology rather than in any norm, the Topological DeepONet
extends universal approximation guarantees to these infinite-dimensional,
non-normable spaces.
Figure~\ref{fig:branch-comparison} isolates the central
architectural departure from canonical DeepONet by placing the two branch
input paradigms side by side.
In the left panel, the canonical approach is depicted: the underlying
function $v\in C(\Omega)\subset\mathcal{V}$, shown as a smooth curve, is
sampled at $m$ pre-fixed sensor locations $\{x_i\}_{i=1}^{m}$ (marked as
discrete dots with dashed projection lines to the domain axis), reducing it
to a finite-dimensional vector
$\mathbf{v}=[v(x_1),\ldots,v(x_m)]\in\mathbb{R}^m$ before any learning
takes place.
This discretisation step is inherent to the standard formulation --- the
Branch Net $\mathcal{B}_\theta$ sees only $\mathbb{R}^m$, and its
continuity is measured in the Euclidean norm
$\|\mathbf{v}-\mathbf{v}'\|_{\mathbb{R}^m}$.
The right panel shows the topological alternative: the same function $v$ is
treated as an element of the Hausdorff locally convex space
$(\mathcal{V}, \{p_\alpha\}_{\alpha \in A})$, visualised as a shaded
continuous curve to emphasise that the branch coordinates draw on the
entire functional structure of $v$ rather than on $m$ prescribed pointwise
values.
The branch input is the coordinate vector
$\mathbf{z}_m(v)=[\ell_1(v),\ldots,\ell_m(v)]$, where each
$\ell_j\in\mathcal{V}'$ is continuous with respect to the seminorm family:
$p_\alpha(v - v') \to 0$ for all $\alpha \in A$ forces
$\mathbf{z}_m(v)\to\mathbf{z}_m(v')$ and hence drives the branch outputs
together.
Consequently, the Topological DeepONet requires no prescribed sensor grid:
the same functionals can be approximated by quadrature on whatever
discretization of $v$ is available, and the finite-dimensional encoding is
adapted to the topology of $\mathcal{V}$ rather than tied to a particular
mesh.
Taken together, the two figures articulate both the theoretical scope and
the practical motivation of the proposed framework.
Figure~\ref{fig:topodeeponet} establishes that Banach-space DeepONet
occupies the innermost level of a well-ordered hierarchy of function spaces,
and that the branch--trunk architecture carries over to every level of that
hierarchy upon replacing norm continuity with seminorm continuity.
Figure~\ref{fig:branch-comparison} then makes the departure operationally
concrete: the key change is not in the network architecture itself --- the
branch--trunk decomposition and the inner-product reconstruction
formula~\eqref{eq:deeponet-output} remain intact --- but in
how the branch coordinates are generated: point evaluations tied to
a fixed grid are replaced by continuous functionals compatible with the
intrinsic topology of $\mathcal{V}$.
This change is what enables the Topological DeepONet to handle function
classes inaccessible to norm-based formulations, including spaces of smooth
functions, distributions, and other non-normable locally convex spaces that
arise naturally in the analysis of partial differential equations and
quantum field theory.
\begin{figure}[t]
\centering
\includegraphics[width=\linewidth]{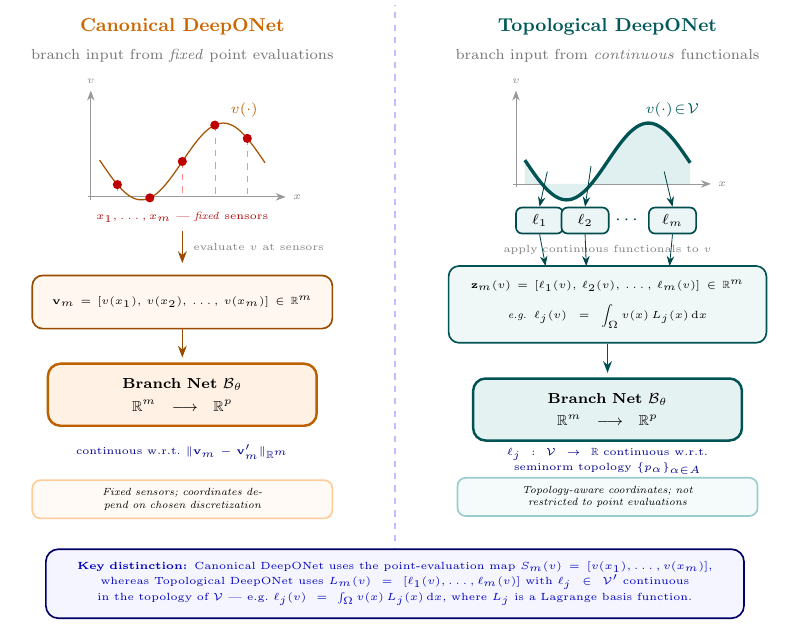}
\caption{
Construction of the branch-network input.
\emph{Left}: canonical DeepONet~\cite{lu2021deeponet} evaluates \(v\) at
\(m\) fixed sensors and forms
\(\mathbf v_m=[v(x_1),\ldots,v(x_m)]\in\mathbb R^m\).
\emph{Right}: Topological DeepONet applies continuous functionals
\(\ell_j:\mathcal V\to\mathbb R\) and forms the topology-aware coordinate
vector
\(\mathbf z_m(v)=[\ell_1(v),\ldots,\ell_m(v)]\in\mathbb R^m\),
for example with \(\ell_j(v)=\int_{\Omega}v(x)L_j(x)\,\mathrm{d}x\) and
\(L_j\) a Lagrange basis function.
Both branch networks receive a finite-dimensional vector, but the
topological construction is not restricted to point evaluations and can
use functionals compatible with the seminorm topology
\(\{p_\alpha\}_{\alpha\in A}\) of \(\mathcal V\).
Here the number of functionals is denoted \(m\) for direct
comparison with the \(m\) point sensors; in
Section~\ref{subsec:formulation} the (possibly smaller) number of learned
functional coordinates is denoted \(q\le m\).
}
\label{fig:branch-comparison}
\end{figure}
\begin{remark}
\cite{ismailov2026topological} also mentions the difference between his proposed approach and the approximation study of \cite{lanthaler2021error}. Here we reiterate these comparison in a simplified way.
Lanthaler et al.~\cite{lanthaler2021error} establish DeepONet universality
for measurable operators in a probabilistic setting, where the approximation
error is measured in a distribution-dependent $L^2$ norm. This permits
noncompact input domains and weaker regularity assumptions, but provides an
average rather than uniform guarantee. In contrast, Ismailov~
\cite{ismailov2026topological} considers continuous mappings on compact subsets
of locally convex topological vector spaces and constructs neural networks
using continuous linear functionals. His result therefore extends
approximation theory beyond Banach input spaces while retaining uniform
approximation. Thus, the two results are complementary: Lanthaler et al.\
generalize the admissible operators and domains, whereas Ismailov generalizes
the topology of the input space. A succinct comparison is shown in \autoref{tab:lanthaler-ismailov}.
\begin{table}[t]
\centering
\caption{Comparison of  approximation frameworks: Lanthaler et al. \cite{lanthaler2021error} vs Ismailov  \cite{ismailov2026topological}.}
\label{tab:lanthaler-ismailov}
\renewcommand{\arraystretch}{1.1}
\begin{tabular}{lll}
\toprule
\textbf{Aspect} & \textbf{Lanthaler et al. \cite{lanthaler2021error}} & \textbf{Ismailov \cite{ismailov2026topological}} \\
\midrule
Operator & Measurable & Continuous \\
Input domain & Possibly noncompact & Compact subset \\
Input space & Probabilistic Banach/Hilbert space & Locally convex space \\
Error metric & $L^2$ average error & Uniform $L^\infty$ error \\
\bottomrule
\end{tabular}
\end{table}
\end{remark}
\subsection{Two-Step Adaptive Topological DeepONet}
\label{subsec:formulation}
\subsubsection{Operator and function spaces}
Let $\mathcal{V}$ be a Hausdorff locally convex topological vector space
and let $\mathcal{U}$ be a separable Hilbert space with inner product
$\langle\cdot,\cdot\rangle_{\mathcal{U}}$ and induced norm
$\|\cdot\|_{\mathcal{U}}$.
We seek to approximate a continuous nonlinear operator
\begin{equation}
  \mathcal{G} : \mathcal{V} \rightarrow \mathcal{U},
  \label{eq:operator}
\end{equation}
from a finite training set
$\mathcal{D}_N = \{(v_i, u_i)\}_{i=1}^{N}$, where $u_i = \mathcal{G}(v_i)$.
No norm is assumed on $\mathcal{V}$; only a finite family of continuous
observations is required for the computational realisation.
\subsubsection{Base observation map}
Let $\lambda_1, \ldots, \lambda_m \in \mathcal{V}'$ be continuous linear
functionals on $\mathcal{V}$.
The \emph{base observation map} is
\begin{equation}
  \Lambda_m : \mathcal{V} \rightarrow \mathbb{R}^m, \qquad
  \Lambda_m(v) = \bigl[\lambda_1(v),\, \ldots,\, \lambda_m(v)\bigr]^\top.
  \label{eq:base-obs}
\end{equation}
For function spaces on which point evaluation is continuous, such as
$C(\overline{\Omega})$ with the supremum norm, one may take
$\lambda_j(v)=v(x_j)$.
When point evaluation is not well-defined or not continuous, each
$\lambda_j$ may instead be a weak measurement
$\lambda_j(v)=\langle v,\varphi_j\rangle$ for a test function
$\varphi_j$, a local average, or a finite-element degree of freedom.
In the Darcy and Navier--Stokes experiments, point-sensor models are
used only as discrete numerical baselines after spatial discretization. They
are not interpreted as continuum elements of $L^2(\Omega)'$.
The training observation matrix is
\begin{equation}
  S = \bigl[\Lambda_m(v_1),\, \ldots,\, \Lambda_m(v_N)\bigr]^\top
    \in \mathbb{R}^{N \times m}.
\end{equation}
\begin{myalgo}{Fixed and Adaptive Topological DeepONet}
\label{alg:adaptive-topo-deeponet}
\AlgIn{
Training data $\mathcal{D}_N=\{(v_i,u_i)\}_{i=1}^N$;
base observations $S\in\mathbb{R}^{N\times m}$;
outputs $U\in\mathbb{R}^{n_y\times N}$;
initial measurement matrix $M_0\in\mathbb{R}^{m\times q}$;
mode $\in\{\textsc{Fixed},\textsc{Adaptive}\}$
}
\AlgOut{Operator surrogate $\widehat{\mathcal G}$}
\AlgSec{Stage I: output basis}
\begin{enumerate}[label=\arabic*.,leftmargin=2em,itemsep=1pt,parsep=0pt]
\item Initialize trunk $\bm{\phi}_\mu$ and free coefficients
$A\in\mathbb{R}^{p_0\times N}$.
\item Minimize
\[
\mathcal L_{\mathrm I}
=
\frac{
\|W_y^{1/2}(\Phi_\mu A-U)\|_F^2
}{
N\,\operatorname{tr}(W_y)
}
\]
with respect to $(\mu,A)$.
\item Compute
\[
W_y^{1/2}\Phi_{\mu^\star}=P\Sigma V^\top,
\qquad
r=\#\{\sigma_k>\tau_{\mathrm{rank}}\sigma_1\}.
\]
\item Set
\[
Q=W_y^{-1/2}P_r,
\qquad
C^\star=\Sigma_rV_r^\top A^\star,
\qquad
\boldsymbol{\psi}(y)=\Sigma_r^{-1}V_r^\top
\bm{\phi}_{\mu^\star}(y),
\]
and freeze $\mu^\star$.
\end{enumerate}
\AlgSec{Stage II: branch and measurement training}
\begin{enumerate}[label=\arabic*.,leftmargin=2em,itemsep=1pt,parsep=0pt,start=5]
\item Set
\[
M=
\begin{cases}
M_0, & \textsc{Fixed},\\
\text{trainable }M_0, & \textsc{Adaptive}.
\end{cases}
\]
Initialize
$\mathcal B_\theta:\mathbb{R}^q\to\mathbb{R}^r$.
For the adaptive model, also initialize the training-only decoder
$D_\omega:\mathbb{R}^q\to\mathbb{R}^m$.
\item Compute and store normalization statistics from $SM$.
\item \kwb{for} each epoch \kwb{do}
\begin{enumerate}[label*=\arabic*.,leftmargin=2em,itemsep=1pt,parsep=0pt]
\item Draw a mini-batch, optionally using hard-example probabilities.
\item Form
\[
\widetilde Z_{\mathcal B}
=
\operatorname{standardize}(S_{\mathcal B}M),
\qquad
\widehat C_{\mathcal B}
=
\mathcal B_\theta(\widetilde Z_{\mathcal B}).
\]
\item Minimize
\[
\mathcal L=
\begin{cases}
\mathcal L_c,
& \textsc{Fixed},\\[1mm]
\mathcal L_c
+\lambda_{\mathrm{rec}}\mathcal L_{\mathrm{rec}}
+\lambda_{\mathrm{orth}}\mathcal L_{\mathrm{orth}}
+\lambda_{\mathrm{drift}}\mathcal L_{\mathrm{drift}},
& \textsc{Adaptive},
\end{cases}
\]
with losses defined in
\cref{eq:lc,eq:lrec,eq:lorth-ldrift}.
\item Update $\theta$; for the adaptive model also update $M$ and $\omega$.
Retain the best validation checkpoint.
\end{enumerate}
\item \kwb{end for}
\item Discard $D_\omega$ after adaptive training.
\end{enumerate}
\AlgSec{Inference}
\begin{enumerate}[label=\arabic*.,leftmargin=2em,itemsep=1pt,parsep=0pt,start=9]
\item For a new input $v$, compute
\[
\mathbf s=\Lambda_m(v),
\qquad
\widetilde{\mathbf z}
=
\operatorname{standardize}(M^{\star\top}\mathbf s),
\qquad
\widehat{\mathbf c}
=
\mathcal B_{\theta^\star}(\widetilde{\mathbf z}).
\]
\item \textbf{return}
\[
\widehat{\mathcal G}(v)(y)
=
\boldsymbol{\psi}(y)^\top\widehat{\mathbf c}.
\]
\end{enumerate}
\end{myalgo}
\subsubsection{Learned measurement map}
\label{sec:learned-meas}
A conventional DeepONet passes $\Lambda_m(v)$ directly to the branch
network.
Instead, we learn a compact linear map $M \in \mathbb{R}^{m \times q}$
($q \leq m$) that produces \emph{operator-adapted} features:
\begin{equation}
  M^\top \Lambda_m(v) \in \mathbb{R}^q, \qquad
  \ell_k^M(v) = \sum_{j=1}^{m} M_{jk}\,\lambda_j(v), \quad k = 1,\ldots,q.
  \label{eq:learned-meas}
\end{equation}
Each $\ell_k^M$ is a finite linear combination of elements of
$\mathcal{V}'$, hence continuous on $\mathcal{V}$.
$M$ is initialised from a structured weak-measurement basis (e.g.\ cosine
or polynomial test functions) and adapted during training.
\subsubsection{Output basis: Stage~I}
\label{sec:stage1}
Let $y_1, \ldots, y_{n_y}$ be aligned output locations in the output
domain $\Omega_y\subset\mathbb{R}^d$,
$W_y = \mathrm{diag}(w_1^y,\ldots,w_{n_y}^y) \succ 0$ a quadrature weight
matrix, and $U \in \mathbb{R}^{n_y \times N}$ the snapshot matrix whose
$i$-th column stores $u_i$ evaluated on the grid.
We learn a trunk network $\bm{\phi}_\mu : \Omega_y \rightarrow
\mathbb{R}^{p_0}$, with output width $p_0\in\mathbb{N}$, by solving
\begin{equation}
  (\mu^\star, A^\star)
  \in \operatorname*{arg\,min}_{\mu,\, A \in \mathbb{R}^{p_0 \times N}}
  \frac{1}{N\,\mathrm{tr}(W_y)}
  \left\|W_y^{1/2}\!\left(\Phi_\mu A - U\right)\right\|_F^2,
  \label{eq:stage1}
\end{equation}
where $\Phi_\mu \in \mathbb{R}^{n_y \times p_0}$ stacks the trunk
evaluations on the grid.
The free coefficient matrix $A$ decouples output-subspace learning from
input regression.
After training, a rank-revealing weighted SVD
\begin{equation}
  W_y^{1/2}\Phi_{\mu^\star} = P\Sigma V^\top
  \label{eq:svd}
\end{equation}
yields the \emph{weighted-orthonormal basis}
\begin{equation}
  Q = W_y^{-1/2} P_r \in \mathbb{R}^{n_y \times r}, \qquad
  Q^\top W_y Q = I_r,
  \label{eq:Q}
\end{equation}
where $r = \#\{k : \sigma_k > \tau_{\mathrm{rank}}\,\sigma_1\}$ is the
numerical rank determined by a prescribed tolerance
$\tau_{\mathrm{rank}}\in(0,1)$.
The Stage~II coefficient targets are
$C^\star = \Sigma_r V_r^\top A^\star \in \mathbb{R}^{r \times N}$.
Because $Q$ is weighted-orthonormal, output error and coefficient error are
isometric: for any $a, b \in \mathbb{R}^r$,
\begin{equation}
  \|Q(a - b)\|_{W_y} = \|a - b\|_2.
  \label{eq:isometry}
\end{equation}
\subsubsection{Coefficient prediction: Stage~II}
\label{sec:stage2}
With $Q$ and $C^\star$ fixed, the Branch Net
$\mathcal{B}_\theta : \mathbb{R}^q \rightarrow \mathbb{R}^r$ and the
measurement matrix $M$ are trained jointly.
Standardised measurements
$\tilde{\mathbf{z}}(v)
 = \mathrm{diag}(\bm{\sigma}_z)^{-1}(M^\top\Lambda_m(v) - \bm{\mu}_z)$
serve as input, where $\bm{\mu}_z\in\mathbb{R}^q$ and
$\bm{\sigma}_z\in\mathbb{R}^q$ denote the componentwise mean and standard
deviation of the training features, and the prediction is
\begin{equation}
  \hat{\mathcal{G}}(v)(y)
  = \boldsymbol{\psi}(y)^\top
    \mathcal{B}_\theta\!\left(\tilde{\mathbf{z}}(v)\right),
  \label{eq:prediction}
\end{equation}
where $\boldsymbol{\psi}(y)^\top
= \bm{\phi}_{\mu^\star}(y)^\top V_r \Sigma_r^{-1}$ evaluates the basis at
any query point $y$.
Note that \autoref{eq:prediction} is the adaptive realisation of
\autoref{eq:deeponet-output}, with $b_k(v) = \mathcal{B}_\theta(
\tilde{\mathbf{z}}(v))_k$, $t_k(y) = \psi_k(y)$, and $b_0 = 0$ (absorbed
into the basis).
\paragraph{Training objective.}
The Stage~II loss combines coefficient regression with three regularisers:
\begin{equation}
  \mathcal{L} =
  \underbrace{\mathcal{L}_c}_{\text{coeff.\ loss}}
  + \lambda_{\mathrm{rec}}\,\underbrace{\mathcal{L}_{\mathrm{rec}}}_{\text{reconstruction}}
  + \lambda_{\mathrm{orth}}\,\underbrace{\mathcal{L}_{\mathrm{orth}}}_{\text{soft ortho.}}
  + \lambda_{\mathrm{drift}}\,\underbrace{\mathcal{L}_{\mathrm{drift}}}_{\text{drift}}.
  \label{eq:loss}
\end{equation}
The coefficient loss is
\begin{equation}
  \mathcal{L}_c = \frac{1}{Br}\sum_{i \in \mathcal{B}}
  \|\hat{\mathbf{c}}_i - \mathbf{c}_i^\star\|_2^2
  + \lambda_{\mathrm{rel}}\,\frac{1}{B}\sum_{i \in \mathcal{B}} e_i^c
  + \lambda_{\mathrm{worst}}\,\tau_c \log\!\left(
    \frac{1}{B}\sum_{i \in \mathcal{B}}
    \exp\!\left(e_i^c / \tau_c\right)
  \right),
  \label{eq:lc}
\end{equation}
where
$e_i^c = \|\hat{\mathbf{c}}_i - \mathbf{c}_i^\star\|_2 /
(\|\mathbf{c}_i^\star\|_2 + \varepsilon_c)$
is the per-sample relative error, $\mathcal{B}$ is a mini-batch of
size $B$, $\lambda_{\mathrm{rel}},\lambda_{\mathrm{worst}}\ge 0$
are fixed weights, $\tau_c>0$ is the temperature of the log-sum-exp
worst-case term, and $\varepsilon_c>0$ is a small constant preventing
division by zero.
A training-only decoder $D_\omega : \mathbb{R}^q \rightarrow \mathbb{R}^m$
reconstructs the base observations to prevent information collapse:
\begin{equation}
  \mathcal{L}_{\mathrm{rec}} =
  \frac{\dfrac{1}{Bm}\displaystyle\sum_{i \in \mathcal{B}}
    \|D_\omega(\tilde{\mathbf{z}}_i) - \mathbf{s}_i\|_2^2}
  {\dfrac{1}{Bm}\displaystyle\sum_{i \in \mathcal{B}}
    \|\mathbf{s}_i\|_2^2 + \varepsilon_s},
  \label{eq:lrec}
\end{equation}
where $\mathbf{s}_i=\Lambda_m(v_i)$ denotes the base observation
vector of the $i$-th sample and $\varepsilon_s>0$ is a small constant.
The soft weighted-orthogonality and drift penalties are
\begin{equation}
  \mathcal{L}_{\mathrm{orth}}
  = \frac{1}{q^2}\|M^\top W_x^{-1} M - I_q\|_F^2,
  \qquad
  \mathcal{L}_{\mathrm{drift}}
  = \frac{\|M - M_0\|_F^2}{\|M_0\|_F^2 + \varepsilon_M},
  \label{eq:lorth-ldrift}
\end{equation}
where $M_0$ is the structured initialisation, $W_x$ is the sensor
quadrature matrix, and $\varepsilon_M>0$ is a small constant.
The decoder is discarded at inference; the deployed model consists solely
of $M$, $\mathcal{B}_\theta$, and the frozen basis $Q$.
\subsection{Algorithm and Architecture} \label{sec:algorithm}
The complete training and inference procedures for the fixed and adaptive
topological DeepONets are summarized in
\autoref{alg:adaptive-topo-deeponet}. Detailed architectural block diagrams
of both variants are presented in
\autoref{fig:fixed-adaptive-topological-architecture}, which summarizes the
full data flow of the Adaptive
Topological DeepONet, from input observations $\Lambda_m(v)$ through the
learned measurement map and Branch Net $\mathcal{B}_\theta$ to the final
coefficient-weighted output $\hat{\mathcal{G}}(v)(y)$.
\begin{figure}[t]
\centering
\includegraphics[width=0.98\textwidth]{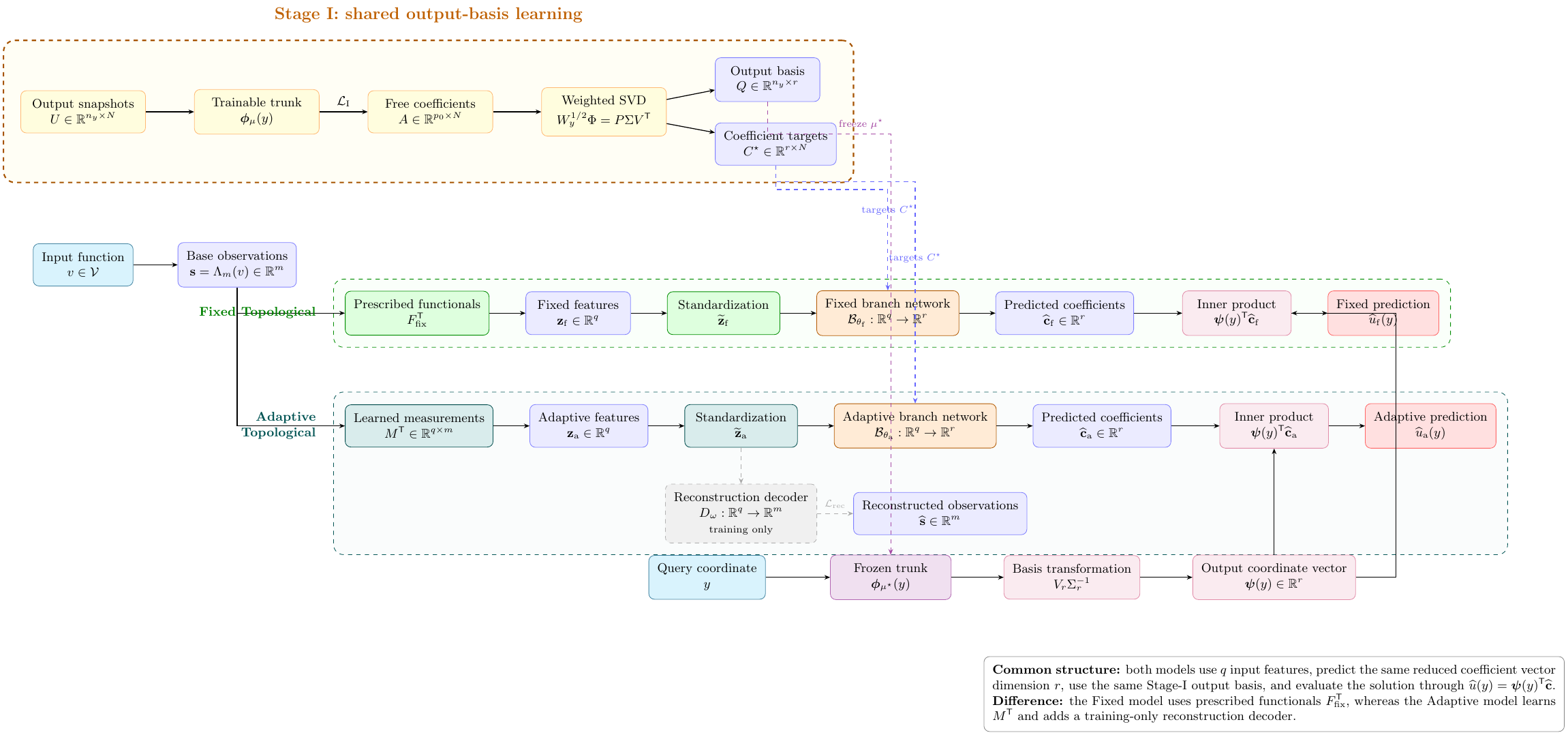}
\caption{%
Unified architecture of the Fixed and Adaptive Topological DeepONets.
Stage~I learns a common reduced output basis from the output snapshots.
In Stage~II, the Fixed model constructs \(q\) features using prescribed
functionals \(F_{\mathrm{fix}}^{\mathsf T}\), whereas the Adaptive model
learns the measurement matrix \(M^{\mathsf T}\). The adaptive pathway also
contains a training-only decoder \(D_{\omega}\), regularised through
\(\mathcal L_{\mathrm{rec}}\). Both models predict reduced coefficients
and evaluate the output continuously as
\(\widehat u(y)=\boldsymbol{\psi}(y)^{\mathsf T}\widehat{\mathbf c}\).%
}
\label{fig:fixed-adaptive-topological-architecture}
\end{figure}
\section{Discrete Error Bounds for Topological DeepONets}
\label{sec:theory}
The universal approximation result of
Ismailov~\cite{ismailov2026topological} establishes that continuous
operators on compact subsets of Hausdorff locally convex spaces can be
approximated uniformly by branch--trunk expansions whose branch networks
depend on finitely many continuous linear functionals from the continuous
dual. The result is qualitative and does not distinguish the different
sources of approximation error. We therefore derive a discrete error bound
that separates the effects of input measurement compression, output-basis
truncation, and neural-network approximation.
Let
\[
K_h\subset V_h\subset\mathbb{R}^{m}
\]
be a compact set of discrete inputs, and let
\[
\mathcal{G}_h:V_h\longrightarrow\mathbb{R}^{n}
\]
denote the corresponding discrete solution operator. Let
\[
M_q:\mathbb{R}^{m}\longrightarrow\mathbb{R}^{q},
\qquad
M_qv_h
=
\begin{bmatrix}
\ell_{1,h}(v_h)&\cdots&\ell_{q,h}(v_h)
\end{bmatrix}^{\top},
\]
where
\(\ell_{j,h}\in(\mathbb{R}^{m})^\ast\) are discrete continuous linear
functionals. Let
\[
R_q:\mathbb{R}^{q}\longrightarrow\mathbb{R}^{m}
\]
be a continuous reconstruction map.
The map $R_q$ is an auxiliary reconstruction used only in the error
analysis to quantify information lost by the measurement map. It is distinct
from the training-only decoder $D_\omega$ in Section~\ref{sec:stage2}, which
reconstructs the finite base-observation vector
$\bm s=\Lambda_m(v)$. The theorem therefore does not assume that the trained
decoder reconstructs the full continuum input. When the base observations
form a stable discrete representation of $v_h$, a reconstruction of $v_h$
may be composed with the decoded observation vector; otherwise these two
reconstruction notions should be kept separate.
To match the weighted output basis used in the algorithm, let
$W_y\succ0$ denote the output quadrature matrix and introduce preweighted
coordinates
$\widetilde{\bm g}_h=W_y^{1/2}\bm g_h$ and
$\widetilde Q_r=W_y^{1/2}Q_r$. Since the algorithm satisfies
$Q_r^{\top}W_yQ_r=I_r$, the transformed basis obeys
$\widetilde Q_r^{\top}\widetilde Q_r=I_r$. The theorem below is written in
these preweighted Euclidean coordinates; equivalently, every output norm may
be read as the weighted norm
$\|v\|_{W_y}=\|W_y^{1/2}v\|_2$.
For the output representation, let
\[
\overline{\bm g}_h\in\mathbb{R}^{n},
\qquad
Q_r\in\mathbb{R}^{n\times r},
\qquad
Q_r^{\top}Q_r=I_r,
\]
where $Q_r$ and $\overline{\bm g}_h$ now denote the corresponding
preweighted quantities.
The corresponding Topological DeepONet approximation is
\begin{equation}
\widehat{\mathcal{G}}_{h,\theta}(v_h)
=
\overline{\bm g}_h
+
Q_rb_\theta(M_qv_h),
\label{eq:discrete-topological-approximation}
\end{equation}
where
\[
b_\theta:\mathbb{R}^{q}\longrightarrow\mathbb{R}^{r}
\]
is the branch network.
\begin{figure}[t]
    \centering
    \begin{subfigure}[t]{0.48\textwidth}
        \centering
        \includegraphics[width=\linewidth]
        {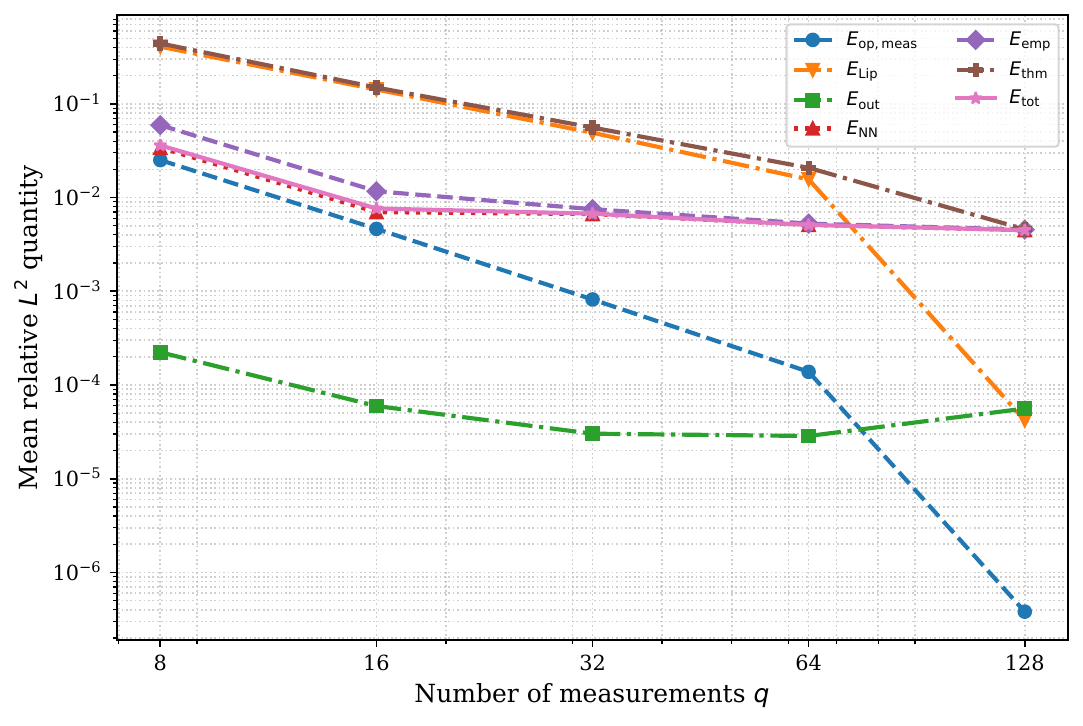}
        \caption{Fixed-model error budget.}
    \end{subfigure}
    \hfill
    \begin{subfigure}[t]{0.48\textwidth}
        \centering
        \includegraphics[width=\linewidth]
        {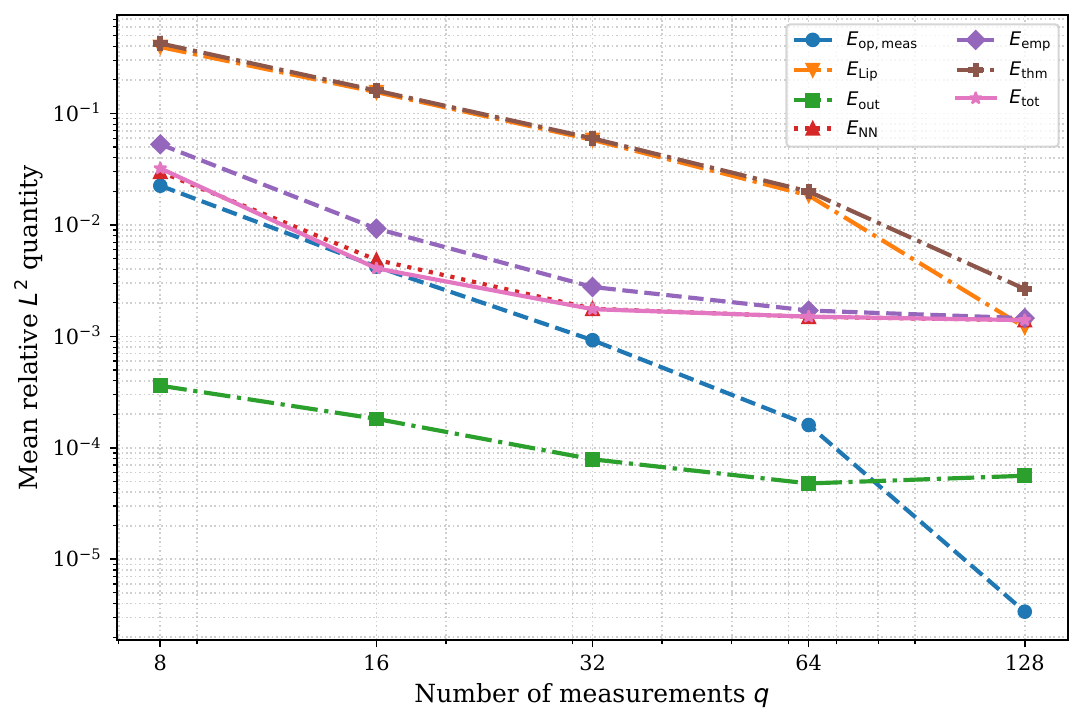}
        \caption{Adaptive-model error budget.}
    \end{subfigure}
    \vspace{0.5em}
    \begin{subfigure}[t]{0.48\textwidth}
        \centering
        \includegraphics[width=\linewidth]
        {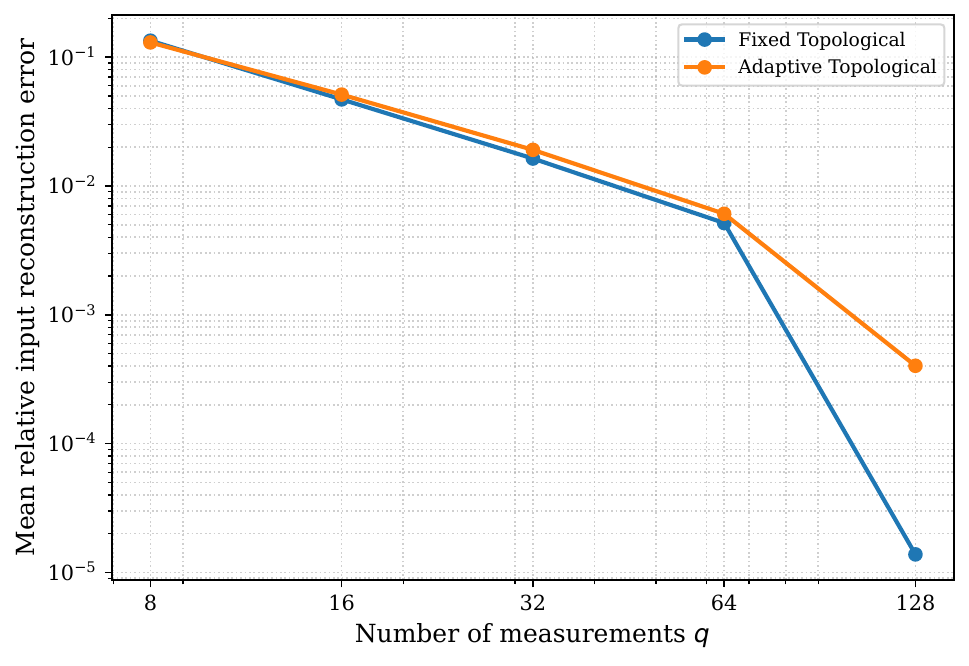}
        \caption{Input reconstruction convergence.}
    \end{subfigure}
    \hfill
    \begin{subfigure}[t]{0.48\textwidth}
        \centering
        \includegraphics[width=\linewidth]      {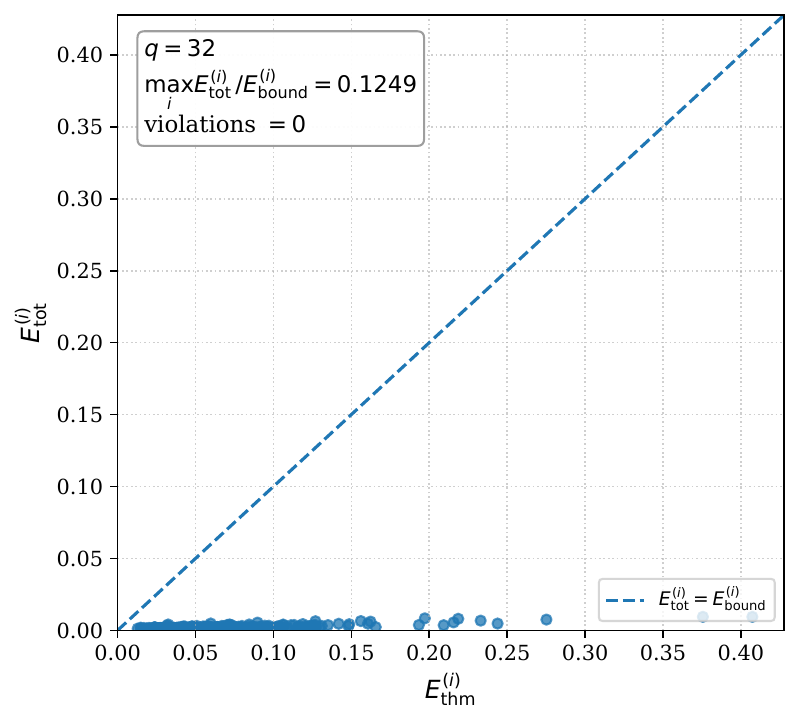}
        \caption{Samplewise bound verification at \(q=32\).}
    \end{subfigure}
    \caption{
    Numerical verification of the discrete Topological DeepONet error
    decomposition for the antiderivative operator. Panels (a) and (b) show
    the operator-level measurement error \(E_{\mathrm{op,meas}}\), its
    Lipschitz upper bound \(E_{\mathrm{Lip}}\), the output truncation error
    \(E_{\mathrm{out}}\), neural approximation error \(E_{\mathrm{NN}}\),
    empirical bound \(E_{\mathrm{emp}}\), theorem-consistent bound
    \(E_{\mathrm{thm}}\), and total error \(E_{\mathrm{tot}}\). Panel (c)
    shows that the input reconstruction error decreases as the measurement
    dimension \(q\) increases. Panel (d) verifies the samplewise inequality
    \(E_{\mathrm{tot}}^{(i)}\leq E_{\mathrm{thm}}^{(i)}\); at \(q=32\), the
    maximum ratio is \(0.1249\), with zero violations.
    }
    \label{fig:complete-theorem-verification}
\end{figure}
\begin{theorem}[Discrete Topological DeepONet error decomposition]
\label{thm:discrete-topological-error}
Let \(K_h\subset V_h\subset\mathbb{R}^{m}\) be compact and assume that
\[
R_q(M_qK_h)\subset V_h.
\]
Suppose that \(\mathcal{G}_h\) is Lipschitz continuous on
\[
K_h\cup R_q(M_qK_h)
\]
with constant \(L_h>0\), namely,
\begin{equation}
\|\mathcal{G}_h(v_h)-\mathcal{G}_h(w_h)\|_2
\leq
L_h\|v_h-w_h\|_2
\label{eq:discrete-operator-lipschitz}
\end{equation}
for all
\(v_h,w_h\in K_h\cup R_q(M_qK_h)\).
Define the measurement--reconstruction error by
\begin{equation}
\varepsilon_{\mathrm{rec}}(q)
:=
\sup_{v_h\in K_h}
\|v_h-R_q(M_qv_h)\|_2,
\label{eq:measurement-reconstruction-error}
\end{equation}
and define the output-basis truncation error by
\begin{equation}
\varepsilon_{\mathrm{out}}(r,q)
:=
\sup_{v_h\in K_h}
\left\|
(I_n-Q_rQ_r^{\top})
\left[
\mathcal{G}_h(R_q(M_qv_h))
-
\overline{\bm g}_h
\right]
\right\|_2.
\label{eq:output-truncation-error}
\end{equation}
Assume that the activation function in \(b_\theta\) is continuous and
nonpolynomial. Then, for every
\(\varepsilon_{\mathrm{NN}}>0\), there exists a feed-forward neural network
\(b_\theta:\mathbb{R}^{q}\to\mathbb{R}^{r}\) such that
\begin{equation}
\boxed{
\sup_{v_h\in K_h}
\left\|
\mathcal{G}_h(v_h)
-
\widehat{\mathcal{G}}_{h,\theta}(v_h)
\right\|_2
\leq
L_h\varepsilon_{\mathrm{rec}}(q)
+
\varepsilon_{\mathrm{out}}(r,q)
+
\varepsilon_{\mathrm{NN}}.
}
\label{eq:discrete-topological-error-bound}
\end{equation}
\end{theorem}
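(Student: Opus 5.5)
The plan is a three-term telescoping decomposition around the reconstructed input $w_h:=R_q(M_qv_h)$. For fixed $v_h\in K_h$, I will write
\[
\mathcal{G}_h(v_h)-\widehat{\mathcal{G}}_{h,\theta}(v_h)
=\bigl[\mathcal{G}_h(v_h)-\mathcal{G}_h(w_h)\bigr]
+(I_n-Q_rQ_r^{\top})\bigl[\mathcal{G}_h(w_h)-\overline{\bm g}_h\bigr]
+Q_r\Bigl(Q_r^{\top}\bigl[\mathcal{G}_h(w_h)-\overline{\bm g}_h\bigr]-b_\theta(M_qv_h)\Bigr).
\]
This is an exact identity: the middle and last brackets add to $\mathcal{G}_h(w_h)-\overline{\bm g}_h-Q_rb_\theta(M_qv_h)$. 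I will then apply the triangle inequality and bound each term separately.

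The first term is controlled by the Lipschitz hypothesis \eqref{eq:discrete-operator-lipschitz}. Both $v_h$ and $w_h$ lie in $K_h\cup R_q(M_qK_h)$, so this term is at most $L_h\|v_h-w_h\|_2\le L_h\varepsilon_{\mathrm{rec}}(q)$. The second term is bounded by $\varepsilon_{\mathrm{out}}(r,q)$ directly from the definition \eqref{eq:output-truncation-error}. For the third term, $Q_r^{\top}Q_r=I_r$ means $Q_r$ is an isometry, so $\|Q_rx\|_2=\|x\|_2$. It therefore suffices to make
\[
\sup_{v_h\in K_h}\bigl\|F(M_qv_h)-b_\theta(M_qv_h)\bigr\|_2\le\varepsilon_{\mathrm{NN}},
\qquad F(z):=Q_r^{\top}\bigl[\mathcal{G}_h(R_q z)-\overline{\bm g}_h\bigr].
\]

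The key observation is that the third term depends on $v_h$ only through $z=M_qv_h$. This is why the decomposition is centred at $w_h$ rather than at $v_h$: the target of the network must be a function of the measurements alone. The remaining steps are as follows.
\begin{enumerate}
\item The set $Z:=M_qK_h\subset\mathbb{R}^q$ is compact, since it is the image of the compact set $K_h$ under the linear, hence continuous, map $M_q$.
\item $F$ is continuous on $Z$. Indeed, $R_q$ is continuous, $R_q(Z)\subset K_h\cup R_q(M_qK_h)$ where $\mathcal{G}_h$ is Lipschitz and hence continuous, and $Q_r^{\top}$ is linear.
\item Extend $F$ continuously to all of $\mathbb{R}^q$ by the Tietze extension theorem, or simply restrict attention to $Z$.
\item Invoke the classical universal approximation theorem for continuous nonpolynomial activations (Leshno--Lin--Pinkus--Schocken), applied componentwise to the $r$ outputs with tolerance $\varepsilon_{\mathrm{NN}}/\sqrt r$. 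This yields a feed-forward network $b_\theta$ with $\sup_{z\in Z}\|F(z)-b_\theta(z)\|_2\le\varepsilon_{\mathrm{NN}}$.
\end{enumerate}
Taking the supremum over $v_h\in K_h$ in the triangle inequality then gives \eqref{eq:discrete-topological-error-bound}.

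I expect the main obstacle to be this last step. One must check that the approximation target is a well-defined continuous function of the $q$ measurement coordinates on a compact set, and that the Lipschitz domain hypothesis covers $R_q(M_qK_h)$. The remaining steps are routine triangle-inequality and isometry bookkeeping in the preweighted coordinates.
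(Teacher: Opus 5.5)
Your proposal is correct and follows essentially the same argument as the paper: the same three-term decomposition centred at $\widetilde v_h=R_q(M_qv_h)$, the same reduced coefficient map $c_{r,q}(z)=Q_r^{\top}\bigl[\mathcal{G}_h(R_q(z))-\overline{\bm g}_h\bigr]$ on the compact set $M_qK_h$, the universal approximation theorem, and the isometry of $Q_r$. Your additional details (the Tietze extension and the per-component tolerance $\varepsilon_{\mathrm{NN}}/\sqrt{r}$) make explicit steps that the paper leaves implicit.
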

\begin{proof}
Because \(K_h\) is compact and \(M_q\) is continuous, the measurement set
\[
Z_q:=M_qK_h\subset\mathbb{R}^{q}
\]
is compact. Define the reduced coefficient map
\[
c_{r,q}:Z_q\longrightarrow\mathbb{R}^{r}
\]
by
\begin{equation}
c_{r,q}(z)
:=
Q_r^{\top}
\left[
\mathcal{G}_h(R_q(z))
-
\overline{\bm g}_h
\right].
\label{eq:reduced-coefficient-map}
\end{equation}
The continuity of \(R_q\), \(\mathcal{G}_h\), and \(Q_r^{\top}\) implies
that \(c_{r,q}\) is continuous on \(Z_q\). Therefore, the finite-dimensional
universal approximation theorem implies that, for every
\(\varepsilon_{\mathrm{NN}}>0\), there exists a feed-forward neural network
\(b_\theta\) satisfying
\begin{equation}
\sup_{z\in Z_q}
\|c_{r,q}(z)-b_\theta(z)\|_2
\leq
\varepsilon_{\mathrm{NN}}.
\label{eq:branch-network-approximation}
\end{equation}
Fix \(v_h\in K_h\) and set
\[
\widetilde v_h:=R_q(M_qv_h).
\]
Adding and subtracting
\(\mathcal{G}_h(\widetilde v_h)\) and the orthogonal projection of the
centered reconstructed output gives
\begin{align}
&
\mathcal{G}_h(v_h)
-
\overline{\bm g}_h
-
Q_rb_\theta(M_qv_h)
\nonumber\\
&=
\underbrace{
\mathcal{G}_h(v_h)-\mathcal{G}_h(\widetilde v_h)
}_{\mathrm{(I)}}
\nonumber\\
&\quad+
\underbrace{
(I_n-Q_rQ_r^{\top})
\left[
\mathcal{G}_h(\widetilde v_h)-\overline{\bm g}_h
\right]
}_{\mathrm{(II)}}
\nonumber\\
&\quad+
\underbrace{
Q_r
\left[
c_{r,q}(M_qv_h)-b_\theta(M_qv_h)
\right]
}_{\mathrm{(III)}}.
\label{eq:three-term-error-decomposition}
\end{align}
By the Lipschitz continuity of \(\mathcal{G}_h\),
\begin{align}
\|\mathrm{(I)}\|_2
&\leq
L_h\|v_h-\widetilde v_h\|_2
\nonumber\\
&\leq
L_h\varepsilon_{\mathrm{rec}}(q).
\end{align}
By definition,
\[
\|\mathrm{(II)}\|_2
\leq
\varepsilon_{\mathrm{out}}(r,q).
\]
Since \(Q_r^{\top}Q_r=I_r\), the matrix \(Q_r\) is an isometry on
\(\mathbb{R}^{r}\), and hence
\begin{align}
\|\mathrm{(III)}\|_2
&=
\left\|
c_{r,q}(M_qv_h)-b_\theta(M_qv_h)
\right\|_2
\nonumber\\
&\leq
\varepsilon_{\mathrm{NN}}.
\end{align}
Applying the triangle inequality and taking the supremum over
\(v_h\in K_h\) proves
\eqref{eq:discrete-topological-error-bound}.
\end{proof}
\begin{remark}[Interpretation of the error bound]
\label{rem:error-interpretation}
Theorem~\ref{thm:discrete-topological-error} separates the total
approximation error into
\begin{equation}
\underbrace{
L_h\varepsilon_{\mathrm{rec}}(q)
}_{\text{measurement information loss}}
+
\underbrace{
\varepsilon_{\mathrm{out}}(r,q)
}_{\text{output-basis truncation}}
+
\underbrace{
\varepsilon_{\mathrm{NN}}
}_{\text{branch-network approximation}}.
\label{eq:error-bound-interpretation}
\end{equation}
Increasing the number of measurements \(q\) can reduce the first term,
increasing the output rank \(r\) can reduce the second term, and increasing
the approximation capacity of the branch network can reduce the third term.
The theorem does not imply that these terms decrease monotonically for an
arbitrary choice of measurements, basis, or training procedure.
\end{remark}
\begin{corollary}[Barron-rate refinement]
\label{cor:barron-rate}
Let \(\nu\) be a probability measure on \(K_h\), and let
\[
\mu_q:=(M_q)_{\#}\nu
\]
be its pushforward measure on
\[
Z_q=M_qK_h\subset B_{r_0}(0)\subset\mathbb{R}^{q}.
\]
Assume the hypotheses of
Theorem~\ref{thm:discrete-topological-error}. Suppose that every component
\(c_{r,q,j}\) of the reduced map \(c_{r,q}\) admits an extension
\[
g_j:\mathbb{R}^{q}\longrightarrow\mathbb{R}
\]
with finite Barron norm
\[
C_j:=\|g_j\|_{\mathcal B}.
\]
Define
\[
C_{r,q}
:=
\left(
\sum_{j=1}^{r}C_j^2
\right)^{1/2}.
\]
Then, for every \(N\in\mathbb{N}\), there exists a two-layer sigmoidal
network
\[
b_\theta:\mathbb{R}^{q}\longrightarrow\mathbb{R}^{r},
\]
with at most \(N\) hidden units for each output component, such that
\begin{equation}
\boxed{
\begin{aligned}
&
\left(
\mathbb{E}_{v_h\sim\nu}
\left\|
\mathcal{G}_h(v_h)
-
\overline{\bm g}_h
-
Q_rb_\theta(M_qv_h)
\right\|_2^2
\right)^{1/2}
\\
&\qquad\leq
L_h\varepsilon_{\mathrm{rec}}(q)
+
\varepsilon_{\mathrm{out}}(r,q)
+
\frac{2r_0C_{r,q}}{\sqrt{N}}.
\end{aligned}
}
\label{eq:barron-refined-error-bound}
\end{equation}
\end{corollary}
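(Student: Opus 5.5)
The plan is to reuse the three-term decomposition from the proof of Theorem~\ref{thm:discrete-topological-error} pointwise in \(v_h\), and to replace only the uniform universal-approximation step by Barron's \(L^2(\mu)\) approximation theorem applied componentwise. For \(v_h\in K_h\), set \(\widetilde v_h=R_q(M_qv_h)\) and \(z=M_qv_h\), and let \(c_{r,q}\) be the reduced coefficient map \eqref{eq:reduced-coefficient-map}. The error \(\mathcal{G}_h(v_h)-\overline{\bm g}_h-Q_rb_\theta(z)\) then splits into (I)+(II)+(III) exactly as in \eqref{eq:three-term-error-decomposition}.

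The first step is to take the \(L^2(\nu)\) norm of the error and apply Minkowski's inequality. This bounds the left side of \eqref{eq:barron-refined-error-bound} by the sum of the \(L^2(\nu)\) norms of (I), (II) and (III). Since \(\nu\) is a probability measure, an \(L^2(\nu)\) norm is at most the supremum over \(K_h\). The Lipschitz bound therefore gives \(\|\mathrm{(I)}\|_{L^2(\nu)}\le L_h\varepsilon_{\mathrm{rec}}(q)\), and the definition of \(\varepsilon_{\mathrm{out}}\) gives \(\|\mathrm{(II)}\|_{L^2(\nu)}\le\varepsilon_{\mathrm{out}}(r,q)\). For (III), the isometry \(Q_r^\top Q_r=I_r\) together with the change of variables to the pushforward measure gives
\[
\|\mathrm{(III)}\|_{L^2(\nu)}^2=\int_{Z_q}\|c_{r,q}(z)-b_\theta(z)\|_2^2\,\mathrm{d}\mu_q(z)=\sum_{j=1}^r\int_{Z_q}|c_{r,q,j}-b_{\theta,j}|^2\,\mathrm{d}\mu_q.
\]

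The second step invokes Barron's theorem. Let \(g_j\) be an extension of \(c_{r,q,j}\) with finite Barron norm \(C_j\), normalized as in Barron's theorem, i.e.\ \(C_j=\int|\omega|\,|\hat g_j(\omega)|\,\mathrm d\omega\) with the ball radius \(r_0\) entering separately. Let \(\mu_q\) be a probability measure supported in \(B_{r_0}(0)\). Barron's theorem then provides, for each \(N\), a single-hidden-layer sigmoidal network \(f_{j,N}\) with \(N\) units satisfying
\[
\int_{B_{r_0}}|g_j-g_j(0)-f_{j,N}|^2\,\mathrm{d}\mu_q\le\frac{(2r_0C_j)^2}{N}.
\]
The constant \(g_j(0)\) is absorbed as an output bias, giving \(b_{\theta,j}=g_j(0)+f_{j,N}\). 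Because \(g_j=c_{r,q,j}\) on \(Z_q\), summing over \(j\) yields \(\|\mathrm{(III)}\|_{L^2(\nu)}\le 2r_0C_{r,q}/\sqrt N\). Stacking the \(r\) scalar networks gives a two-layer network \(b_\theta:\mathbb{R}^q\to\mathbb{R}^r\) with at most \(N\) hidden units per output component, and combining the three bounds gives \eqref{eq:barron-refined-error-bound}.

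The main obstacle is matching the precise statement of Barron's theorem to the corollary. Three points need care: the normalization of \(\|\cdot\|_{\mathcal B}\), whether one must subtract \(g_j(0)\) or instead use a bias, and the requirement that the sigmoid be bounded and of the standard form for the \(2r_0C\) constant to hold. I would fix the convention \(\|g\|_{\mathcal B}=\int|\omega||\hat g(\omega)|\,\mathrm d\omega\) and cite Barron (1993, Theorem~1) directly. The measurability of \(c_{r,q}\) needed for the change of variables is immediate, since \(c_{r,q}\) is continuous on the compact set \(Z_q\).
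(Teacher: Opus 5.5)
Your proposal is correct and follows the paper's proof essentially line for line: the same decomposition (I)+(II)+(III), the pointwise bounds on (I) and (II) carried over to $L^2(\nu)$, componentwise Barron approximation with respect to the pushforward $\mu_q$ combined with the isometry $Q_r^\top Q_r=I_r$, and Minkowski's inequality to finish. Your explicit treatment of the constant $g_j(0)$ as an output bias is slightly more precise than the paper, which includes it tacitly when it calls each $g_{N,j}$ a ``sum of $N$ sigmoidal ridge functions.''
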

\begin{proof}
For \(v_h\in K_h\), use the decomposition
\eqref{eq:three-term-error-decomposition}. The first two terms satisfy the
pointwise estimates
\[
\|\mathrm{(I)}\|_2
\leq
L_h\varepsilon_{\mathrm{rec}}(q),
\qquad
\|\mathrm{(II)}\|_2
\leq
\varepsilon_{\mathrm{out}}(r,q),
\]
and therefore obey the same bounds in
\(L^2(\nu;\mathbb{R}^{n})\).
Applying Barron's approximation theorem componentwise gives functions
\(g_{N,j}\), each represented by a sum of \(N\) sigmoidal ridge functions,
such that
\begin{equation}
\mathbb{E}_{z\sim\mu_q}
\left[
\left|
g_j(z)-g_{N,j}(z)
\right|^2
\right]
\leq
\frac{4r_0^2C_j^2}{N}.
\end{equation}
Define
\[
b_\theta
:=
(g_{N,1},\ldots,g_{N,r}).
\]
Because \(Q_r^{\top}Q_r=I_r\),
\begin{align}
\mathbb{E}_{v_h\sim\nu}
\|\mathrm{(III)}\|_2^2
&=
\sum_{j=1}^{r}
\mathbb{E}_{z\sim\mu_q}
\left[
\left|
c_{r,q,j}(z)-g_{N,j}(z)
\right|^2
\right]
\nonumber\\
&\leq
\frac{4r_0^2}{N}
\sum_{j=1}^{r}C_j^2
=
\frac{4r_0^2C_{r,q}^2}{N}.
\end{align}
Hence,
\[
\|\mathrm{(III)}\|_{L^2(\nu)}
\leq
\frac{2r_0C_{r,q}}{\sqrt{N}}.
\]
The result follows from Minkowski's inequality in
\(L^2(\nu;\mathbb{R}^{n})\).
\end{proof}
\begin{remark}
The Barron refinement provides the network-width rate
\(N^{-1/2}\). The exponent does not explicitly deteriorate with the
measurement dimension \(q\); however, the constant \(C_{r,q}\) may depend
on \(q\). Therefore, the result should not be described as completely
dimension independent without an additional uniform bound on the relevant
Barron norms.
\end{remark}
\section{Computational experiments}
\label{sec:experiments}
In this section, \autoref{sec:antiderivative-verification} presents the
convergence analysis of the topological DeepONet and
\autoref{sec:antiderivative-benchmark} benchmarks the proposed architectures
on the antiderivative operator considered in \cite{lu2021deeponet}. The
performance of the proposed methods on the Darcy flow problem is examined in
\autoref{sec:darcy-main}, \autoref{sec:fixed-time-navier-stokes} presents the
benchmark results for the fixed-time Navier--Stokes equations in the
vorticity formulation, and \autoref{sec:time-evolving-navier-stokes} extends
the comparison to the time-evolving trajectory-prediction operator.
\subsection{Convergence study for topological DeepONets}
\label{sec:antiderivative-verification}
\paragraph{Numerical verification of the discrete error decomposition.}
Figure~\ref{fig:complete-theorem-verification} evaluates the error terms in
Theorem~\ref{thm:discrete-topological-error} for the antiderivative operator
using
\[
q\in\{8,16,32,64,128\}.
\]
For each test sample, we compute the operator-level measurement error
\(E_{\mathrm{op,meas}}\), its Lipschitz upper bound \(E_{\mathrm{Lip}}\),
the output-basis truncation error \(E_{\mathrm{out}}\), the neural
approximation error \(E_{\mathrm{NN}}\), the total error \(E_{\mathrm{tot}}\),
and the theorem-consistent bound
\[
E_{\mathrm{thm}}
=
E_{\mathrm{Lip}}
+
E_{\mathrm{out}}
+
E_{\mathrm{NN}}.
\]
Figures~\ref{fig:complete-theorem-verification}(a) and
\ref{fig:complete-theorem-verification}(b) show that the
measurement-induced error decreases rapidly as the number of functionals
increases. For both the fixed and adaptive models,
\(E_{\mathrm{op,meas}}\) decreases by several orders of magnitude between
\(q=8\) and \(q=128\). The input reconstruction error in
Fig.~\ref{fig:complete-theorem-verification}(c) exhibits the same trend,
confirming that increasing \(q\) reduces the information lost by the
measurement map. At large \(q\), the total error is no longer dominated by
measurement compression; instead, the neural approximation error becomes
the principal contribution, while the output-basis truncation error remains
comparatively small.
The adaptive model achieves a lower total error at the largest measurement
dimensions, reaching approximately \(10^{-3}\) at \(q=128\), compared with
approximately \(4\times10^{-3}\) for the fixed model. The
theorem-consistent bound remains above the observed total error for every
test sample. In particular, at \(q=32\), the maximum samplewise ratio is
\[
\max_i
\frac{E_{\mathrm{tot}}^{(i)}}
     {E_{\mathrm{thm}}^{(i)}}
=
0.1249,
\]
with no bound violations, as shown in
Fig.~\ref{fig:complete-theorem-verification}(d). The gap between the total
error and the theorem bound reflects the conservativeness of replacing the
actual operator perturbation by the global Lipschitz estimate.
\subsection{Antiderivative operator benchmark}
\label{sec:antiderivative-benchmark}
We next compare the fixed- and learned-measurement Topological
DeepONets against Vanilla DeepONet and the enhanced Two-Step DeepONet on
the antiderivative operator for the aligned dataset of
\cite{lu2021deeponet} under matched parameter budgets.
Table~\ref{tab:deeponet_comparison} reports the error statistics over the
\(400\) test functions for the cosine-basis variants, and
Figure~\ref{fig:combined_error_comparison} compares the mean and global
relative \(L^2\) errors for the cosine- and Legendre-basis variants.
The learned-measurement (adaptive) model attains the lowest error in
nearly every statistic and outperforms the Two-Step baseline on \(72\%\)
of the test functions.
\begin{table*}[t]
    \centering
    \caption{
    Comparison of Vanilla DeepONet, Two-Step DeepONet, Fixed-Measurement
    DeepONet, and the proposed Learned-Measurement Topological DeepONet.
    Lower values are better for all error metrics.
    The win rates for the fixed- and learned-measurement models are computed
    relative to the Two-Step DeepONet on a per-function basis.
    }
    \label{tab:deeponet_comparison}
    \resizebox{\textwidth}{!}{
    \begin{tabular}{lcccc}
        \toprule
        \textbf{Metric}
        & \textbf{Vanilla}
        & \textbf{Two-Step}
        & \textbf{Fixed Measurement}
        & \textbf{Learned Measurement} \\
        \midrule
        Mean relative $L^2$
        & $3.6995\times10^{-2}$
        & $3.1643\times10^{-2}$
        & $3.7294\times10^{-2}$
        & $\mathbf{2.1166\times10^{-2}}$ \\
        Median relative $L^2$
        & $2.7768\times10^{-2}$
        & $2.3653\times10^{-2}$
        & $2.6736\times10^{-2}$
        & $\mathbf{1.4408\times10^{-2}}$ \\
        90th-percentile relative $L^2$
        & $7.1132\times10^{-2}$
        & $6.2938\times10^{-2}$
        & $7.0133\times10^{-2}$
        & $\mathbf{3.9863\times10^{-2}}$ \\
        95th-percentile relative $L^2$
        & $9.3179\times10^{-2}$
        & $8.0246\times10^{-2}$
        & $9.5440\times10^{-2}$
        & $\mathbf{5.6343\times10^{-2}}$ \\
        99th-percentile relative $L^2$
        & $1.5537\times10^{-1}$
        & $1.2647\times10^{-1}$
        & $2.0463\times10^{-1}$
        & $\mathbf{1.0659\times10^{-1}}$ \\
        Maximum relative $L^2$
        & $4.1882\times10^{-1}$
        & $\mathbf{3.7490\times10^{-1}}$
        & $6.3234\times10^{-1}$
        & $4.0287\times10^{-1}$ \\
        Global relative $L^2$
        & $2.8898\times10^{-2}$
        & $4.8093\times10^{-2}$
        & $3.7742\times10^{-2}$
        & $\mathbf{2.5516\times10^{-2}}$ \\
        RMSE
        & $1.5351\times10^{-2}$
        & $2.5548\times10^{-2}$
        & $2.0049\times10^{-2}$
        & $\mathbf{1.3555\times10^{-2}}$ \\
        Maximum absolute error
        & $2.8847\times10^{-1}$
        & $3.3463\times10^{-1}$
        & $3.3942\times10^{-1}$
        & $\mathbf{2.6523\times10^{-1}}$ \\
        \midrule
        Stage-II / training time (s)
        & $40.23$
        & $12.93$
        & $12.75$
        & $21.70$ \\
        Training parameters
        & $6{,}220$
        & $6{,}254$
        & $6{,}606$
        & $8{,}624$ \\
        Inference parameters
        & $6{,}220$
        & $6{,}254$
        & $6{,}606^{\dagger}$
        & $6{,}224$ \\
        Per-function win rate vs.\ Two-Step
        & --
        & N/A
        & $38.0\%$
        & $\mathbf{72.0\%}$ \\
        \bottomrule
    \end{tabular}
    }
    \vspace{1mm}
    \begin{minipage}{0.98\textwidth}
        \footnotesize
        \textbf{Notes:}
        Bold values indicate the best result in each error row.
        The proposed model uses additional trainable parameters during training
        because of its learned measurement layer and auxiliary reconstruction
        decoder, but the decoder is removed at inference.
        Consequently, its inference parameter count is essentially matched to
        that of the Vanilla and Two-Step baselines.
        $^{\dagger}$ The reported value includes the fixed
        (non-trainable) measurement matrix; excluding it, the inference
        network size matches the Two-Step baseline.
    \end{minipage}
\end{table*}
\begin{figure}[t]
\centering
\includegraphics[width=\textwidth]{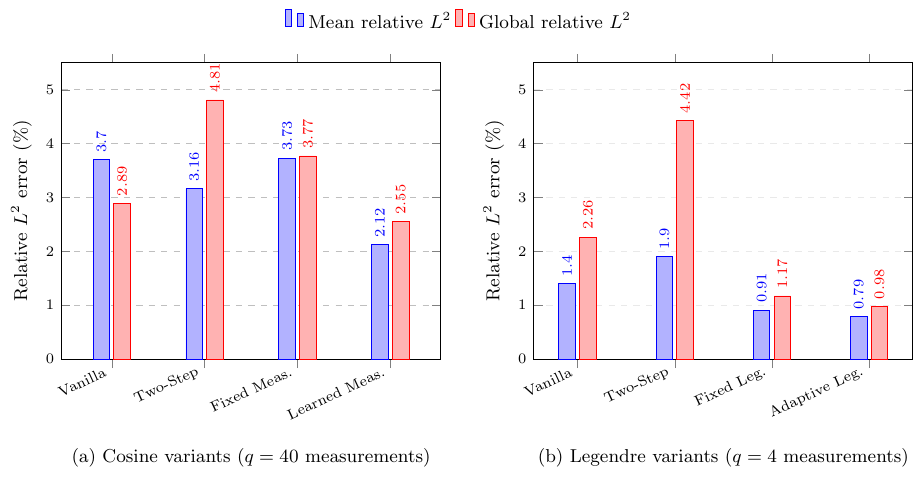}
\caption{Bar-chart comparison of DeepONet variants on the antiderivative
operator problem.
(a) Cosine-basis variants with $q=40$ functional measurements.
(b) Legendre-basis variants with $q=4$ functional measurements.
Lower values are better.}
\label{fig:combined_error_comparison}
\end{figure}
\subsection{Controlled operator with a known functional representation}
\label{sec:controlled-functional-operator}
To isolate the role of the input representation from the complexity of a
PDE solution operator, we consider a controlled operator whose dependence on
the input is known exactly. Let \(a\in L^2(0,1)\) and define
\begin{equation}
\label{eq:controlled-operator}
\mathcal{G}(a)(y)
=
\sum_{r=1}^{3}
\ell_r(a)\,\psi_r(y),
\qquad
\ell_r(a)
=
\int_0^1 a(x)\phi_r(x)\,dx,
\end{equation}
where
\begin{equation}
\phi_1(x)=1,
\qquad
\phi_2(x)=\sqrt{2}\cos(2\pi x),
\qquad
\phi_3(x)=\sqrt{2}\sin(4\pi x),
\end{equation}
and \(\{\psi_r\}_{r=1}^{3}\) is a fixed orthonormal output basis. The input
functions are sampled from a higher-dimensional random Fourier expansion,
\begin{equation}
a(x)
=
\sum_{j=1}^{J}\xi_j\varphi_j(x),
\qquad
\xi_j\sim\mathcal{N}(0,j^{-2}),
\end{equation}
so that only a small subset of the input directions influences the output.
This example is chosen because the exact task-relevant input subspace,
\[
\mathcal{S}_{\mathrm{true}}
=
\operatorname{span}\{\phi_1,\phi_2,\phi_3\},
\]
is known. It therefore permits a direct comparison between
operator-relevant functional learning and generic dimensionality-reduction
strategies. In particular, PCA preserves high-variance input directions,
random projection preserves no task-specific structure, and point sensors
provide only local information. By contrast, the adaptive topological models
learn three global continuous functionals from a prescribed dictionary. The
controlled setting also allows the learned measurement subspace,
conditioning, and drift from the structured initialization to be evaluated
directly.
We compare the fixed functional representation with adaptive variants that
introduce measurement learning, input reconstruction, task-relevant
decoding, orthogonality regularization, drift regularization, and either
random or warm initialization. Additional baselines include random and PCA
projections, fixed and optimized sensors, an unconstrained learned
bottleneck, and a full-field MLP. All compressed models use the same
three-dimensional latent representation, output basis, branch architecture,
training data, validation criterion, and test set. Results are averaged over
multiple random seeds.
\begin{table*}[t]
\centering
\caption{
Controlled operator benchmark averaged over multiple random seeds.
The first column reports the mean samplewise relative \(L^2\) error and its
standard deviation across seeds. The global relative \(L^2\) error and the
\(95\)th percentile samplewise error are also reported. The best value in
each column is shown in bold.
}
\label{tab:controlled-operator}
\begin{tabular}{lccc}
\toprule
Model
& Mean rel. \(L^2\)
& Global rel. \(L^2\)
& \(P_{95}\) \\
\midrule
Adaptive Task Decoder
& \(\mathbf{8.8264\times10^{-3}\pm6.28\times10^{-4}}\)
& \(8.5210\times10^{-3}\)
& \(\mathbf{1.6269\times10^{-2}}\) \\
Adaptive \(+\) Orthogonality
& \(9.0549\times10^{-3}\pm6.53\times10^{-4}\)
& \(8.2720\times10^{-3}\)
& \(1.8581\times10^{-2}\) \\
Adaptive Learned Only
& \(9.1422\times10^{-3}\pm8.11\times10^{-4}\)
& \(\mathbf{7.6412\times10^{-3}}\)
& \(2.0230\times10^{-2}\) \\
Adaptive Full Scratch
& \(9.6359\times10^{-3}\pm3.22\times10^{-4}\)
& \(1.0145\times10^{-2}\)
& \(1.9448\times10^{-2}\) \\
Adaptive Full Warm
& \(1.0449\times10^{-2}\pm5.09\times10^{-4}\)
& \(9.9917\times10^{-3}\)
& \(1.9340\times10^{-2}\) \\
Adaptive \(+\) Input Decoder
& \(1.1006\times10^{-2}\pm1.96\times10^{-4}\)
& \(1.0896\times10^{-2}\)
& \(2.1711\times10^{-2}\) \\
Adaptive \(+\) Drift
& \(1.1380\times10^{-2}\pm2.65\times10^{-3}\)
& \(1.0924\times10^{-2}\)
& \(2.0177\times10^{-2}\) \\
Learned Bottleneck
& \(1.1557\times10^{-2}\pm1.01\times10^{-3}\)
& \(1.0581\times10^{-2}\)
& \(2.3018\times10^{-2}\) \\
Full-field MLP
& \(1.5294\times10^{-2}\pm1.35\times10^{-3}\)
& \(1.3940\times10^{-2}\)
& \(3.2170\times10^{-2}\) \\
PCA Projection
& \(2.1695\times10^{-1}\pm1.01\times10^{-2}\)
& \(1.7974\times10^{-1}\)
& \(6.3162\times10^{-1}\) \\
Optimized Sensors
& \(4.9937\times10^{-1}\pm3.43\times10^{-2}\)
& \(3.5456\times10^{-1}\)
& \(1.3392\) \\
Fixed Topological
& \(5.6281\times10^{-1}\pm2.72\times10^{-2}\)
& \(5.0820\times10^{-1}\)
& \(1.0309\) \\
Fixed Sensors
& \(5.7582\times10^{-1}\pm2.95\times10^{-2}\)
& \(4.3769\times10^{-1}\)
& \(1.3365\) \\
Random Projection
& \(5.7822\times10^{-1}\pm1.46\times10^{-1}\)
& \(4.4839\times10^{-1}\)
& \(1.3155\) \\
\bottomrule
\end{tabular}
\end{table*}
As shown in \autoref{tab:controlled-operator}, all adaptive functional
models substantially outperform the fixed functional representation,
sensor-based representations, PCA, and random projection. The Adaptive Task
Decoder attains the lowest mean relative error,
\(8.83\times10^{-3}\), and the lowest \(P_{95}\) error,
\(1.63\times10^{-2}\). The unregularized Adaptive Learned Only model gives
the lowest global relative error, \(7.64\times10^{-3}\), while the
orthogonally regularized model provides a comparably low mean error of
\(9.05\times10^{-3}\). These small differences indicate that all three
variants reliably identify the task-relevant functional subspace, although
the auxiliary task decoder improves the distribution of samplewise errors.
The task-relevant decoder is more effective than reconstruction of the
complete input. The latter attains a mean error of
\(1.10\times10^{-2}\), because full-input reconstruction encourages the
three-dimensional latent representation to preserve variability that does
not influence \(\mathcal{G}\). The learned bottleneck is also competitive,
with a mean error of \(1.16\times10^{-2}\), confirming that unconstrained
task-dependent compression can discover the relevant low-dimensional
structure. Nevertheless, the Adaptive Task Decoder reduces the mean error by
approximately \(23.6\%\) relative to the learned bottleneck while retaining
an explicit representation in terms of continuous input functionals.
The poor performance of PCA, random projections, and point sensors further
demonstrates that low dimensionality alone is insufficient. PCA preserves
input variance rather than operator relevance, while three point values or
three random projections generally cannot recover the global integral
quantities defining the operator. In contrast, adaptive functional learning
reduces the mean error from \(56.3\%\) for the Fixed Topological model to
below \(1\%\), showing that the principal gain arises from identifying the
operator-relevant measurement subspace rather than merely increasing network
capacity.
\autoref{fig:controlled-functional-diagnostics} provides a complementary
mechanistic interpretation. Because an individual functional basis is
identifiable only up to an invertible rotation within its span, the exact
and learned bases are aligned before visualization. The associated
projector error and Gram-matrix condition number quantify recovery of the
true functional subspace and the linear independence of the learned
measurements.
\begin{figure}[t]
\centering
\includegraphics[width=\textwidth]
{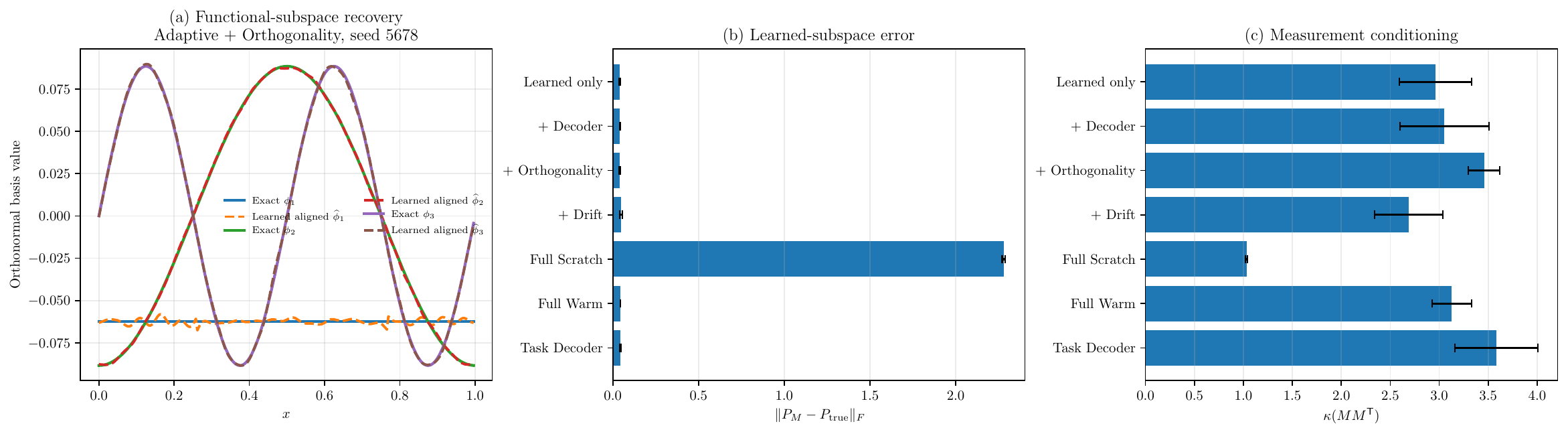}
\caption{
Recovery of the task-relevant functional subspace for the controlled
operator. (a) Exact functional basis and the aligned basis learned by the
adaptive functional model for a representative seed. Since the individual
measurement functions are identifiable only up to a rotation within their
span, the bases are aligned before visualization. (b) Frobenius error
between the learned and exact subspace projectors. (c) Condition number of
the learned measurement Gram matrix. Error bars represent variability across
random seeds.
}
\label{fig:controlled-functional-diagnostics}
\end{figure}
\subsection{Heterogeneous Darcy-flow operator}
\label{sec:darcy-main}
We consider the Darcy-flow benchmark commonly used in the neural-operator
literature and in the Fourier Neural Operator study
\cite{li2021fourier}. The steady pressure field satisfies
\begin{equation}
    -\nabla\cdot
    \left(
        a(\bm{x})\nabla u(\bm{x})
    \right)
    =
    f(\bm{x}),
    \qquad
    \bm{x}\in\Omega,
    \label{eq:darcy-main}
\end{equation}
where \(a(\bm{x})\) is the heterogeneous permeability field and
\(u(\bm{x})\) is the corresponding pressure solution. The learning objective
is to approximate the nonlinear parameter-to-solution operator
\begin{equation}
    \mathcal{G}:
    \mathcal{A}\subset \mathcal{V}=L^\infty(\Omega)
    \longrightarrow
    \mathcal{U}=H_0^1(\Omega),
    \qquad
    a\longmapsto u,
    \label{eq:darcy-operator-map}
\end{equation}
where
\begin{equation}
    \mathcal{A}
    =
    \left\{
    a\in L^\infty(\Omega):
    0<a_{\min}\le a(\bm{x})\le a_{\max}<\infty
    \text{ a.e. in }\Omega
    \right\}.
    \label{eq:darcy-spaces}
\end{equation}
The admissible coefficient set is equipped with the topology inherited from
$L^\infty(\Omega)$. Uniform ellipticity gives well-posedness and the natural
stability of the coefficient-to-solution map in this topology. Although the
numerical coefficient fields also belong to $L^2(\Omega)$ on the bounded
domain, we do not identify $L^2(\Omega)$ as the ambient operator topology and
do not claim continuity of the unrestricted coefficient-to-solution map from
plain $L^2(\Omega)$ into $H_0^1(\Omega)$.
The original fields are generated at a resolution of \(421\times421\).
For the present study, the data are represented on an \(85\times85\) grid,
and we use \(800\), \(100\), and \(100\) realizations for training,
validation, and testing, respectively.
For the fixed Topological DeepONet, the permeability field is represented
through \(q\) continuous linear functionals
\begin{equation}
    \ell_j(a)
    =
    \int_\Omega
        a(\bm{x})\phi_j(\bm{x})
    \,\mathrm{d}\bm{x},
    \qquad
    j=1,\ldots,q,
    \label{eq:darcy-functional-measurement}
\end{equation}
where \(\phi_j\in L^1(\Omega)\). For the polynomial and trigonometric
dictionaries used here, the atoms are bounded on the bounded domain and
therefore belong to $L^1(\Omega)$. Since the Darcy input space is
$\mathcal V=L^\infty(\Omega)$,
\begin{equation}
    |\ell_j(a)|
    \leq
    \|a\|_{L^\infty(\Omega)}\,\|\phi_j\|_{L^1(\Omega)},
    \label{eq:darcy-functional-continuity}
\end{equation}
so each \(\ell_j\in\mathcal V'=(L^\infty(\Omega))'\). The corresponding
functional-coordinate map is
\begin{equation}
    \mathcal{M}_q(a)
    =
    \left[
        \ell_1(a),\ldots,\ell_q(a)
    \right]^{\mathsf T}
    \in\mathbb{R}^q.
    \label{eq:darcy-functional-map}
\end{equation}
The implementation supports Legendre, cosine, Chebyshev, and Lagrange
functional dictionaries. In the default setting, the measurement functions
are constructed from a total-degree tensor-product Legendre dictionary,
\begin{equation}
    \phi_{ij}(x,y)
    =
    \widehat P_i(x)\widehat P_j(y),
    \qquad
    i+j\leq p,
    \label{eq:darcy-legendre-dictionary}
\end{equation}
where \(\widehat P_i\) denotes a normalized Legendre polynomial. The
measurements are evaluated numerically using quadrature on the discrete grid.
For the adaptive Topological DeepONet, the learned coordinates are linear
combinations of an active set of dictionary measurements:
\begin{equation}
    \bm{z}_{\mathrm{ad}}(a)
    =
    A_\theta\bm{\alpha}(a),
    \label{eq:darcy-adaptive-coordinates}
\end{equation}
where \(\bm{\alpha}(a)\) contains the active functional coordinates and
\(A_\theta\) is a trainable linear map. Consequently, each learned coordinate
remains a continuous linear functional belonging to the span of the
prescribed dictionary.
For all Two-Step models, the pressure snapshots are projected onto a common
rank-\(r\) basis obtained from the singular value decomposition of the
training outputs. The branch network predicts the corresponding reduced
coefficients, which are mapped back to the full pressure field using the
fixed output basis. The fixed model can therefore be summarized as
\begin{equation}
    a
    \xrightarrow{\;\mathcal{M}_q\;}
    \mathbb{R}^q
    \xrightarrow{\;\mathcal{B}_\theta\;}
    \mathbb{R}^r
    \xrightarrow{\;\text{fixed output basis}\;}
    \widehat u.
    \label{eq:darcy-fixed-pipeline}
\end{equation}
Further details on the data preprocessing, dictionary construction,
weighted orthonormalization, measurement selection, adaptive-measurement
regularization, reduced output representation, and
evaluation metrics are provided in
Appendix~\ref{app:darcy-sensor-interpretation}.
The compared architectures and their input
dimensions are illustrated in
Fig.~\ref{fig:dimension_comparison_twostep_sensor_topo}.
We compare the following models:
\begin{enumerate}
    \item full-field Two-Step, using all \(7225\) permeability values;
    \item Sensor Two-Step, using \(q=32\) point evaluations;
    \item Fixed Topological DeepONet, using \(q=32\) prescribed
    functional coordinates;
    \item Adaptive Topological DeepONet, using \(q=32\) learned
    functional coordinates;
    \item Vanilla DeepONet, with jointly trained branch and trunk
    networks.
\end{enumerate}
The most controlled comparison is between Sensor Two-Step and Fixed
Topological DeepONet. Both use the same input dimension \(q=32\), branch
architecture, reduced output basis, and \(144{,}915\) inference
parameters. They differ only in their input representation:
\begin{equation}
\underbrace{
\left\{
a(\bm{x}_j)
\right\}_{j=1}^{32}
}_{\text{local point evaluations}}
\qquad\text{versus}\qquad
\underbrace{
\left\{
\ell_j(a)
\right\}_{j=1}^{32}
}_{\text{global functional coordinates}},
\label{eq:darcy-input-comparison}
\end{equation}
with the functional coordinates \(\ell_j(a)\) defined
in~\eqref{eq:darcy-functional-measurement}.
The complete quantitative results are reported in
Table~\ref{tab:darcy-results}. The corresponding comparisons of global
relative \(L^2\) error, training time, and inference parameter count are
shown in Fig.~\ref{fig:darcy-metrics}.
\begin{figure}[t]
\centering
\includegraphics[width=\textwidth]{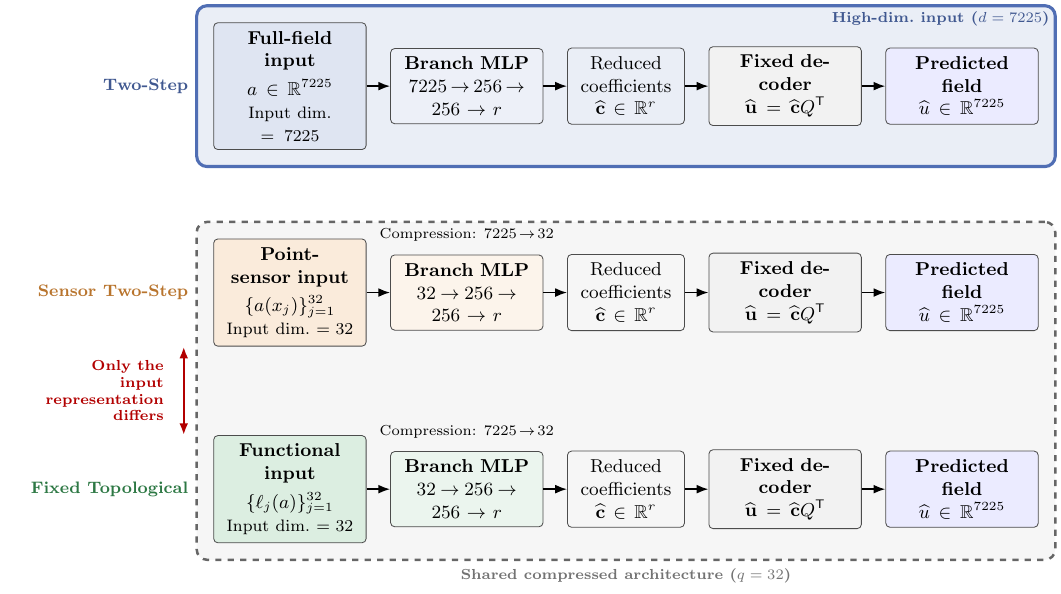}
\caption{Three operator-learning architectures compared. Two-Step takes the
full input field ($7225$ values) as branch input (solid box). Sensor
Two-Step and Fixed Topological DeepONet both compress to $32$ dimensions
and share an identical downstream architecture (dashed box) --- differing
only in how the $32$ inputs are formed: point values vs.\ global functional
measurements.}
\label{fig:dimension_comparison_twostep_sensor_topo}
\end{figure}
Fixed Topological DeepONet achieves a global relative \(L^2\) error of
\(5.88\%\), compared with \(10.39\%\) for Sensor Two-Step. This
corresponds to a relative error reduction of
\begin{equation}
\frac{0.1038841-0.0587551}{0.1038841}\times100
\approx 43.4\%.
\end{equation}
The full-field Two-Step, Adaptive Topological, and Vanilla DeepONet
models achieve global relative errors of \(8.31\%\), \(7.27\%\), and
\(9.21\%\), respectively.
The reduced models compress the branch input from \(7225\) values to
\(32\) features, corresponding to
$\approx225.8$
times input compression. The results therefore indicate that, for this
Darcy dataset, global functional coordinates provide a substantially
more informative reduced representation than sparse point evaluations
at the same input dimension and parameter count. The interpretation and
limitations of this comparison are discussed further in
Appendix~\ref{app:darcy-sensor-interpretation}.
\begin{table*}[t]
\centering
\caption{
Darcy operator-learning test performance and computational cost.
The highlighted columns form the controlled comparison:
Sensor Two-Step and Fixed Topological DeepONet use the same
input dimension, branch architecture, output decoder, and number
of inference parameters.
}
\label{tab:darcy-results}
\setlength{\tabcolsep}{4.2pt}
\renewcommand{\arraystretch}{1.16}
\resizebox{\textwidth}{!}{%
\begin{tabular}{
    l
    r
    |>{\columncolor{blue!6}}r
    >{\columncolor{blue!6}}r|
    r
    r
}
\toprule
\textbf{Metric}
&
\textbf{Two-Step}
&
\cellcolor{blue!15}\textbf{Sensor Two-Step}
&
\cellcolor{blue!15}\textbf{Fixed Topological}
&
\textbf{Adaptive Topological}
&
\textbf{Vanilla DeepONet}
\\
\midrule
Mean relative $L^2$ error
&
$8.197754\times10^{-2}$
&
$1.017463\times10^{-1}$
&
$\mathbf{5.739814\times10^{-2}}$
&
$7.064593\times10^{-2}$
&
$9.002882\times10^{-2}$
\\
Global relative $L^2$ error
&
$8.312111\times10^{-2}$
&
$1.038841\times10^{-1}$
&
$\mathbf{5.875507\times10^{-2}}$
&
$7.266852\times10^{-2}$
&
$9.208734\times10^{-2}$
\\
RMSE
&
$5.670150\times10^{-4}$
&
$7.086511\times10^{-4}$
&
$\mathbf{4.008007\times10^{-4}}$
&
$4.957121\times10^{-4}$
&
$6.281782\times10^{-4}$
\\
MAE
&
$4.094164\times10^{-4}$
&
$4.926076\times10^{-4}$
&
$\mathbf{2.849369\times10^{-4}}$
&
$3.518841\times10^{-4}$
&
$4.798732\times10^{-4}$
\\
Best validation global relative $L^2$
&
$8.872261\times10^{-2}$
&
$1.106748\times10^{-1}$
&
$\mathbf{6.104581\times10^{-2}}$
&
$7.354654\times10^{-2}$
&
$9.573192\times10^{-2}$
\\
Best epoch
&
$350$
&
$50$
&
$150$
&
$100$
&
$1250$
\\
Training time (s)
&
$41.348$
&
$\mathbf{23.777}$
&
$28.095$
&
$41.206$
&
$92.113$
\\
Peak GPU memory (MB)
&
$75.248$
&
$41.759$
&
$\mathbf{35.288}$
&
$36.658$
&
$90.384$
\\
Training parameters
&
$1{,}986{,}323$
&
$144{,}915$
&
$144{,}915$
&
$202{,}842$
&
$1{,}956{,}391$
\\
Inference parameters
&
$1{,}986{,}323$
&
$144{,}915$
&
$144{,}915$
&
$152{,}307$
&
$1{,}956{,}391$
\\
Measurement parameters
&
$0$
&
$0$
&
$0$
&
$7{,}392$
&
$0$
\\
Decoder parameters
&
$0$
&
$0$
&
$0$
&
$50{,}535$
&
$0$
\\
Input dimension
&
$7{,}225$
&
$32$
&
$32$
&
$32$
&
$7{,}225$
\\
Compression ratio
&
$1.000$
&
$225.781$
&
$225.781$
&
$225.781$
&
$1.000$
\\
\bottomrule
\end{tabular}%
}
\end{table*}
\begin{figure*}[t]
\centering
\includegraphics[width=\textwidth]{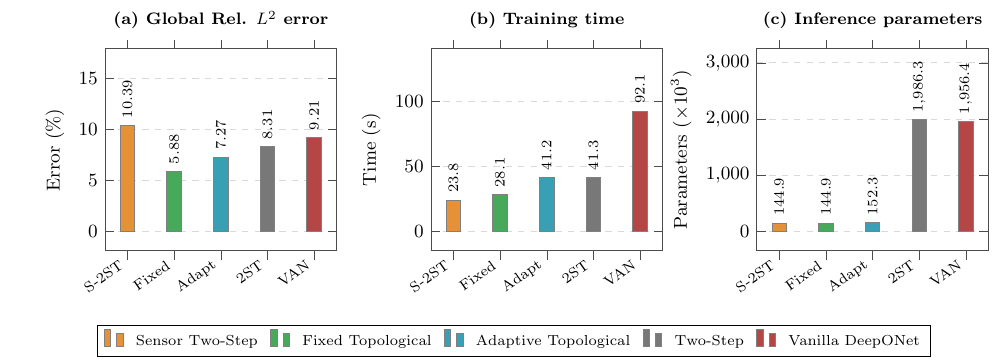}
\caption{
Comparison of DeepONet variants for the Darcy problem.
Abbreviations: S-2ST = Sensor Two-Step, Fixed = Fixed Topological DeepONet,
Adapt = Adaptive Topological DeepONet, 2ST = full Two-Step DeepONet,
and VAN = Vanilla DeepONet.
Panel (a) reports the global relative $L^2$ error,
panel (b) the measured training time, and panel (c) the inference parameter count.
}
\label{fig:darcy-metrics}
\end{figure*}
\subsection{Heterogeneous-discretization Darcy benchmark}
\label{sec:darcy-heterogeneous}
We next evaluate whether the learned input representation transfers across
spatial discretizations. The Darcy coefficient fields are observed on a
mixture of \(33\times33\), \(49\times49\), \(65\times65\), and
\(85\times85\) grids during training. At test time, the models are evaluated
on the reference \(85\times85\) grid and on previously unseen
\(57\times57\), \(73\times73\), and \(97\times97\) grids. Because the
available dataset is stored on a common fine grid, the heterogeneous
observations are generated by sampling each realization at the prescribed
native resolution.
The Fixed and Adaptive Topological DeepONets evaluate the same continuous
linear functionals on every native grid using grid-dependent quadrature.
Hence, the branch input dimension remains unchanged across discretizations.
We compare these models with an Interpolated Two-Step baseline, which first
maps every native observation to the common \(85\times85\) grid, and a
Sensor Two-Step baseline using a fixed set of physical sensor locations.
\begin{table*}[t]
\centering
\caption{
Mean relative \(L^2\) errors for the heterogeneous-discretization Darcy
benchmark. The models are tested on the reference grid, unseen
discretizations, noisy observations, and randomly missing observations.
The best result in each row is shown in bold.
}
\label{tab:darcy-heterogeneous-results}
\begin{tabular}{lcccc}
\toprule
Test condition
& Interpolated Two-Step
& Sensor Two-Step
& Fixed
& Adaptive \\
\midrule
\(85\times85\)
& \(6.630\times10^{-2}\)
& \(1.123\times10^{-1}\)
& \(\mathbf{5.565\times10^{-2}}\)
& \(5.629\times10^{-2}\) \\
Unseen \(57\times57\)
& \(6.643\times10^{-2}\)
& \(1.119\times10^{-1}\)
& \(\mathbf{5.536\times10^{-2}}\)
& \(5.546\times10^{-2}\) \\
Unseen \(73\times73\)
& \(6.631\times10^{-2}\)
& \(1.123\times10^{-1}\)
& \(\mathbf{5.537\times10^{-2}}\)
& \(5.585\times10^{-2}\) \\
Unseen \(97\times97\)
& \(6.631\times10^{-2}\)
& \(1.123\times10^{-1}\)
& \(\mathbf{5.583\times10^{-2}}\)
& \(5.644\times10^{-2}\) \\
\(57\times57\), \(1\%\) noise
& \(6.642\times10^{-2}\)
& \(1.119\times10^{-1}\)
& \(\mathbf{5.536\times10^{-2}}\)
& \(5.548\times10^{-2}\) \\
\(57\times57\), \(5\%\) noise
& \(6.646\times10^{-2}\)
& \(1.118\times10^{-1}\)
& \(\mathbf{5.539\times10^{-2}}\)
& \(5.553\times10^{-2}\) \\
\(57\times57\), \(10\%\) missing
& \(6.664\times10^{-2}\)
& \(1.136\times10^{-1}\)
& \(\mathbf{5.753\times10^{-2}}\)
& \(5.866\times10^{-2}\) \\
\(57\times57\), \(30\%\) missing
& \(1.163\times10^{-1}\)
& \(1.727\times10^{-1}\)
& \(\mathbf{6.101\times10^{-2}}\)
& \(6.498\times10^{-2}\) \\
\bottomrule
\end{tabular}
\end{table*}
As shown in Table~\ref{tab:darcy-heterogeneous-results}, both functional
models exhibit nearly discretization-independent accuracy. The Fixed model
achieves mean relative errors of \(5.56\%\), \(5.54\%\), \(5.54\%\), and
\(5.58\%\) on the \(85\times85\), unseen \(57\times57\), unseen
\(73\times73\), and unseen \(97\times97\) grids, respectively. The
Adaptive model shows a similarly small variation, with errors between
\(5.55\%\) and \(5.64\%\). In contrast, the Interpolated Two-Step and Sensor
Two-Step baselines remain less accurate across all test resolutions.
The functional representations are also insensitive to moderate
observation noise. On the unseen \(57\times57\) grid, increasing the noise
level from \(0\%\) to \(5\%\) changes the Fixed-model error only from
\(5.536\%\) to \(5.539\%\). The distinction becomes more pronounced under
missing observations. With \(30\%\) of the native-grid values removed, the
Interpolated Two-Step error increases from \(6.64\%\) to \(11.63\%\),
whereas the Fixed-model error increases only from \(5.54\%\) to \(6.10\%\).
This corresponds to approximately \(75\%\) degradation for the interpolated
baseline but only \(10\%\) for the Fixed Topological DeepONet.
\begin{figure}
    \centering
    \includegraphics[width=\textwidth]
    {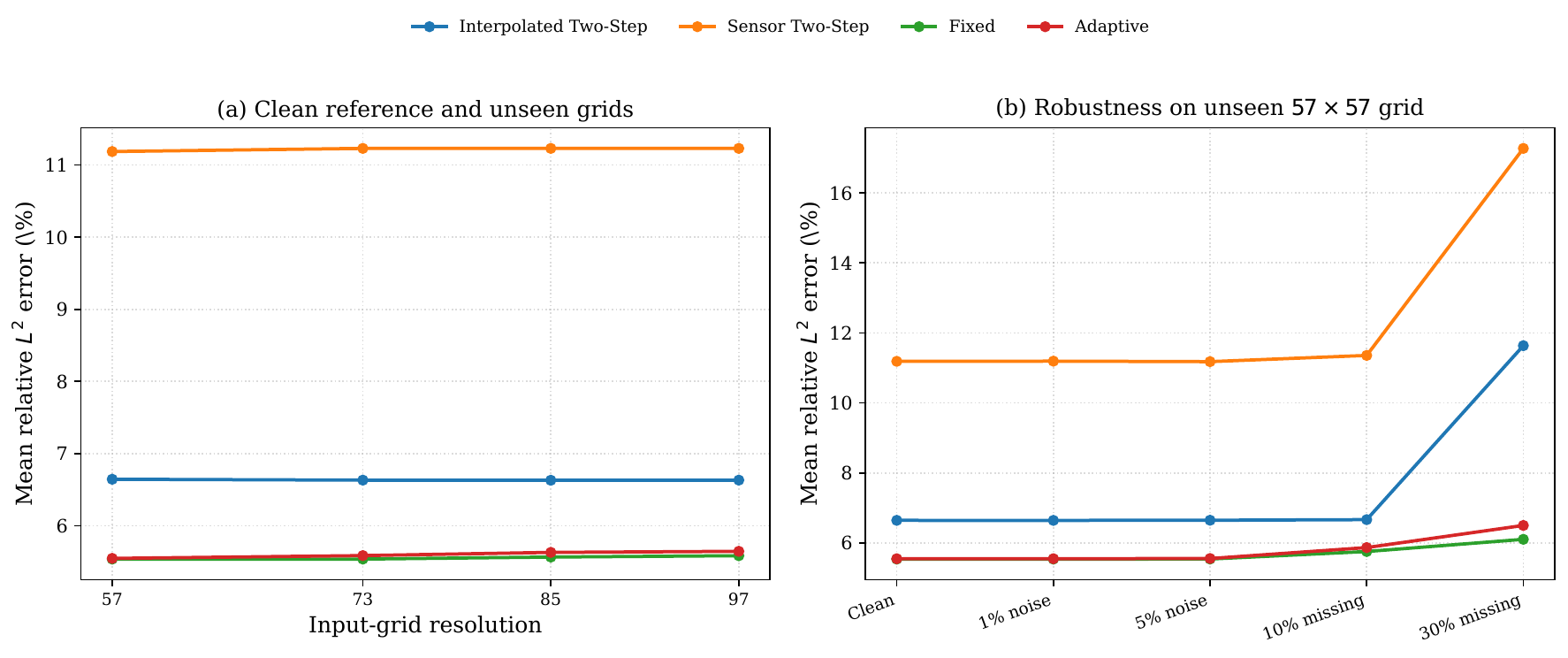}
    \caption{
    Heterogeneous-discretization Darcy results. Panel (a) compares the mean
    relative \(L^2\) error on the reference \(85\times85\) grid and on
    unseen \(57\times57\), \(73\times73\), and \(97\times97\) grids.
    Panel (b) reports robustness on the unseen \(57\times57\) grid under
    additive noise and randomly missing observations. The functional models
    retain nearly constant accuracy across discretizations and degrade much
    less than the interpolation and sensor baselines under incomplete
    observations.
    }
    \label{fig:darcy-heterogeneous}
\end{figure}
\autoref{fig:darcy-heterogeneous} confirms that the functional
coordinates provide a portable representation of the input field. Since the
same continuous functionals are evaluated directly on each native grid, the
learned branch network does not depend on a particular discretization.
These results therefore demonstrate both discretization transfer and
robustness to imperfect observations, while avoiding the need to reconstruct
every input on a common reference mesh.
\subsection{Fixed-time Navier--Stokes benchmark}
\label{sec:fixed-time-navier-stokes}
We consider the two-dimensional incompressible Navier-Stokes equations
in vorticity form on the periodic unit square \(D=(0,1)^2\):
\begin{equation}
    \frac{\partial \omega}{\partial t}
    +
    \bm{u}\cdot\nabla\omega
    =
    \nu\Delta\omega + f,
    \qquad
    \nabla\cdot\bm{u}=0,
    \label{eq:ns-vorticity-main}
\end{equation}
where \(\omega\) is the scalar vorticity,
\(\bm{u}=\nabla^\perp(-\Delta)^{-1}\omega\),
\(\nu=10^{-3}\), \(f\) is a prescribed forcing term, and periodic
boundary conditions are imposed in both
spatial directions. Following the standard neural-operator benchmark
of~\cite{li2021fourier}, the data consist of 5000 realizations sampled on a
\(64\times64\) grid at 50 stored time levels; a representative vorticity
evolution is shown in Fig.~\ref{fig:ns-vorticity}.
For the fixed-time Navier--Stokes benchmark, we learn
\begin{equation}
\mathcal{G}_{0\rightarrow10}:
L_{\mathrm{per}}^2(D)
\longrightarrow
L_{\mathrm{per}}^2(D),
\qquad
\omega(\cdot,t_0)
\mapsto
\omega(\cdot,t_{10}),
\end{equation}
where \(D=(0,1)^2\). After discretization on a \(64\times64\) grid,
\[
\mathcal{G}_{0\rightarrow10}^{h}:
\mathbb{R}^{4096}
\longrightarrow
\mathbb{R}^{4096}.
\]
\begin{figure}[t]
    \includegraphics[width=\linewidth]{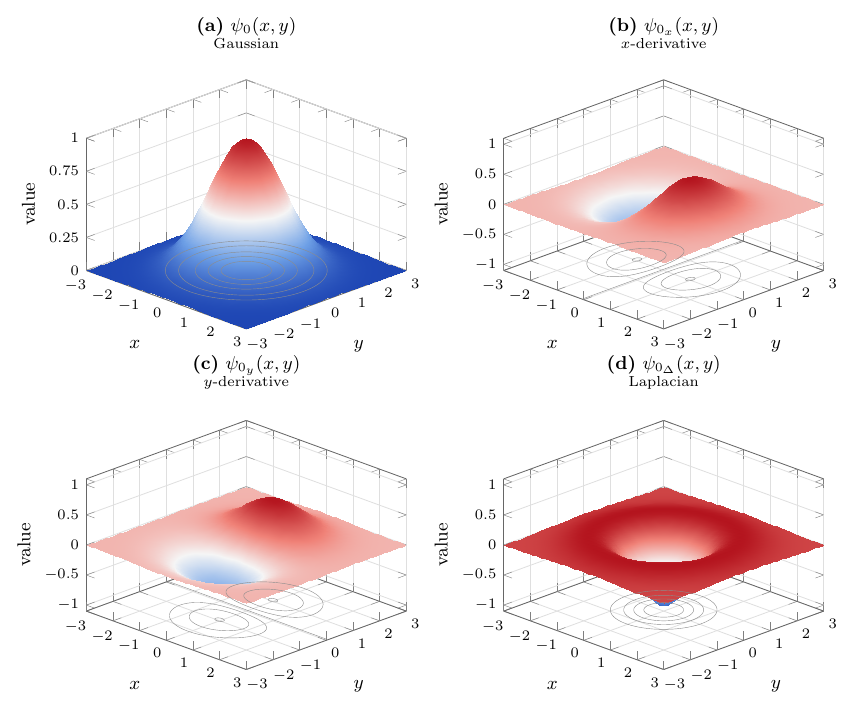}
    \caption{ Representative multiscale measurement atoms used in the fixed
    functional dictionary. Gaussian measurements capture localized averages,
    directionally weighted Gaussian atoms emphasize directional spatial variation, and the
    Laplacian-of-Gaussian atom emphasizes localized curvature and
    vortical structure.}
    \label{fig:ms_wavelet}
\end{figure}
The realizations are divided into 4000 training,
500 validation, and 500 test samples. After vectorization, both the
input and output fields belong to \(\mathbb{R}^{4096}\).
We compare five operator-learning formulations:
Full-field Two-Step, Sensor Two-Step, Fixed Topological Two-Step,
Adaptive Topological Two-Step, and Vanilla DeepONet. The full-field
models receive the complete \(64\times64\) input. The compressed models
use \(q=128\) measurements, corresponding to a \(32\times\) reduction
in input dimension. The Sensor model uses pointwise evaluations
\[
    z_j^{\mathrm{sens}}
    =
    \omega(\bm{x}_{s_j},t_0),
    \qquad
    j=1,\ldots,q,
\]
at prescribed sensor locations \(\bm{x}_{s_1},\ldots,\bm{x}_{s_q}\),
whereas the Fixed Topological model uses distributed linear functionals
\begin{equation}
    z_j^{\mathrm{fix}}
    =
    \left\langle
        \omega(\cdot,t_0),\psi_{i_j}
    \right\rangle,
    \qquad
    j=1,\ldots,q,
    \label{eq:fixed-topological-main}
\end{equation}
selected from a localized multiscale dictionary
\(\{\psi_k\}_{k=1}^{K}\). Representative multiscale atoms are shown
in \autoref{fig:ms_wavelet}.
The Adaptive Topological model replaces the
fixed selection by
\begin{equation}
    \mathbf{A}_{\theta}
    =
    \mathbf{A}_0
    +
    \Delta\mathbf{A}_{\theta},
    \qquad
    \mathbf{z}^{\mathrm{ad}}
    =
    \mathbf{h}\,
    \mathbf{A}_{\theta},
    \label{eq:adaptive-topological-main}
\end{equation}
where
\(\mathbf{h}
=
[\langle\omega,\psi_1\rangle,\ldots,
\langle\omega,\psi_K\rangle]\)
collects all \(K\) dictionary measurements,
\(\mathbf{A}_0\in\mathbb{R}^{K\times q}\) is the fixed selection matrix,
and \(\Delta\mathbf{A}_{\theta}\) is a trainable correction.
The initialization
\(\Delta\mathbf{A}_{\theta}=0\)
ensures that the adaptive and fixed models coincide at epoch zero.
For the Two-Step methods, the target fields are represented in a
truncated SVD basis. If
\[
    \mathbf{Y}_c
    =
    \mathbf{U}\mathbf{\Sigma}\mathbf{V}^{\top}
\]
is the SVD of the centered training-output matrix
\(\mathbf{Y}_c\) (Appendix~\ref{app:fixed-time-details}),
then the first \(r\) right singular vectors form
\(\mathbf{Q}\in\mathbb{R}^{4096\times r}\), and
\begin{equation}
    \mathbf{y}
    \approx
    \overline{\mathbf{y}}
    +
    \mathbf{Q}\mathbf{c},
    \label{eq:svd-reconstruction-main}
\end{equation}
where \(\overline{\mathbf{y}}\) is the mean training target field
and \(\mathbf{c}\) the reduced coefficient vector.
The branch network predicts the coefficient vector
\(\widehat{\mathbf{c}}\), after which the target field is reconstructed
using~\eqref{eq:svd-reconstruction-main}. For the common-backbone
comparison, all five models use an 2D Fourier spectral-layer branch
\cite{li2021fourier}; Vanilla DeepONet additionally uses a coordinate trunk
network as in~\cite{lu2021deeponet}.
The Two-Step models are trained using a coefficient-space mean-squared
error augmented by a relative coefficient loss,
\begin{equation}
    \mathcal{L}_{\mathrm{TwoStep}}
    =
    \frac{1}{Br}
    \sum_{i=1}^{B}
    \left\|
        \widehat{\mathbf{c}}^{(i)}
        -
        \mathbf{c}^{(i)}
    \right\|_2^2
    +
    \lambda_{\mathrm{rel}}
    \frac{1}{B}
    \sum_{i=1}^{B}
    \frac{
        \left\|
            \widehat{\mathbf{c}}^{(i)}
            -
            \mathbf{c}^{(i)}
        \right\|_2
    }{
        \left\|
            \mathbf{c}^{(i)}
        \right\|_2+\varepsilon
    },
    \label{eq:twostep-loss-main}
\end{equation}
where \(B\) is the mini-batch size, \(\lambda_{\mathrm{rel}}\ge0\)
a fixed weight, and \(\varepsilon>0\) a small constant.
The adaptive model includes the additional regularization
\[
    \lambda_{\mathrm{drift}}
    \left\|
        \Delta\mathbf{A}_{\theta}
    \right\|_F^2.
\]
Performance is reported using the global relative \(L^2\) error
\begin{equation}
    \varepsilon_{\mathrm{global}}
    =
    \frac{
        \left\|
            \widehat{\mathbf{Y}}-\mathbf{Y}
        \right\|_F
    }{
        \left\|
            \mathbf{Y}
        \right\|_F
    },
    \label{eq:global-error-main}
\end{equation}
where \(\mathbf{Y}\) and \(\widehat{\mathbf{Y}}\) stack the true
and predicted test targets,
together with the mean sample-wise relative \(L^2\) error. Further
details on the multiscale dictionary, functional selection, adaptive
parameterization, SVD construction, network architectures, and
optimization settings are provided in Appendix~\ref{app:fixed-time-details}.
\begin{figure}
    \centering
    \includegraphics[width=\linewidth]{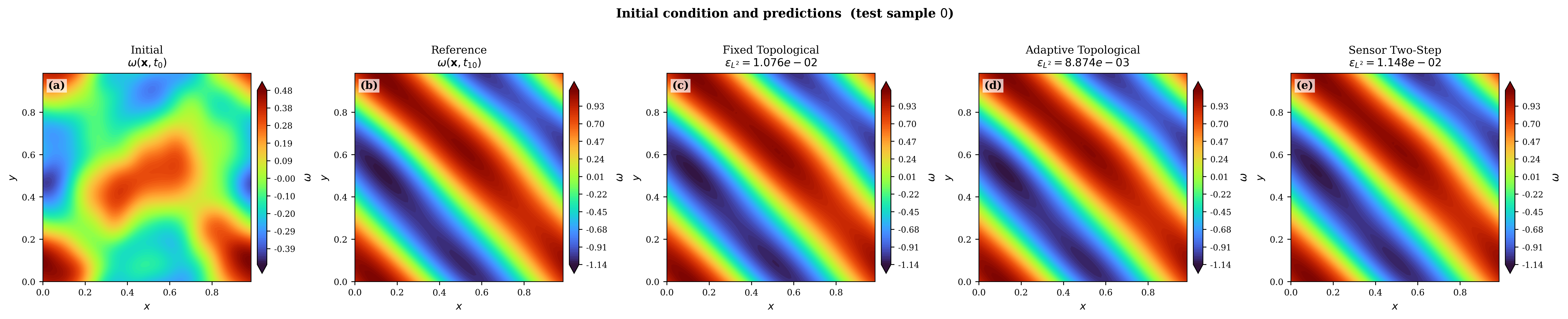}
    \caption{Vorticity fields for a representative realization of
the two-dimensional incompressible Navier--Stokes benchmark. The panels show
the initial condition, the reference target field, and the predictions of the
Fixed Topological, Adaptive Topological, and Sensor Two-Step models,
respectively.}
    \label{fig:ns-vorticity}
\end{figure}
The empirical cumulative distribution of the sample-wise relative
\(L^2\) error (Fig.~\ref{fig:ecdf_comparison}) reveals a clear advantage
of the topological models in the low-error regime, which is not fully
captured by aggregate mean-error metrics alone.
At the practically relevant \(2\%\) threshold, the Adaptive Topological
model attains \(76.0\%\) coverage (\(380\) of the \(500\) test samples),
compared with \(73.0\%\) for Fixed Topological, \(65.0\%\) for Sensor
Two-Step, \(60.2\%\) for Vanilla DeepONet, and \(44.8\%\) for the
full-field Two-Step model; coverages at the \(1\%\) and \(3\%\)
thresholds are reported in Table~\ref{tab:threshold-coverage}.
Thus, relative to Vanilla, the adaptive topological
representation improves the fraction of highly accurate predictions by
\(15.8\) percentage points at the \(2\%\) tolerance. These results
indicate that the learned topological measurements improve not only the
average predictive accuracy but also the consistency of the model
across the test ensemble, while retaining a compressed input
representation of only \(128\) measurements instead of the full
\(4096\)-dimensional field.
\begin{figure}[t]
    \centering
    \includegraphics[width=\linewidth]{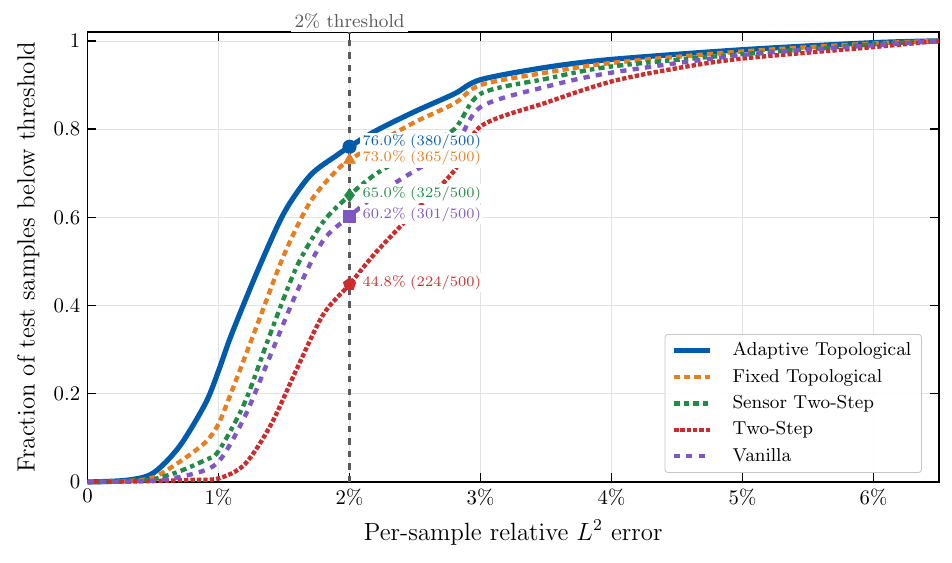}
    \caption{Empirical cumulative distribution function (ECDF) of per-sample
    relative $L^2$ errors on the Navier--Stokes test set ($N=500$) for five
    operator-learning models. The vertical dashed line marks the $2\%$ error
    threshold; filled markers indicate each model's cumulative fraction at
    that threshold. Cumulative fractions at the $1\%$, $2\%$, and $3\%$
    thresholds are summarised in Table~\ref{tab:threshold-coverage}.}
    \label{fig:ecdf_comparison}
\end{figure}
\begin{table}
\centering
\caption{
Percentage of test samples below selected relative \(L^2\)-error thresholds
for the fixed-time Navier--Stokes operator.
}
\label{tab:threshold-coverage}
\begin{tabular}{lccc}
\toprule
Model & \(1\%\) & \(2\%\) & \(3\%\) \\
\midrule
Adaptive Topological & \textbf{25.2} & \textbf{76.0} & \textbf{91.2} \\
Fixed Topological    & 13.2 & 73.0 & 90.0 \\
Sensor Two-Step      & 7.0  & 65.0 & 88.0 \\
Two-Step             & 0.8  & 44.8 & 80.6 \\
Vanilla              & 4.6  & 60.2 & 85.0 \\
\bottomrule
\end{tabular}
\end{table}
\subsection{Comparison with a parameter-matched Fourier neural operator.}
We additionally compare the proposed models with a direct Fourier neural
operator (FNO)~\cite{li2021fourier}. The FNO is a particularly strong
baseline for the present fixed-time Navier--Stokes benchmark because the
vorticity equation is solved on a uniform periodic domain. Fourier
convolution layers naturally respect the periodic geometry, provide direct
access to global spatial frequencies, and impose a translation-equivariant
spectral inductive bias that is well aligned with periodic vorticity
evolution. This benchmark therefore constitutes a favorable setting for the
FNO and provides a stringent comparison for the proposed
functional-coordinate models.
\begin{figure}[t]
\includegraphics[width=\textwidth]{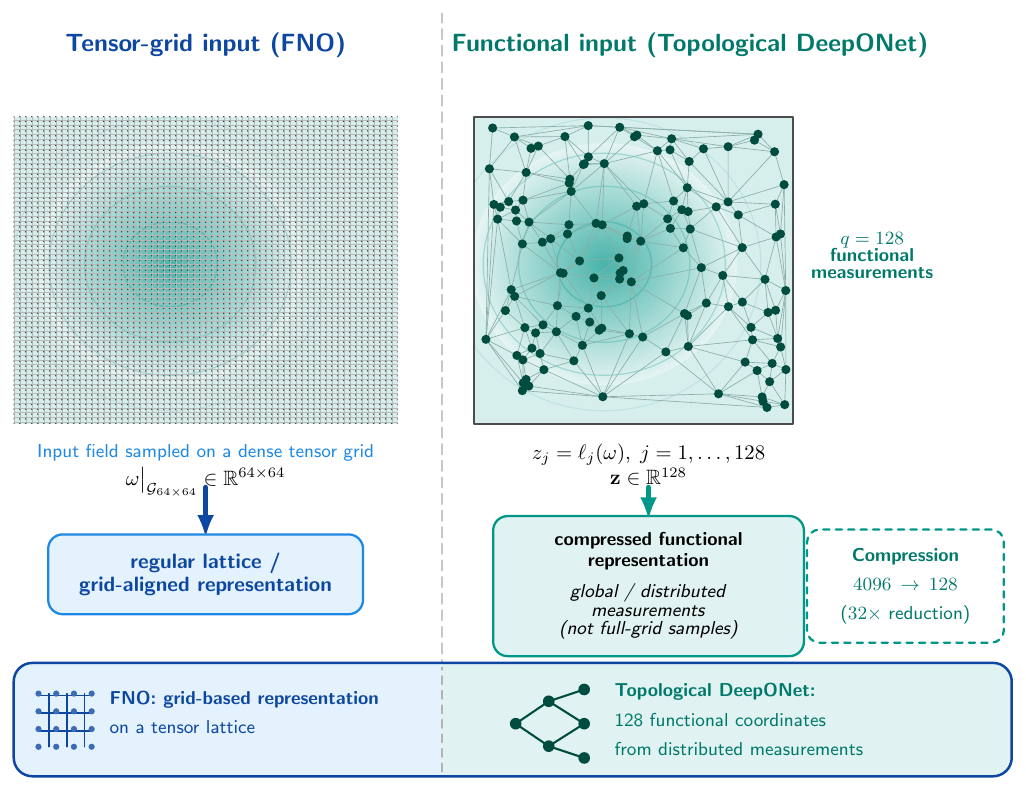}
\caption{
Schematic comparison between the tensor-grid representation used by the
Fourier neural operator and the continuous-functional representation used
by the Topological DeepONet for fixed time Navier-Stokes operator. The FNO receives the complete
\(64\times64\) field and applies a discrete Fourier transform and spectral
convolution on the uniform periodic grid. The Topological DeepONet instead
maps the input \(v\in(\mathcal V,\{p_\alpha\}_{\alpha\in A})\) to
\(q=128\) coordinates generated by continuous linear functionals
\(\ell_j\in\mathcal V'\). Numerically, each functional can be evaluated on
the available discretization using grid-dependent quadrature, without
requiring the branch input to contain the complete tensor-grid field.
}
\label{fig:fno-versus-functional-representation}
\end{figure}
Table~\ref{tab:ns-fno-comparison} reports the mean and sample standard
deviation over three independent runs with random seeds \(1234\), \(2345\),
and \(3456\). The FNO contains \(113{,}781\) trainable parameters, which is
within approximately \(5.4\%\) of the \(120{,}220\)-parameter Fixed
Topological and Sensor Two-Step models. The FNO receives the complete
\(64\times64\) vorticity field, corresponding to \(4096\) input values,
whereas the Fixed and Adaptive Topological DeepONets use only \(q=128\)
functional coordinates. The functional models therefore provide a
\(32\times\) reduction in branch-input dimension.
\begin{table*}[t]
\centering
\caption{
Multi-seed comparison for the fixed-time Navier--Stokes benchmark.
Results are reported as mean \(\pm\) sample standard deviation over three
independent runs. For the Adaptive Topological DeepONet, the reported
training time is end to end and includes both the initial Fixed Topological
training and the subsequent adaptive optimization. Lower values are better
for prediction error, training time, and peak GPU memory.
}
\label{tab:ns-fno-comparison}
\resizebox{\textwidth}{!}{
\begin{tabular}{lccccccc}
\toprule
\textbf{Model}
&
\textbf{Mean rel. \(L^2\) (\%)}
&
\textbf{Global rel. \(L^2\) (\%)}
&
\textbf{Training time (s)}
&
\textbf{Inference parameters}
&
\textbf{Peak GPU memory (MB)}
&
\textbf{Input dimension}
&
\textbf{Compression}
\\
\midrule
Fixed Topological
&
\(1.824 \pm 0.018\)
&
\(2.110 \pm 0.030\)
&
\(311.9 \pm 17.7\)
&
\(120{,}220\)
&
\(27.7\)
&
\(128\)
&
\(32\times\)
\\
Adaptive Topological
&
\(1.685 \pm 0.017\)
&
\(1.992 \pm 0.026\)
&
\(1012.7 \pm 38.4\)
&
\(193{,}948\)
&
\(30.1\)
&
\(128\)
&
\(32\times\)
\\
Sensor Two-Step
&
\(1.968 \pm 0.005\)
&
\(2.228 \pm 0.011\)
&
\(338.6 \pm 15.4\)
&
\(120{,}220\)
&
\(30.3\)
&
\(128\)
&
\(32\times\)
\\
Two-Step
&
\(2.424 \pm 0.028\)
&
\(2.706 \pm 0.033\)
&
\(371.3 \pm 40.3\)
&
\(628{,}124\)
&
\(41.6\)
&
\(4096\)
&
\(1\times\)
\\
Vanilla DeepONet
&
\(2.113 \pm 0.020\)
&
\(2.405 \pm 0.033\)
&
\(987.8 \pm 25.0\)
&
\(665{,}145\)
&
\(54.8\)
&
\(4096\)
&
\(1\times\)
\\
Direct FNO
&
\(\mathbf{0.832 \pm 0.172}\)
&
\(\mathbf{0.991 \pm 0.201}\)
&
\(2021.8 \pm 98.9\)
&
\(113{,}781\)
&
\(323.2\)
&
\(4096\)
&
\(1\times\)
\\
\bottomrule
\end{tabular}
}
\end{table*}
As shown in Table~\ref{tab:ns-fno-comparison}, the FNO achieves the lowest
average predictive error, with mean and global relative \(L^2\) errors of
\(0.832\%\pm0.172\%\) and \(0.991\%\pm0.201\%\), respectively. This result
is consistent with the strong alignment between the Fourier architecture
and the uniform periodic geometry of the benchmark. Among the DeepONet-based
models, the Adaptive Topological DeepONet performs best, attaining
\(1.685\%\pm0.017\%\) mean relative error and
\(1.992\%\pm0.026\%\) global relative error. The Fixed Topological model
follows with corresponding errors of \(1.824\%\pm0.018\%\) and
\(2.110\%\pm0.030\%\).
Although the FNO provides the best mean accuracy, it also exhibits
substantially greater sensitivity to random initialization. Its standard
deviation in mean relative error is approximately \(0.172\) percentage
points, nearly ten times the \(0.017\)-percentage-point standard deviation
of the Adaptive Topological model. A similar difference is observed for the
global relative error, for which the FNO standard deviation is
\(0.201\) percentage points compared with \(0.026\) percentage points for
the Adaptive model. The individual FNO mean errors range from approximately
\(0.682\%\) to \(1.020\%\), whereas the Adaptive Topological errors remain
within the much narrower interval \(1.665\%\)--\(1.698\%\). Thus, the FNO
offers superior average accuracy but noticeably greater seed-to-seed
variability, while the functional-coordinate models provide more consistent
performance across independent training runs. Because only three seeds are
available, this observation should be interpreted as evidence of relative
training stability rather than as a definitive statistical conclusion.
The gain in FNO accuracy is also accompanied by greater computational cost.
Its mean training time is \(2021.8\pm98.9\) s, compared with
\(1012.7\pm38.4\) s for the complete end-to-end Adaptive Topological
pipeline. The FNO therefore requires approximately twice the training time.
Its peak GPU memory is approximately \(323\) MB, compared with about
\(30\) MB for the Adaptive Topological model, corresponding to approximately
\(10.7\times\) greater memory consumption.
Figure~\ref{fig:ns-accuracy-cost} visualizes these accuracy--cost
relationships using the multi-seed means and one-standard-deviation error
bars. The FNO occupies the accuracy-optimal region but has the largest
uncertainty in predictive error and the highest training-time and memory
costs. The Topological DeepONets occupy a different part of the trade-off:
they use a \(32\times\) compressed functional representation, exhibit much
smaller seed-to-seed variation, and require substantially less GPU memory.
Consequently, the FNO is preferable when maximum average accuracy on a fixed
periodic grid is the principal objective, whereas the Topological DeepONets
provide a more compact, memory-efficient, and empirically stable
representation that can be evaluated across discretizations.
\begin{figure}[t]
\centering
\includegraphics[width=\textwidth]{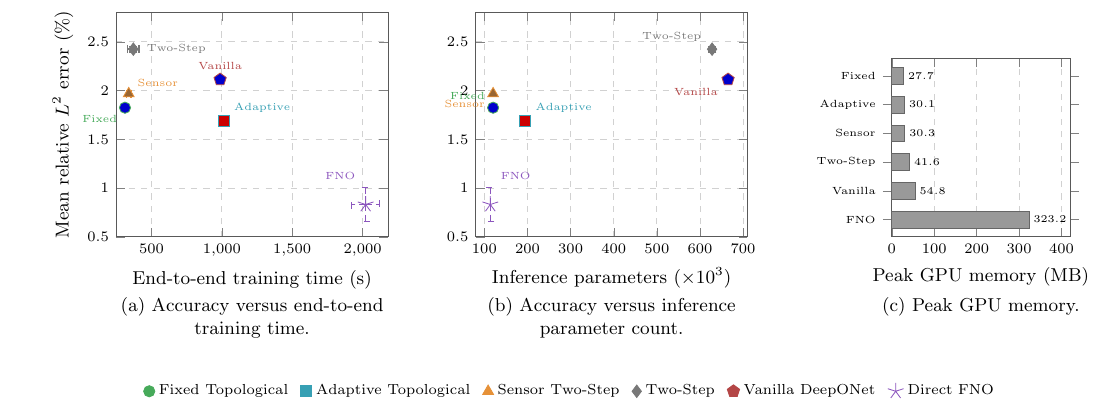}
\caption{%
Accuracy--cost comparison for the fixed-time Navier--Stokes benchmark.
The numerical values are reported in Table~\ref{tab:ns-fno-comparison}.
Markers in panels (a) and (b) show averages over three independent random
seeds, and error bars denote one sample standard deviation.
(a)~Mean relative \(L^2\) error versus end-to-end training time. The
Adaptive Topological time includes both the initial Fixed Topological
training and the subsequent adaptive optimization.
(b)~Mean relative \(L^2\) error versus inference parameter count.
(c)~Peak GPU memory. The FNO achieves the lowest mean error on the uniform
periodic benchmark, but it exhibits substantially greater seed-to-seed
variation and requires the largest training time and memory. The Fixed and
Adaptive Topological models use only \(q=128\) functional coordinates
instead of the full \(4096\)-value field, corresponding to a \(32\times\)
input compression.}
\label{fig:ns-accuracy-cost}
\end{figure}
\subsection{Time-evolving Navier--Stokes operator}
\label{sec:time-evolving-navier-stokes}
We consider the same two-dimensional incompressible Navier--Stokes
vorticity problem~\eqref{eq:ns-vorticity-main} on the periodic unit square
\(D=(0,1)^2\), with \(\nu=10^{-3}\) and prescribed forcing \(f\), but now
learn a trajectory-prediction operator rather than a single-snapshot
forecast.
Let
\begin{equation}
    \mathcal{V}
    =
    L_{\mathrm{per}}^2(D)
\end{equation}
denote the space of square-integrable periodic vorticity fields. The
time-evolving operator maps an input vorticity history to a future
vorticity trajectory:
\begin{equation}
    \mathcal{G}:
    \mathcal{V}^{T_{\mathrm{in}}}
    \longrightarrow
    \mathcal{V}^{T_{\mathrm{out}}},
    \qquad
    \left\{
        \omega(\cdot,t_n)
    \right\}_{n=0}^{T_{\mathrm{in}}-1}
    \longmapsto
    \left\{
        \omega(\cdot,t_n)
    \right\}_{n=T_{\mathrm{start}}}^{T_{\mathrm{end}}-1}.
    \label{eq:time-ns-operator}
\end{equation}
In the present experiment,
\begin{equation}
    T_{\mathrm{in}}=10,
    \qquad
    T_{\mathrm{start}}=10,
    \qquad
    T_{\mathrm{out}}=40,
\end{equation}
so that
\begin{equation}
    \mathcal{G}_{10\rightarrow40}:
    \left\{
        \omega(\cdot,t_0),\ldots,\omega(\cdot,t_9)
    \right\}
    \longmapsto
    \left\{
        \omega(\cdot,t_{10}),\ldots,\omega(\cdot,t_{49})
    \right\}.
    \label{eq:time-ns-10-to-40}
\end{equation}
The dataset contains \(5000\) realizations sampled on a
\(64\times64\) spatial grid at \(50\) stored time levels. We use
\(4000\), \(500\), and \(500\) realizations for training, validation,
and testing, respectively. After discretization, the input and output
trajectories belong to
\begin{equation}
    \mathbb{R}^{64\times64\times10}
    \qquad\text{and}\qquad
    \mathbb{R}^{64\times64\times40},
\end{equation}
respectively.
For the Fixed Topological Two-Step model, each input snapshot is represented
through \(q\) continuous linear functionals
\begin{equation}
    \ell_j\!\left(\omega(\cdot,t_n)\right)
    =
    \left\langle
        \omega(\cdot,t_n),\psi_j
    \right\rangle_{L^2(D)}
    =
    \int_D
        \omega(\bm{x},t_n)\psi_j(\bm{x})
    \,\mathrm{d}\bm{x},
    \qquad
    j=1,\ldots,q,
    \label{eq:time-ns-functional}
\end{equation}
where
\begin{equation}
    \psi_j\in L_{\mathrm{per}}^2(D).
\end{equation}
By the Cauchy--Schwarz inequality,
\begin{equation}
    \left|
        \ell_j(\omega)
    \right|
    \leq
    \|\omega\|_{L^2(D)}
    \|\psi_j\|_{L^2(D)},
\end{equation}
and therefore
\begin{equation}
    \ell_j\in\mathcal{V}'.
\end{equation}
The measurement functions are selected from a prescribed localized
multiscale dictionary containing centered Gaussian, directional
Gaussian-weighted, and Laplacian-of-Gaussian-type atoms at several spatial
scales. The default implementation uses
\begin{equation}
    q=128
\end{equation}
measurements at each of the ten input times. The resulting functional
history is
\begin{equation}
    \mathcal{M}_q(\omega)
    =
    \left[
        \ell_j\!\left(\omega(\cdot,t_n)\right)
    \right]_{
        \substack{
            n=0,\ldots,T_{\mathrm{in}}-1\\
            j=1,\ldots,q
        }
    }
    \in
    \mathbb{R}^{T_{\mathrm{in}}\times q}.
    \label{eq:time-ns-functional-history}
\end{equation}
A regularized dual synthesis map reconstructs a spatial proxy history from
these measurements. The proxy is then supplied to a three-dimensional
Fourier spectral-layer
branch encoder acting jointly on the two spatial coordinates and the input
time coordinate. The Fixed Topological Two-Step pipeline is therefore
\begin{equation}
    \left\{\omega(\cdot,t_n)\right\}_{n=0}^{9}
    \xrightarrow{\;\mathcal{M}_q\;}
    \mathbb{R}^{10\times128}
    \xrightarrow{\;\text{dual synthesis}\;}
    \widetilde{\omega}_q
    \xrightarrow{\;\text{3D FNO branch}\;}
    \widehat{\bm{c}}
    \xrightarrow{\;\text{output basis}\;}
    \widehat{\omega}.
    \label{eq:time-ns-fixed-pipeline}
\end{equation}
For the Adaptive Topological Two-Step model, the fixed measurements are
corrected by learned linear combinations of the complete dictionary
coordinates. If
\begin{equation}
    \bm{h}(t_n)
    =
    \left[
        \ell_1(\omega(\cdot,t_n)),\ldots,
        \ell_K(\omega(\cdot,t_n))
    \right]
\end{equation}
contains all dictionary measurements and \(\mathcal{I}_q\) denotes the
indices of the selected fixed measurements, then
\begin{equation}
    \bm{z}_{\mathrm{ad}}(t_n)
    =
    \bm{h}_{\mathcal{I}_q}(t_n)
    +
    \bm{h}(t_n)\Delta A_\theta,
    \qquad
    \Delta A_\theta\in\mathbb{R}^{K\times q}.
    \label{eq:time-ns-adaptive-measurement}
\end{equation}
The initialization \(\Delta A_\theta=0\) makes the adaptive and fixed
representations identical before adaptive training.
For all Two-Step models, the complete output trajectory is flattened and
projected onto a rank-\(r\) basis obtained from the singular value
decomposition of the centered training trajectories. Writing
\begin{equation}
    Y_c=U\Sigma V^{\mathsf T},
    \qquad
    Q=V_{:,1:r},
\end{equation}
the trajectory is approximated as
\begin{equation}
    \bm{y}
    \approx
    \overline{\bm{y}}
    +
    Q\bm{c}.
    \label{eq:time-ns-output-basis}
\end{equation}
The spectral branch predicts the reduced coefficient vector
\(\widehat{\bm{c}}\), after which the full
\(64\times64\times40\) trajectory is reconstructed using the fixed basis
\(Q\).
Further details on the multiscale dictionary, predictive selection,
dual synthesis, adaptive parameterization, output SVD construction, and
training procedure are provided in
Appendix~\ref{app:time-evolving-ns-details}.
Test errors for the \(10\)-to-\(40\) Navier--Stokes trajectory-prediction problem are summarized in the preceding discussion, whereas \autoref{tab:time-evolving-ns-computational} reports validation, optimization, timing, and model-size statistics.
The Adaptive Topological Two-Step model achieves the best performance
across all four reported error measures, attaining a mean relative
\(L^2\) error of \(4.5729\times10^{-2}\), a global relative \(L^2\)
error of \(4.8087\times10^{-2}\), an RMSE of
\(4.5017\times10^{-2}\), and an MAE of \(3.2172\times10^{-2}\). Relative to the Fixed Topological model, adaptive measurements reduce
the mean relative \(L^2\) error by approximately \(5.3\%\). The
improvement is approximately \(8.2\%\) relative to Sensor Two-Step and
\(4.2\%\) relative to the full-field Two-Step model. Compared with
Vanilla DeepONet, the adaptive model reduces the mean relative
\(L^2\) error by approximately \(48.1\%\).
The Fixed Topological model also outperforms the Sensor Two-Step model
despite using the same \(128\) measurements at each input time. Its mean
relative \(L^2\) error is \(4.8273\times10^{-2}\), compared with
\(4.9801\times10^{-2}\) for the sensor representation. This result
indicates that distributed multiscale functionals provide a more
informative compressed description of the vorticity history than an
equal number of pointwise samples.
The fixed, adaptive, and sensor models use \(128\) measurements for each
of the ten input snapshots, resulting in an effective input dimension of
\(1280\). In contrast, the full-field models use all
\(64\times64\times10=40960\) input values. The compressed models
therefore achieve a \(32\times\) reduction in input dimension. Despite this substantial compression, the Adaptive Topological model
outperforms the full-field Two-Step model, while the Fixed Topological
model remains competitive with it. In particular, the adaptive model
obtains a mean relative \(L^2\) error of \(4.5729\times10^{-2}\),
compared with \(4.7714\times10^{-2}\) for Full Two-Step. This result
suggests that the learned functional coordinates suppress redundant
components of the input history while retaining dynamically relevant
multiscale information.
\autoref{tab:time-evolving-threshold-coverage} and \autoref{fig:ecdf_nst}
report the percentage of
test trajectories whose sample-wise relative \(L^2\) error falls below
selected thresholds. The Adaptive Topological model achieves the highest
coverage at every reported threshold. In particular, \(38.0\%\) of its test
predictions have relative error below \(4\%\), compared with \(28.8\%\) for
the Full Two-Step model, \(27.0\%\) for the Fixed Topological model, and
\(19.6\%\) for the Sensor Two-Step model. No Vanilla DeepONet prediction
falls below the \(4\%\) threshold. At the \(5\%\) threshold, the Adaptive Topological model successfully predicts
\(78.4\%\) of the test trajectories. The corresponding coverage is
\(72.4\%\) for Fixed Topological, \(72.2\%\) for Full Two-Step, and
\(64.4\%\) for Sensor Two-Step, whereas Vanilla DeepONet still has no test
sample below this threshold. These results show that adaptive functional
measurements improve not only the mean prediction error but also the
consistency of the model across individual test realizations. At the more permissive \(10\%\) threshold, all Two-Step-based models attain
at least \(99.0\%\) coverage. The Adaptive Topological model remains the best,
with \(99.4\%\) of the test samples below \(10\%\), followed by Fixed
Topological and Full Two-Step at \(99.2\%\), Sensor Two-Step at \(99.0\%\),
and Vanilla DeepONet at \(90.6\%\). Thus, the topological and Two-Step models
produce substantially fewer high-error trajectories than Vanilla DeepONet.
\autoref{fig:time-evolving-ns-fields-errors-spectrum} provides a
representative comparison of the predicted time-evolving vorticity
fields. The complete-field predictions in the upper row are visually
similar because the dominant large-scale structures are recovered by all
models. To expose the differences that are obscured by the common
vorticity scale, the lower row reports localized absolute-error maps over
the shear-interaction region identified by the dashed boxes. The Adaptive Topological model produces the smallest localized error and
more accurately preserves the position and amplitude of the diagonal
shear structure. The Fixed Topological model remains close to the
adaptive result but exhibits a slightly broader and stronger error band.
The Vanilla DeepONet prediction shows the largest deviation, with a more
pronounced error concentrated along the interface separating the
positive- and negative-vorticity regions. This behavior is consistent
with the larger snapshot-wise and trajectory-wise relative \(L^2\)
errors reported for Vanilla DeepONet.
\begin{table}[t]
\centering
\caption{
Validation, optimization, and model-size statistics for the
time-evolving Navier--Stokes benchmark. Training times are reported in
hours. The adaptive measurement parameters correspond to the trainable
correction matrix \(\Delta A_\theta\).
}
\label{tab:time-evolving-ns-computational}
\resizebox{\linewidth}{!}{
\begin{tabular}{lccccc}
\toprule
Metric
& Fixed Topological
& Adaptive Topological
& Sensor Two-Step
& Full Two-Step
& Vanilla DeepONet
\\
\midrule
Best validation global relative \(L^2\)
& \(4.9488\times10^{-2}\)
& \(\mathbf{4.6427\times10^{-2}}\)
& \(5.1158\times10^{-2}\)
& \(4.8344\times10^{-2}\)
& \(8.7771\times10^{-2}\)
\\
Best epoch
& \(5300\)
& \(3000\)
& \(5400\)
& \(4400\)
& \(9900\)
\\
Training time (h)
& \(23.74\)
& \(12.39\)
& \(24.02\)
& \(22.34\)
& \(41.13\)
\\
Training parameters
& \(50{,}423{,}490\)
& \(50{,}489{,}020\)
& \(50{,}423{,}550\)
& \(50{,}423{,}490\)
& \(50{,}424{,}000\)
\\
Inference parameters
& \(50{,}423{,}490\)
& \(50{,}489{,}020\)
& \(50{,}423{,}550\)
& \(50{,}423{,}490\)
& \(50{,}424{,}000\)
\\
Measurement parameters
& \(0\)
& \(65{,}536\)
& \(0\)
& \(0\)
& \(0\)
\\
\bottomrule
\end{tabular}
}
\end{table}
\begin{figure}[t]
\centering
\includegraphics[width=\linewidth]{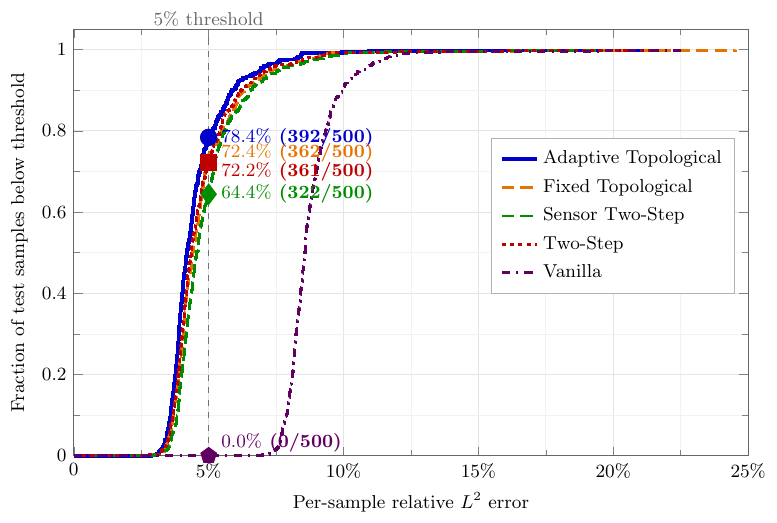}
\caption{
Empirical cumulative distribution functions (ECDFs) of the sample-wise
relative \(L^2\) trajectory errors on the Navier--Stokes test set
(\(N_{\mathrm{test}}=500\)) for the five operator-learning models.
Curves farther toward the upper-left indicate that a larger fraction of
test trajectories is predicted with lower error. The vertical dashed line
marks the \(5\%\) relative-error threshold, and the filled markers indicate
the corresponding cumulative fraction for each model. The percentages of
test samples below the \(4\%\), \(5\%\), and \(10\%\) thresholds are
summarized in Table~\ref{tab:time-evolving-threshold-coverage}.
}
\label{fig:ecdf_nst}
\end{figure}
\begin{table}[t]
\centering
\caption{Percentage of test samples below selected relative
\(L^2\)-error thresholds for the time-evolving Navier--Stokes benchmark.}
\label{tab:time-evolving-threshold-coverage}
\renewcommand{\arraystretch}{1.08}
\setlength{\tabcolsep}{10pt}
\begin{tabular}{lccc}
\toprule
Model & \(4\%\) & \(5\%\) & \(10\%\) \\
\midrule
Adaptive Topological
& \textbf{38.0}
& \textbf{78.4}
& \textbf{99.4}
\\
Fixed Topological
& 27.0
& 72.4
& 99.2
\\
Sensor Two-Step
& 19.6
& 64.4
& 99.0
\\
Full Two-Step
& 28.8
& 72.2
& 99.2
\\
Vanilla DeepONet
& 0.0
& 0.0
& 90.6
\\
\bottomrule
\end{tabular}
\end{table}
\begin{figure}[t]
    \centering
    \includegraphics[width=\textwidth]
    {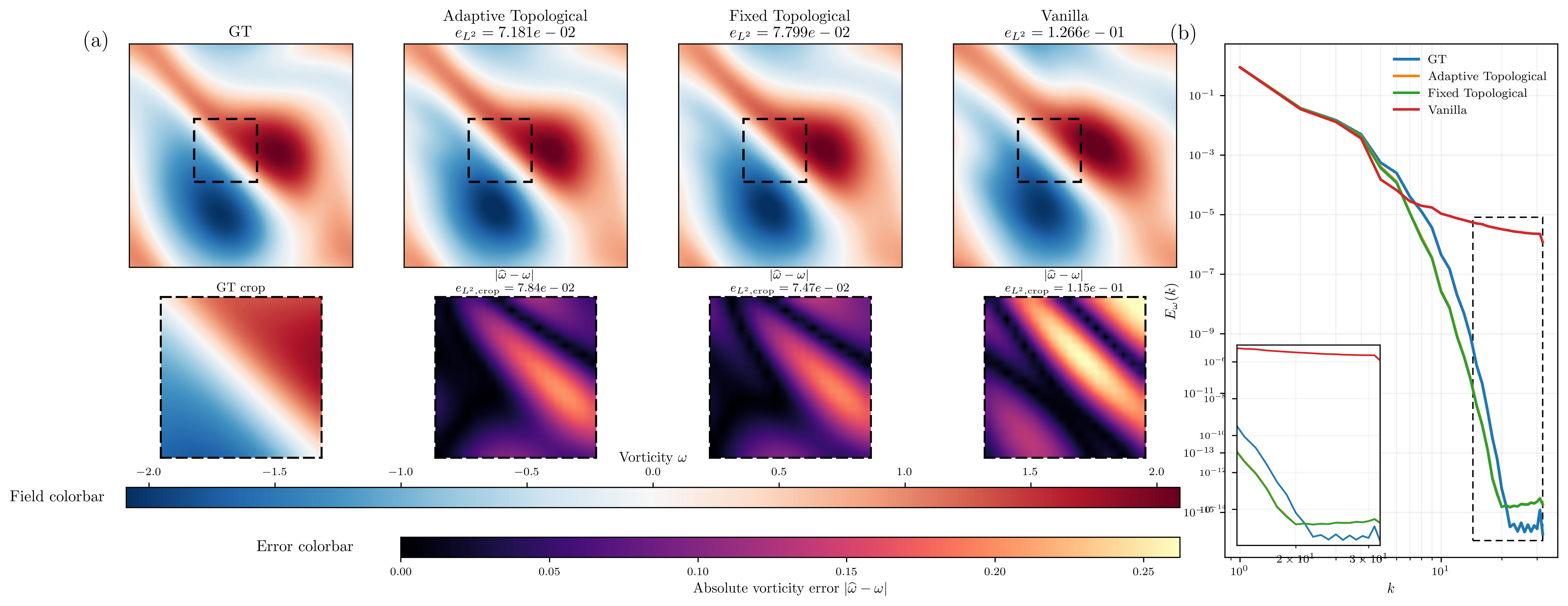}
    \caption{
    Qualitative and spectral comparison for the time-evolving
Navier--Stokes benchmark. Panel~\textbf{(a)} shows the ground-truth
vorticity field and predictions from the Adaptive Topological, Fixed
Topological, and Vanilla DeepONet models at a selected output time. A
common symmetric color scale is used for all full-field panels. The
dashed rectangles identify a localized shear-interaction region, while
the lower row shows the corresponding ground-truth crop and absolute
prediction errors \( |\widehat{\omega}-\omega| \) using a common error
scale. Panel~\textbf{(b)} compares the isotropic vorticity spectra
\(E_{\omega}(k)\), averaged over the complete predicted trajectory for
the same test realization. The inset magnifies the high-wavenumber range
indicated by the dashed box.
    }
    \label{fig:time-evolving-ns-fields-errors-spectrum}
\end{figure}
\subsection{Distribution-valued screened Poisson operator}
\label{sec:distributional-screened-poisson}
To evaluate the proposed framework beyond normed function spaces, we
consider a screened Poisson equation with distribution-valued forcing. Let
\begin{equation}
(-\Delta+I)u=\mu
\quad\text{in }\Omega=(0,1)^2,
\qquad
u=0
\quad\text{on }\partial\Omega,
\end{equation}
with distribution-valued input
\begin{equation}
\mu=\sum_{i=1}^{N_f}a_i\delta_{\bm{x}_i}.
\end{equation}
Thus,
\begin{equation}
\mathcal{G}:\mathcal{M}(\Omega)\rightarrow H_0^1(\Omega),
\qquad
\mu\mapsto u.
\end{equation}
For smooth test functions \(\{\phi_j\}_{j=1}^K\), the functional coordinates are evaluated directly from the source list as
\begin{equation}
\ell_j(\mu)
=
\langle \mu,\phi_j\rangle
=
\sum_{i=1}^{N_f}a_i\phi_j(\bm{x}_i),
\end{equation}
without rasterizing the input onto a common grid.
The source locations are sampled uniformly from \([0.05,0.95]^2\), with amplitudes \(a_i\sim\mathcal{N}(0,1)\). The in-distribution data use \(N_f\in\{1,\ldots,15\}\), while the out-of-distribution test set uses \(N_f\in\{16,\ldots,25\}\). Reference solutions are obtained from a truncated sine-series expansion and evaluated on a \(64\times64\) grid. We compare the proposed functional-measurement models with two baselines
designed for variable-cardinality or discretized inputs. First, the
\emph{DeepSets source-list} model represents each realization by the
unordered set
\[
\mathcal{S}
=
\left\{
(x_{i,1},x_{i,2},a_i)
\right\}_{i=1}^{N_f}.
\]
Following the permutation-invariant Deep Sets architecture
\cite{zaheer2017deep}, the same encoder \(\phi\) is applied to every source,
the resulting features are aggregated by summation, and a second network
\(\rho\) predicts the reduced output coefficients:
\begin{equation}
\widehat{\bm c}
=
\rho\left(
\sum_{i=1}^{N_f}
\phi(x_{i,1},x_{i,2},a_i)
\right).
\end{equation}
This construction directly accommodates a variable number of sources and
is invariant to their ordering, but its latent coordinates are generic
nonlinear learned features rather than prescribed continuous linear
functionals of the input measure. Deep Sets was introduced specifically
for permutation-invariant learning on set-valued inputs.
Second, the \emph{Rasterized Two-Step} baseline deposits the signed Dirac
sources onto a fixed \(32\times32\) Cartesian grid using bilinear weights.
The resulting \(1024\)-dimensional grid vector is passed to a branch network
that predicts coefficients in the same POD output basis used by all other
models. This model follows the coefficient-space Two-Step DeepONet strategy,
in which the output representation and branch regression are trained as
separate stages \cite{lee2024training}. Unlike the functional models, the
rasterized baseline depends on a prescribed input grid and introduces an
additional approximation through source deposition. The Two-Step strategy
was proposed to reduce the difficulty of jointly optimizing the DeepONet
branch and trunk representations.
\begin{table}[t]
\centering
\caption{
In-distribution and out-of-distribution performance for the
distribution-valued screened Poisson benchmark. Models are trained using
\(1\leq N_f\leq15\). The OOD set contains \(16\leq N_f\leq25\) sources.
}
\label{tab:distributional-screened-poisson-id-ood}
\begin{tabular}{lcccc}
\toprule
& \multicolumn{2}{c}{ID: \(1\leq N_f\leq15\)}
& \multicolumn{2}{c}{OOD: \(16\leq N_f\leq25\)} \\
\cmidrule(lr){2-3}
\cmidrule(lr){4-5}
Model
& Mean
& Global
& Mean
& Global \\
\midrule
Fixed Functional
& \(3.547\times10^{-1}\)
& \(3.431\times10^{-1}\)
& \(4.228\times10^{-1}\)
& \(4.590\times10^{-1}\) \\
Adaptive Functional
& \(4.153\times10^{-1}\)
& \(\mathbf{3.118\times10^{-1}}\)
& \(\mathbf{3.980\times10^{-1}}\)
& \(\mathbf{4.237\times10^{-1}}\) \\
DeepSets Source List
& \(1.020\)
& \(8.096\times10^{-1}\)
& \(8.855\times10^{-1}\)
& \(8.223\times10^{-1}\) \\
Rasterized Two-Step
& \(8.050\times10^{-1}\)
& \(6.400\times10^{-1}\)
& \(6.840\times10^{-1}\)
& \(7.075\times10^{-1}\) \\
\bottomrule
\end{tabular}
\end{table}
\begin{figure}[t]
    \centering
    \includegraphics[width=\textwidth]{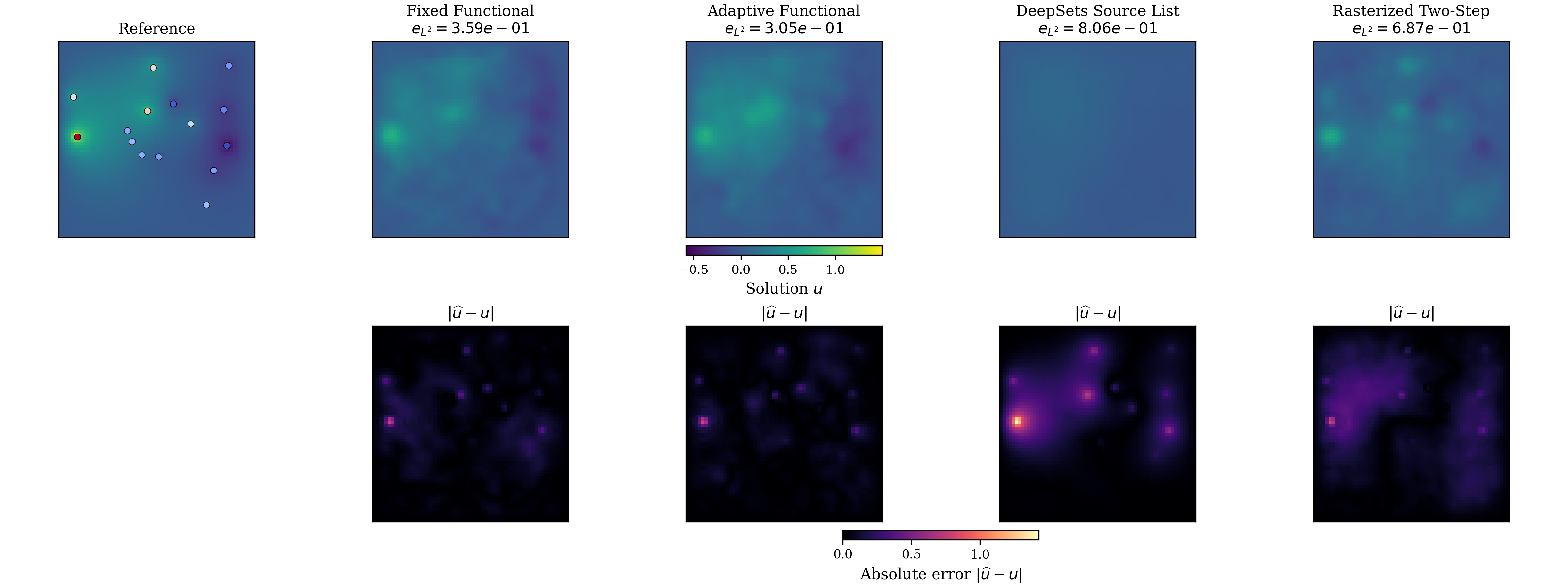}
    \caption{
    Representative prediction for the distribution-valued screened Poisson benchmark.
    The top row shows the reference solution and the predictions from the Fixed Functional,
    Adaptive Functional, DeepSets source-list, and Rasterized Two-Step models. The bottom
    row shows the corresponding absolute-error fields. The functional models recover the
    dominant spatial structures more accurately than the two baseline representations.
    }
    \label{fig:distributional-screened-poisson-predictions}
\end{figure}
The functional-measurement models substantially outperform the DeepSets and rasterized baselines shown in \autoref{tab:distributional-screened-poisson-id-ood}. The Fixed Functional model gives the lowest mean and \(P_{95}\) errors, reducing the mean error by approximately \(55.9\%\) relative to Rasterized Two-Step and \(65.2\%\) relative to DeepSets. The Adaptive Functional model achieves the lowest global relative \(L^2\) error, although its larger standard deviation and \(P_{95}\) indicate a small number of high-error outliers. These results demonstrate the benefit of evaluating continuous-dual measurements directly on variable-cardinality, distribution-valued inputs. On the out-of-distribution test set (\autoref{tab:distributional-screened-poisson-id-ood}), the Adaptive Functional model achieves
the best performance across all reported error statistics, with a mean
relative \(L^2\) error of \(39.80\%\) and a global relative \(L^2\) error
of \(42.37\%\). The Fixed Functional model follows with corresponding
errors of \(42.28\%\) and \(45.90\%\). Both functional models substantially
outperform the Rasterized Two-Step and DeepSets baselines.
Relative to Rasterized Two-Step, the Adaptive Functional model reduces the
OOD mean error by approximately
\[
\frac{0.6840-0.3980}{0.6840}\times100
\approx 41.8\%.
\]
Relative to DeepSets, the reduction is approximately \(55.1\%\). These
results indicate that the continuous-functional representation generalizes
more effectively to measures whose cardinality exceeds the entire training
range. Figure~\ref{fig:distributional-screened-poisson-predictions} shows a representative
test realization. The Fixed and Adaptive Functional models reproduce the principal
localized solution structures, with sample-wise relative errors of \(35.9\%\) and
\(30.5\%\), respectively. In contrast, the DeepSets source-list prediction is nearly
uniform, while the Rasterized Two-Step model captures only part of the spatial response,
leading to substantially larger errors.
\section{Summary} \label{sec:summary}
We developed fixed and adaptive Topological DeepONets as computational
realizations of the operator-learning framework of
Ismailov~\cite{ismailov2026topological}. In contrast to conventional
DeepONets, whose branch inputs are point evaluations tied to a prescribed
sensor grid, the proposed models encode the input through continuous linear
functionals drawn from the dual of a Hausdorff locally convex space. Both
variants were combined with an enhanced Two-Step training
strategy~\cite{lee2024training}: a weighted singular value decomposition
constructs a rank-stable, discretely orthonormal output basis, and the branch
network is trained directly in the corresponding coefficient space. In the
adaptive formulation, the measurement functionals and coefficient map are
learned jointly, while a training-only reconstruction decoder and soft
regularization reduce information loss and feature collapse without
increasing inference-time complexity.
On the theoretical side,
Theorem~\ref{thm:discrete-topological-error} decomposes the discrete
approximation error into measurement-reconstruction, output-basis truncation,
and neural-approximation contributions, while
Corollary~\ref{cor:barron-rate} provides an explicit Barron-rate refinement
of the network term. The estimate was verified numerically for the
antiderivative operator: all \(400\) test samples satisfied the samplewise
bound, with maximum sharpness ratios of \(0.94\) and \(0.97\) for the
adaptive and fixed models, respectively. We further considered a benchmark
posed on a genuinely non-normable locally convex input space. This experiment
demonstrates that the proposed construction remains applicable when the input
topology is generated by a family of seminorms and cannot be represented by a
single norm, thereby providing a direct computational illustration of the
extension beyond the Banach-space setting.
The computational experiments support several conclusions. First,
distributed functional measurements outperform point sensors under matched
representation budgets. On the Darcy problem, the Fixed Topological DeepONet
reduces the global relative \(L^2\) error from \(10.39\%\) to \(5.88\%\)
relative to a parameter-matched point-sensor model using the same
\(32\)-dimensional input. In the heterogeneous-resolution Darcy experiment,
the functional models also retain nearly resolution-independent mean errors
of approximately \(5.5\%\)--\(5.6\%\) on previously unseen grids and degrade
substantially less than interpolation-based baselines under missing
observations. Second, learning the measurement functionals produces further
gains at essentially unchanged inference cost. The adaptive model attains the
lowest error on the antiderivative benchmark, with mean relative \(L^2\)
error \(2.12\times10^{-2}\), and outperforms the Two-Step baseline on
\(72\%\) of the test functions. It also achieves the highest coverage of
accurate predictions among the DeepONet-based models on both
Navier--Stokes problems, with \(76.0\%\) of fixed-time samples below
\(2\%\) error and \(78.4\%\) of time-evolving trajectories below \(5\%\)
error. Third, the functional coordinates retain dynamically relevant
information despite substantial compression: with a \(32\times\) reduction
in input dimension, the adaptive model surpasses the full-field Two-Step
model on the time-evolving trajectory-prediction problem, attaining mean
relative \(L^2\) errors of \(4.57\times10^{-2}\) and
\(4.77\times10^{-2}\), respectively.
The comparison with a direct Fourier neural operator
(FNO)~\cite{li2021fourier} clarifies the scope of these advantages. On the
uniform periodic fixed-time Navier--Stokes benchmark, the FNO achieves the
lowest three-seed mean relative \(L^2\) error,
\(0.832\%\pm0.172\%\), whereas the Adaptive Topological DeepONet attains
\(1.685\%\pm0.017\%\) using only \(128\) functional coordinates. Thus, the
FNO provides higher average accuracy in a setting strongly aligned with its
spectral inductive bias, but it exhibits substantially larger seed-to-seed
variability, uses the complete \(64\times64\) input field, requires
approximately twice the end-to-end training time, and consumes about
\(10.7\times\) more peak GPU memory. The principal advantage of the proposed
framework is therefore not universal superiority over grid-specific neural
operators, but the construction of compact, interpretable,
operator-adapted, and discretization-portable coordinates in the continuous
dual \(\mathcal V'\), including for input spaces that are not normable.
Future work includes physics-informed \cite{wang2021physicsinformed,li2021pino} extensions of the topological
framework, principled and data-driven design of measurement dictionaries,
applications to variable-resolution and multimodal experimental data, and an
extension of the discrete error analysis of
Section~\ref{sec:theory} to fully infinite-dimensional locally convex
settings.
\section*{Acknowledgments}
KS and GEK acknowledge partial support from the U.S. Department of Energy (DOE), Office of Science, Advanced Scientific Computing Research (ASCR) program, through the Scientific Discovery through Advanced Computing (SciDAC) Institute "LEADS: LEarning-Accelerated Domain Science" (Subcontract 831126 under DE-AC05-76RL01830). GEK further acknowledges support from the ONR Vannevar Bush Faculty Fellowship (N00014-22-1-2795).
\section*{Code and Reproducibility}
The complete implementation, including data-generation scripts,
training configurations, and evaluation code for all four benchmarks, will be made publicly available in a permanent repository upon publication; the archival URL will be inserted in the final version.
\newpage
\bibliographystyle{unsrtnat}
\bibliography{ref}

@article{ismailov2026topological,
  title={Topological DeepONets and a generalization of the Chen-Chen operator approximation theorem},
  author={Ismailov, Vugar},
  journal={arXiv preprint arXiv:2603.11972},
  year={2026}
}

@article{lee2024training,
  title={On the training and generalization of deep operator networks},
  author={Lee, Sanghyun and Shin, Yeonjong},
  journal={SIAM Journal on Scientific Computing},
  volume={46},
  number={4},
  pages={C273--C296},
  year={2024},
  publisher={SIAM}
}

@article{lu2021deeponet,
  author  = {Lu, Lu and Jin, Pengzhan and Pang, Guofei and Zhang, Zhongqiang
             and Karniadakis, George Em},
  title   = {Learning Nonlinear Operators via {DeepONet} Based on the
             Universal Approximation Theorem of Operators},
  journal = {Nature Machine Intelligence},
  volume  = {3},
  number  = {3},
  pages   = {218--229},
  year    = {2021},
  doi     = {10.1038/s42256-021-00302-5}
}

@inproceedings{li2021fourier,
  author    = {Li, Zongyi and Kovachki, Nikola and Azizzadenesheli, Kamyar
               and Liu, Burigede and Bhattacharya, Kaushik and Stuart, Andrew
               and Anandkumar, Anima},
  title     = {Fourier Neural Operator for Parametric Partial Differential Equations},
  booktitle = {International Conference on Learning Representations},
  year      = {2021},
  url       = {https://openreview.net/forum?id=c8P9NQVtmnO}
}

@article{chen1995operator,
  author  = {Chen, Tianping and Chen, Hong},
  title   = {Universal Approximation to Nonlinear Operators by Neural
             Networks with Arbitrary Activation Functions and Its Application
             to Dynamical Systems},
  journal = {IEEE Transactions on Neural Networks},
  volume  = {6},
  number  = {4},
  pages   = {911--917},
  year    = {1995},
  doi     = {10.1109/72.392253}
}

@article{venturi2023svd,
  author  = {Venturi, Simone and Casey, Tiernan},
  title   = {{SVD} Perspectives for Augmenting {DeepONet} Flexibility and
             Interpretability},
  journal = {Computer Methods in Applied Mechanics and Engineering},
  volume  = {403},
  pages   = {115718},
  year    = {2023},
  doi     = {10.1016/j.cma.2022.115718}
}

@article{kovachki2023neuraloperator,
  author  = {Kovachki, Nikola B. and Li, Zongyi and Liu, Burigede
             and Azizzadenesheli, Kamyar and Bhattacharya, Kaushik
             and Stuart, Andrew M. and Anandkumar, Anima},
  title   = {Neural Operator: Learning Maps Between Function Spaces with
             Applications to {PDEs}},
  journal = {Journal of Machine Learning Research},
  volume  = {24},
  number  = {89},
  pages   = {1--97},
  year    = {2023}
}

@article{wang2021physicsinformed,
  author        = {Wang, Sifan and Wang, Hanwen and Perdikaris, Paris},
  title         = {Learning the Solution Operator of Parametric Partial
                   Differential Equations with Physics-Informed {DeepONets}},
  journal       = {Science Advances},
  volume        = {7},
  number        = {40},
  pages         = {eabi8605},
  year          = {2021},
  doi           = {10.1126/sciadv.abi8605}
}

@article{jin2022mionet,
  author  = {Jin, Pengzhan and Meng, Shuai and Lu, Lu},
  title   = {{MIONet}: Learning Multiple-Input Operators via Tensor Product},
  journal = {SIAM Journal on Scientific Computing},
  volume  = {44},
  number  = {6},
  pages   = {A3490--A3514},
  year    = {2022},
  doi     = {10.1137/22M1477751}
}

@article{liu2022multiscale,
  author        = {Liu, Lizuo and Cai, Wei},
  title         = {Multiscale {DeepONet} for Nonlinear Operators in
                   Oscillatory Function Spaces for Building Seismic Wave
                   Responses},
  journal       = {Engineering Structures},
  volume        = {276},
  pages         = {115221},
  year          = {2023}
}

@article{shukla2024deep,
  title={Deep neural operators as accurate surrogates for shape optimization},
  author={Shukla, Khemraj and Oommen, Vivek and Peyvan, Ahmad and Penwarden, Michael and Plewacki, Nicholas and Bravo, Luis and Ghoshal, Anindya and Kirby, Robert M and Karniadakis, George Em},
  journal={Engineering Applications of Artificial Intelligence},
  volume={129},
  pages={107615},
  year={2024},
  publisher={Elsevier}
}

@article{oommen2022learning,
  title={Learning two-phase microstructure evolution using neural operators and autoencoder architectures},
  author={Oommen, Vivek and Shukla, Khemraj and Goswami, Somdatta and Dingreville, R{\'e}mi and Karniadakis, George Em},
  journal={npj Computational Materials},
  volume={8},
  number={1},
  pages={190},
  year={2022},
  publisher={Nature Publishing Group UK London}
}

@article{oommen2024rethinking,
  title={Rethinking materials simulations: Blending direct numerical simulations with neural operators},
  author={Oommen, Vivek and Shukla, Khemraj and Desai, Saaketh and Dingreville, R{\'e}mi and Karniadakis, George Em},
  journal={npj Computational Materials},
  volume={10},
  number={1},
  pages={145},
  year={2024},
  publisher={Nature Publishing Group UK London}
}

@article{de2025deep,
  title={Deep operator neural network model predictive control},
  author={de Jong, Thomas Oliver and Shukla, Khemraj and Lazar, Mircea},
  journal={IEEE Open Journal of Control Systems},
  year={2025},
  publisher={IEEE}
}

@article{laudato2025neural,
  title={Neural operator modeling of platelet geometry and stress in shear flow},
  author={Laudato, Marco and Manzari, Luca and Shukla, Khemraj},
  journal={arXiv preprint arXiv:2503.12074},
  year={2025}
}

@article{shukla2024deepI,
  title={Deep operator learning-based surrogate models for aerothermodynamic analysis of AEDC hypersonic waverider},
  author={Shukla, Khemraj and Ratchford, Jasmine and Bravo, Luis and Oommen, Vivek and Plewacki, Nicholas and Ghoshal, Anindya and Karniadakis, George},
  journal={arXiv preprint arXiv:2405.13234},
  year={2024}
}

@article{prasthofer2022vidon,
  title         = {Variable-Input Deep Operator Networks},
  author        = {Prasthofer, Michael and De Ryck, Tim and Mishra, Siddhartha},
  journal       = {arXiv preprint arXiv:2205.11404},
  year          = {2022},
  doi           = {10.48550/arXiv.2205.11404},
  eprint        = {2205.11404},
  archivePrefix = {arXiv},
  primaryClass  = {cs.LG}
}

@article{zhang2022belnet,
  title         = {{BelNet}: Basis Enhanced Learning, a Mesh-Free Neural Operator},
  author        = {Zhang, Zecheng and Leung, Wing Tat and Schaeffer, Hayden},
  journal       = {arXiv preprint arXiv:2212.07336},
  year          = {2022},
  doi           = {10.48550/arXiv.2212.07336},
  eprint        = {2212.07336},
  archivePrefix = {arXiv},
  primaryClass  = {math.NA}
}

@article{zhang2023discretization,
  title         = {A Discretization-Invariant Extension and Analysis of Some
                   Deep Operator Networks},
  author        = {Zhang, Zecheng and Leung, Wing Tat and Schaeffer, Hayden},
  journal       = {arXiv preprint arXiv:2307.09738},
  year          = {2023},
  doi           = {10.48550/arXiv.2307.09738},
  eprint        = {2307.09738},
  archivePrefix = {arXiv},
  primaryClass  = {math.NA}
}

@article{lanthaler2021error,
  title         = {Error Estimates for {DeepONets}: A Deep Learning Framework
                   in Infinite Dimensions},
  author        = {Lanthaler, Samuel and Mishra, Siddhartha and
                   Karniadakis, George Em},
  journal       = {arXiv preprint arXiv:2102.09618},
  year          = {2021},
  doi           = {10.48550/arXiv.2102.09618},
  eprint        = {2102.09618},
  archivePrefix = {arXiv},
  primaryClass  = {math.NA}
}

@inproceedings{seidman2022nomad,
  title     = {{NOMAD}: Nonlinear Manifold Decoders for Operator Learning},
  author    = {Seidman, Jacob H. and Kissas, Georgios and
               Perdikaris, Paris and Pappas, George J.},
  booktitle = {Advances in Neural Information Processing Systems},
  volume    = {35},
  year      = {2022}
}

@article{shukla2026uncertainty,
  title={Uncertainty Quantification in PINNs for Turbulent Flows: Bayesian Inference and Repulsive Ensembles},
  author={Shukla, Khemraj and Zou, Zongren and Kaeufer, Theo and Triantafyllou, Michael and Karniadakis, George Em},
  journal={arXiv preprint arXiv:2604.17156},
  year={2026}
}

@inproceedings{zaheer2017deep,
  title     = {Deep Sets},
  author    = {Zaheer, Manzil and Kottur, Satwik and Ravanbakhsh, Siamak
               and Poczos, Barnabas and Salakhutdinov, Ruslan
               and Smola, Alexander J.},
  booktitle = {Advances in Neural Information Processing Systems},
  volume    = {30},
  pages     = {3391--3401},
  year      = {2017}
}

@article{raissi2019physics,
  title   = {Physics-Informed Neural Networks: A Deep Learning Framework for Solving Forward and Inverse Problems Involving Nonlinear Partial Differential Equations},
  author  = {Raissi, Maziar and Perdikaris, Paris and Karniadakis, George Em},
  journal = {Journal of Computational Physics},
  volume  = {378},
  pages   = {686--707},
  year    = {2019},
  doi     = {10.1016/j.jcp.2018.10.045}
}

@article{karniadakis2021physics,
  title   = {Physics-Informed Machine Learning},
  author  = {Karniadakis, George Em and Kevrekidis, Ioannis G. and Lu, Lu and Perdikaris, Paris and Wang, Sifan and Yang, Liu},
  journal = {Nature Reviews Physics},
  volume  = {3},
  pages   = {422--440},
  year    = {2021},
  doi     = {10.1038/s42254-021-00314-5}
}

@article{rackauckas2020universal,
  title   = {Universal Differential Equations for Scientific Machine Learning},
  author  = {Rackauckas, Christopher and Ma, Yingbo and Martensen, Julius and Warner, Collin and Zubov, Kirill and Supekar, Rohit and Skinner, Dominic and Ramadhan, Ali and Edelman, Alan},
  journal = {arXiv preprint arXiv:2001.04385},
  year    = {2020},
  eprint  = {2001.04385},
  archivePrefix = {arXiv}
}

@article{li2021pino,
  title   = {Physics-Informed Neural Operator for Learning Partial Differential Equations},
  author  = {Li, Zongyi and Zheng, Hongkai and Kovachki, Nikola and Jin, David and Chen, Haoxuan and Liu, Burigede and Azizzadenesheli, Kamyar and Anandkumar, Anima},
  journal = {arXiv preprint arXiv:2111.03794},
  year    = {2021},
  eprint  = {2111.03794},
  archivePrefix = {arXiv}
}

@inproceedings{gupta2021multiwavelet,
  title     = {Multiwavelet-Based Operator Learning for Differential Equations},
  author    = {Gupta, Gaurav and Xiao, Xiongye and Bogdan, Paul},
  booktitle = {Advances in Neural Information Processing Systems},
  volume    = {34},
  pages     = {24048--24062},
  year      = {2021}
}

@inproceedings{hao2023gnot,
  title     = {{GNOT}: A General Neural Operator Transformer for Operator Learning},
  author    = {Hao, Zhongkai and Wang, Zhengyi and Su, Hang and Ying, Chengyang and Dong, Yinpeng and Liu, Songming and Cheng, Ze and Song, Jun and Zhu, Jun},
  booktitle = {Proceedings of the 40th International Conference on Machine Learning},
  series    = {Proceedings of Machine Learning Research},
  volume    = {202},
  pages     = {12556--12569},
  year      = {2023}
}

@inproceedings{pfaff2021meshgraphnets,
  title     = {Learning Mesh-Based Simulation with Graph Networks},
  author    = {Pfaff, Tobias and Fortunato, Meire and Sanchez-Gonzalez, Alvaro and Battaglia, Peter W.},
  booktitle = {International Conference on Learning Representations},
  year      = {2021}
}

@article{cuomo2022scientific,
  title   = {Scientific Machine Learning through Physics-Informed Neural Networks: Where We Are and What's Next},
  author  = {Cuomo, Salvatore and Schiano Di Cola, Vincenzo and Giampaolo, Fabio and Rozza, Gianluigi and Raissi, Maziar and Piccialli, Francesco},
  journal = {Journal of Scientific Computing},
  volume  = {92},
  number  = {3},
  pages   = {88},
  year    = {2022},
  doi     = {10.1007/s10915-022-01939-z}
}

@inproceedings{serrano2023coral,
  title     = {Operator Learning with Neural Fields: Tackling {PDEs} on General Geometries},
  author    = {Serrano, Louis and Le Boudec, Lise and Koupa{\"i}, Armand Kassa{\"i} and Wang, Thomas X. and Yin, Yuan and Vittaut, Jean-No{\"e}l and Gallinari, Patrick},
  booktitle = {Advances in Neural Information Processing Systems},
  volume    = {36},
  year      = {2023}
}
\appendix
\section{Details of the antiderivative convergence study}
\label{app:antiderivative-details}
\subsection{Discretization and data generation}
The interval \([0,1]\) is discretized using
\[
    m=256
\]
uniformly spaced points
\[
    x_j=\frac{j}{m-1},
    \qquad
    j=0,\ldots,m-1.
\]
Each input function is represented by
\[
    v_h
    =
    \bigl(
        v(x_0),\ldots,v(x_{m-1})
    \bigr)^\top
    \in\mathbb R^m.
\]
The discrete antiderivative is computed using the cumulative composite
trapezoidal rule:
\[
    \bigl[\mathcal G_h(v_h)\bigr]_0=0,
\]
and
\[
    \bigl[\mathcal G_h(v_h)\bigr]_j
    =
    \sum_{k=0}^{j-1}
    \frac{x_{k+1}-x_k}{2}
    \left(
        v(x_k)+v(x_{k+1})
    \right),
    \qquad
    j=1,\ldots,m-1.
\]
The input functions are generated as random truncated Fourier series,
\[
    v^{(i)}(x)
    =
    a_0^{(i)}
    +
    \sum_{k=1}^{K}
    \left[
        a_k^{(i)}\cos(2\pi kx)
        +
        b_k^{(i)}\sin(2\pi kx)
    \right],
\]
with
\[
    K=40.
\]
The coefficients are sampled independently according to
\[
    a_k^{(i)},b_k^{(i)}
    \sim
    \mathcal N(0,k^{-2}),
    \qquad
    k=1,\ldots,K,
\]
Here $k^{-2}$ denotes the variance, so the corresponding standard
deviation is $k^{-1}$.
while
\[
    a_0^{(i)}
    \sim
    \mathcal N(0,0.25^2).
\]
Each realization is scaled by
\[
    \alpha_i\sim\mathcal U(0.7,1.3),
\]
so that the final input is
\[
    \widetilde v^{(i)}(x)
    =
    \alpha_i v^{(i)}(x).
\]
For each input, the corresponding output is
\[
    y_h^{(i)}
    =
    \mathcal G_h(v_h^{(i)}).
\]
A total of \(2800\) aligned input--output pairs are generated and divided
into
\[
    N_{\mathrm{train}}=2000,
    \qquad
    N_{\mathrm{val}}=400,
    \qquad
    N_{\mathrm{test}}=400.
\]
The same random permutation is applied to both the input and output
arrays, and the resulting dataset is reused for every value of \(q\).
\subsection{Fixed and adaptive measurements}
For the fixed Topological DeepONet, the measurement map is
\[
    M_q:\mathbb R^m\longrightarrow\mathbb R^q,
    \qquad
    M_qv_h
    =
    \Phi_q^\top v_h,
\]
where
\[
    \Phi_q
    =
    \begin{bmatrix}
        \phi_{1,h} & \cdots & \phi_{q,h}
    \end{bmatrix}
    \in\mathbb R^{m\times q}
\]
contains the first \(q\) discrete Legendre modes after QR
orthonormalization:
\[
    \Phi_q^\top\Phi_q=I_q.
\]
The corresponding reconstruction is
\[
    R_q(M_qv_h)
    =
    \Phi_q\Phi_q^\top v_h.
\]
The adaptive measurement basis is initialized with \(\Phi_q\) and updated
according to
\[
    \widetilde\Phi_q
    =
    \operatorname{qf}
    \left(
        \Phi_q+\Delta\Phi_q
    \right),
\]
where \(\operatorname{qf}\) denotes the orthonormal factor from a reduced
QR decomposition. The correction is regularized using
\[
    \lambda_{\mathrm{drift}}
    \|\Delta\Phi_q\|_F^2,
    \qquad
    \lambda_{\mathrm{drift}}=10^{-4}.
\]
\subsection{Reduced output representation and training}
Let
\[
    \overline y_h
    =
    \frac{1}{N_{\mathrm{train}}}
    \sum_{i=1}^{N_{\mathrm{train}}}
    y_h^{(i)}
\]
be the training-output mean. Let
\[
    Q_r\in\mathbb R^{m\times r},
    \qquad
    Q_r^\top Q_r=I_r,
\]
contain the first \(r=64\) POD modes of the centered training-output
matrix. The exact reduced coefficients are
\[
    c_r^{(i)}
    =
    Q_r^\top
    \left(
        y_h^{(i)}-\overline y_h
    \right).
\]
The learned operator is
\[
    \widehat{\mathcal G}_{h,\theta}(v_h)
    =
    \overline y_h
    +
    Q_r b_\theta(M_qv_h),
\]
where
\[
    b_\theta:\mathbb R^q\longrightarrow\mathbb R^r
\]
is a feed-forward network with four hidden layers, width \(128\), and
GELU activations.
The fixed model is trained with AdamW using learning rate
\[
    2\times10^{-3}.
\]
The adaptive model is initialized from the trained fixed branch network
and optimized with learning rate
\[
    2\times10^{-4}.
\]
The batch size is \(64\), and the checkpoint with the smallest global
relative \(L^2\) validation error is retained.
\subsection{Measurement-dimension convergence protocol}
The measurement dimension is varied according to
\[
    q\in\{8,16,32,64,128\}.
\]
The spatial discretization, POD rank, network architecture, dataset, and
optimization parameters are held fixed. Therefore, the experiment is a
convergence study with respect to the measurement dimension \(q\), not a
spatial mesh-convergence study.
\subsection{Error decomposition}
For a test input \(v_h^{(i)}\), define the reconstructed input
\[
    \widetilde v_{h,q}^{(i)}
    =
    R_q(M_qv_h^{(i)})
\]
and its exact discrete output
\[
    \widetilde y_{h,q}^{(i)}
    =
    \mathcal G_h
    \left(
        \widetilde v_{h,q}^{(i)}
    \right).
\]
The propagated measurement error is
\[
    E_{\mathrm{meas}}^{(i)}
    =
    \frac{
        \left\|
            y_h^{(i)}
            -
            \widetilde y_{h,q}^{(i)}
        \right\|_2
    }{
        \|y_h^{(i)}\|_2
    }.
\]
The output projection error is
\[
    E_{\mathrm{out}}^{(i)}
    =
    \frac{
        \left\|
            \widetilde y_{h,q}^{(i)}
            -
            \overline y_h
            -
            Q_rQ_r^\top
            \left(
                \widetilde y_{h,q}^{(i)}
                -
                \overline y_h
            \right)
        \right\|_2
    }{
        \|y_h^{(i)}\|_2
    }.
\]
The neural approximation error is
\[
    E_{\mathrm{NN}}^{(i)}
    =
    \frac{
        \left\|
            \overline y_h
            +
            Q_rQ_r^\top
            \left(
                \widetilde y_{h,q}^{(i)}
                -
                \overline y_h
            \right)
            -
            \widehat{\mathcal G}_{h,\theta}
            \left(
                v_h^{(i)}
            \right)
        \right\|_2
    }{
        \|y_h^{(i)}\|_2
    }.
\]
The total prediction error is
\[
    E_{\mathrm{tot}}^{(i)}
    =
    \frac{
        \left\|
            y_h^{(i)}
            -
            \widehat{\mathcal G}_{h,\theta}
            \left(
                v_h^{(i)}
            \right)
        \right\|_2
    }{
        \|y_h^{(i)}\|_2
    }.
\]
The computable upper bound is
\[
    E_{\mathrm{bound}}^{(i)}
    =
    E_{\mathrm{meas}}^{(i)}
    +
    E_{\mathrm{out}}^{(i)}
    +
    E_{\mathrm{NN}}^{(i)}.
\]
The triangle inequality gives
\[
    E_{\mathrm{tot}}^{(i)}
    \leq
    E_{\mathrm{bound}}^{(i)}.
\]
The reported convergence curves are test-set means,
\[
    \overline E_\alpha(q)
    =
    \frac{1}{N_{\mathrm{test}}}
    \sum_{i=1}^{N_{\mathrm{test}}}
    E_\alpha^{(i)}(q),
\]
where
\[
    \alpha
    \in
    \{
        \mathrm{meas},
        \mathrm{out},
        \mathrm{NN},
        \mathrm{tot},
        \mathrm{bound}
    \}.
\]
The sharpness of the sample-wise estimate is measured by
\[
    \rho_{\max}
    =
    \max_i
    \frac{
        E_{\mathrm{tot}}^{(i)}
    }{
        E_{\mathrm{bound}}^{(i)}
    }.
\]
\section{Additional details for the Darcy experiment}
\label{app:darcy-sensor-interpretation}
\subsection{Discrete data representation}
Each permeability and pressure field is stored as a vector in
\(\mathbb{R}^{7225}\), obtained by flattening an \(85\times85\) grid.
The data matrices have dimensions
\begin{equation}
\begin{aligned}
A_{\mathrm{train}},U_{\mathrm{train}}
&\in\mathbb{R}^{800\times7225},\\
A_{\mathrm{val}},U_{\mathrm{val}}
&\in\mathbb{R}^{100\times7225},\\
A_{\mathrm{test}},U_{\mathrm{test}}
&\in\mathbb{R}^{100\times7225}.
\end{aligned}
\end{equation}
All normalization statistics are computed using only the training set.
\subsection{Reduced output basis}
Let
\begin{equation}
U_{\mathrm{train}}
=
W\Sigma V^{\mathsf T}
\end{equation}
be the singular-value decomposition of the training pressure snapshots.
The first \(r\) right singular vectors define
\begin{equation}
Q=V_{:,1:r}.
\end{equation}
Each solution is represented by
\begin{equation}
\bm{c}^{(i)}
=
Q^{\mathsf T}\bm{u}^{(i)},
\end{equation}
and reconstructed as
\begin{equation}
\widehat{\bm{u}}^{(i)}
=
Q\widehat{\bm{c}}^{(i)}.
\end{equation}
Thus, all Two-Step models predict the same reduced pressure coefficients and
use the same fixed output basis for reconstruction.
\subsection{Input representations}
The full-field Two-Step model uses the complete discretized permeability field,
\begin{equation}
\bm{a}\in\mathbb{R}^{7225}.
\end{equation}
The Sensor Two-Step model uses \(q=32\) pointwise observations,
\begin{equation}
\bm{s}(a)
=
\left[
a(\bm{x}_1),\ldots,a(\bm{x}_{32})
\right]^{\mathsf T}.
\end{equation}
The Fixed Topological DeepONet uses the same number of functional coordinates,
\begin{equation}
\bm{\ell}(a)
=
\left[
\ell_1(a),\ldots,\ell_{32}(a)
\right]^{\mathsf T},
\end{equation}
with
\begin{equation}
\ell_j(a)
=
\int_{\Omega}
a(\bm{x})\phi_j(\bm{x})\,\mathrm{d}\bm{x}.
\label{eq:app-darcy-functional}
\end{equation}
The continuum input space for the Darcy operator is
\(\mathcal V=L^\infty(\Omega)\), restricted to the uniformly elliptic
admissible set \(\mathcal A\) defined in~\eqref{eq:darcy-spaces}. Taking
\(\phi_j\in L^1(\Omega)\), each measurement is a continuous linear
functional on this ambient space because
\begin{equation}
\left|
\ell_j(a)
\right|
\leq
\|a\|_{L^\infty(\Omega)}
\|\phi_j\|_{L^1(\Omega)}.
\end{equation}
The fact that the sampled coefficient fields also lie in $L^2(\Omega)$ is
used only at the level of their finite-dimensional numerical representation;
it does not change the continuum topology assigned to the Darcy operator.
On the discrete grid, the functional is approximated by
\begin{equation}
\ell_j(a)
\approx
\sum_{m=1}^{7225}
w_m a(\bm{x}_m)\phi_j(\bm{x}_m),
\label{eq:app-darcy-discrete-functional}
\end{equation}
where \(w_m\) are quadrature weights associated with the grid points
\(\bm{x}_m\).
\subsection{Functional dictionaries}
The implementation supports Legendre, cosine, Chebyshev, and Lagrange
measurement dictionaries. In the default Darcy configuration, the fixed
measurements are constructed from a total-degree Legendre dictionary.
After mapping the physical coordinates to the reference square
\(\widehat{\Omega}=[-1,1]^2\), the Legendre atoms are defined by
\begin{equation}
\phi_{ij}^{\mathrm{leg}}(x,y)
=
\widehat{P}_i(x)\widehat{P}_j(y),
\qquad
i+j\leq p,
\label{eq:app-darcy-legendre}
\end{equation}
where
\begin{equation}
\widehat{P}_n(x)
=
\sqrt{\frac{2n+1}{2}}\,P_n(x)
\end{equation}
is the normalized Legendre polynomial of degree \(n\).
The alternative tensor-product dictionaries are
\begin{equation}
\begin{aligned}
\phi_{ij}^{\mathrm{cos}}(x,y)
&=
c_i(x)c_j(y),\\
\phi_{ij}^{\mathrm{cheb}}(x,y)
&=
T_i(x)T_j(y),\\
\phi_{ij}^{\mathrm{lag}}(x,y)
&=
L_i(x)L_j(y),
\end{aligned}
\label{eq:app-darcy-alternative-dictionaries}
\end{equation}
where
\begin{equation}
c_0(x)=1,
\qquad
c_k(x)=\sqrt{2}\cos(k\pi x),
\end{equation}
\(T_i\) denotes a first-kind Chebyshev polynomial, and \(L_i\) denotes a
Lagrange cardinal polynomial constructed on either Chebyshev or uniformly
spaced interpolation nodes.
For the polynomial dictionaries, the index set may be chosen either as a
total-degree set,
\begin{equation}
\mathcal{I}_p
=
\left\{
(i,j)\in\mathbb{N}_0^2:
i+j\leq p
\right\},
\end{equation}
or as a full tensor-product set. The total-degree construction is used in the
default experiment.
\subsection{Discrete weighted orthonormalization}
Let
\begin{equation}
\Phi_{mj}
=
\phi_j(\bm{x}_m)
\end{equation}
denote the discrete dictionary matrix, and let
\begin{equation}
W_q
=
\operatorname{diag}(w_1,\ldots,w_{7225})
\end{equation}
contain the quadrature weights. To improve conditioning and permit a fair
comparison among different basis families, the raw dictionary is
orthonormalized with respect to the weighted discrete inner product.
The weighted Gram matrix is
\begin{equation}
G
=
\Phi^{\mathsf T}W_q\Phi,
\end{equation}
and the orthonormalized dictionary is
\begin{equation}
\Phi_{\mathrm{orth}}
=
\Phi G^{-1/2}.
\label{eq:app-darcy-orthonormalization}
\end{equation}
The corresponding measurement matrix is
\begin{equation}
M_{\mathrm{base}}
=
W_q\Phi_{\mathrm{orth}}.
\end{equation}
For a discretized permeability vector \(\bm{a}\), the complete dictionary
coordinate vector is therefore
\begin{equation}
\bm{\alpha}(a)
=
M_{\mathrm{base}}^{\mathsf T}\bm{a}.
\label{eq:app-darcy-full-coordinates}
\end{equation}
\subsection{Active dictionary and fixed measurements}
The full dictionary may contain more functions than are needed by the branch
network. The dictionary coordinates are first ranked according to their
variance over the training set. An active subset is then retained so that it
captures a prescribed fraction of the total measurement variance, subject to
a minimum active dimension.
Let
\begin{equation}
\bm{\alpha}_{\mathrm{act}}(a)
\in
\mathbb{R}^{K_{\mathrm{act}}}
\end{equation}
denote the standardized coordinates associated with the active dictionary.
The Fixed Topological DeepONet uses the first \(q=32\) selected coordinates,
\begin{equation}
\bm{\ell}_{\mathrm{fix}}(a)
=
\left[
\alpha_1(a),\ldots,\alpha_{32}(a)
\right]^{\mathsf T}.
\end{equation}
These coordinates are fixed before neural-network training.
\subsection{Adaptive functional measurements}
The Adaptive Topological DeepONet begins from the active dictionary
coordinates and learns \(q=32\) linear combinations,
\begin{equation}
\bm{z}_{\mathrm{ad}}(a)
=
A_{\theta}
\bm{\alpha}_{\mathrm{act}}(a),
\qquad
A_{\theta}
\in
\mathbb{R}^{32\times K_{\mathrm{act}}}.
\label{eq:app-darcy-adaptive-map}
\end{equation}
The \(k\)-th learned coordinate can be written as
\begin{equation}
\begin{aligned}
z_k^{\mathrm{ad}}(a)
&=
\sum_{j=1}^{K_{\mathrm{act}}}
(A_{\theta})_{kj}\alpha_j(a)\\
&=
\int_{\Omega}
a(\bm{x})
\psi_k^{\theta}(\bm{x})
\,\mathrm{d}\bm{x},
\end{aligned}
\end{equation}
where
\begin{equation}
\psi_k^{\theta}(\bm{x})
=
\sum_{j=1}^{K_{\mathrm{act}}}
(A_{\theta})_{kj}\phi_j(\bm{x})
\in
\operatorname{span}
\left\{
\phi_1,\ldots,\phi_{K_{\mathrm{act}}}
\right\}.
\end{equation}
Thus, every adaptive coordinate remains a continuous linear functional in
\(\mathcal{V}'\), while its measurement function is learned within the span
of the active dictionary.
A training-only decoder reconstructs the active dictionary coordinates from
\(\bm{z}_{\mathrm{ad}}\). The associated reconstruction loss discourages the
learned map from discarding excessive input information. In addition, a Gram
regularization term promotes nondegenerate and approximately decorrelated
learned measurements.
\subsection{Two-Step branch prediction}
For the full-field, sensor, fixed-functional, and adaptive-functional
Two-Step models, the branch network predicts the standardized reduced
coefficients
\begin{equation}
\widehat{\bm{c}}(a)
\in
\mathbb{R}^{r}.
\end{equation}
The predicted pressure field is reconstructed using
\begin{equation}
\widehat{\bm{u}}(a)
=
Q\widehat{\bm{c}}(a).
\end{equation}
The corresponding input-to-output pipelines are
\begin{equation}
\bm{a}
\longrightarrow
\widehat{\bm{c}}
\longrightarrow
\widehat{\bm{u}}
\end{equation}
for the full-field Two-Step model,
\begin{equation}
\bm{s}(a)
\longrightarrow
\widehat{\bm{c}}
\longrightarrow
\widehat{\bm{u}}
\end{equation}
for the sensor model,
\begin{equation}
\bm{\ell}_{\mathrm{fix}}(a)
\longrightarrow
\widehat{\bm{c}}
\longrightarrow
\widehat{\bm{u}}
\end{equation}
for the fixed Topological DeepONet, and
\begin{equation}
\bm{\alpha}_{\mathrm{act}}(a)
\longrightarrow
A_\theta\bm{\alpha}_{\mathrm{act}}(a)
\longrightarrow
\widehat{\bm{c}}
\longrightarrow
\widehat{\bm{u}}
\end{equation}
for the adaptive model.
\subsection{Evaluation metrics}
For the \(i\)-th test sample, the relative \(L^2\) error is
\begin{equation}
e_i
=
\frac{
\left\|
\widehat{\bm{u}}^{(i)}-\bm{u}^{(i)}
\right\|_2
}{
\left\|
\bm{u}^{(i)}
\right\|_2
}.
\end{equation}
The mean relative \(L^2\) error is
\begin{equation}
E_{\mathrm{mean}}
=
\frac{1}{N_{\mathrm{test}}}
\sum_{i=1}^{N_{\mathrm{test}}}e_i.
\end{equation}
The global relative \(L^2\) error is
\begin{equation}
E_{\mathrm{global}}
=
\frac{
\left\|
U_{\mathrm{test}}-\widehat U_{\mathrm{test}}
\right\|_F
}{
\left\|
U_{\mathrm{test}}
\right\|_F
}.
\end{equation}
The root-mean-square error is
\begin{equation}
\mathrm{RMSE}
=
\sqrt{
\frac{1}{N_{\mathrm{test}}N_h}
\sum_{i=1}^{N_{\mathrm{test}}}
\sum_{m=1}^{N_h}
\left(
\widehat u_m^{(i)}-u_m^{(i)}
\right)^2
},
\end{equation}
where \(N_h=7225\) is the number of grid points. The mean absolute error is
\begin{equation}
\mathrm{MAE}
=
\frac{1}{N_{\mathrm{test}}N_h}
\sum_{i=1}^{N_{\mathrm{test}}}
\sum_{m=1}^{N_h}
\left|
\widehat u_m^{(i)}-u_m^{(i)}
\right|.
\end{equation}
\subsection{Interpretation of the sensor comparison}
Sensor Two-Step and Fixed Topological DeepONet are matched in input
dimension, branch architecture, output basis, and inference parameter count.
Their comparison therefore isolates the effect of replacing local point
evaluations with global functional coordinates.
The comparison should be interpreted as an equal
\emph{representation-budget} comparison, rather than necessarily an equal
physical sensing-cost comparison, because a global functional may require
access to the full discretized permeability field.
The adaptive model uses the same final coordinate dimension as the fixed and
sensor models, but its coordinates are learned as linear combinations of the
active functional dictionary. Consequently, its comparison with the fixed
model evaluates whether adapting the measurement functions within a common
dictionary span improves the reduced input representation.
\section{Additional Details for the Fixed-Time Navier--Stokes Benchmark}
\label{app:fixed-time-details}
\subsection{Governing equations}
\label{subsec:ns-governing-equations}
We consider the two-dimensional incompressible Navier--Stokes equations
on the periodic unit square
\[
    D=(0,1)^2,
\]
written in vorticity form as
\begin{equation}
    \frac{\partial \omega}{\partial t}
    +
    \bm{u}\cdot\nabla\omega
    =
    \nu\Delta\omega + f,
    \qquad
    (\bm{x},t)\in D\times(0,T],
    \label{eq:fixed-time-vorticity}
\end{equation}
subject to the incompressibility constraint
\begin{equation}
    \nabla\cdot\bm{u}=0.
    \label{eq:fixed-time-incompressibility}
\end{equation}
Here,
\(\omega=\omega(\bm{x},t)\) denotes the scalar vorticity,
\(\bm{u}=(u,v)\) is the velocity field,
\(\nu>0\) is the kinematic viscosity,
and \(f=f(\bm{x})\) is a prescribed forcing term.
The velocity and vorticity are related by
\begin{equation}
    \omega
    =
    \frac{\partial v}{\partial x}
    -
    \frac{\partial u}{\partial y}.
    \label{eq:fixed-time-vorticity-definition}
\end{equation}
Introducing the streamfunction \(\psi\), one may write
\begin{equation}
    \bm{u}
    =
    \nabla^\perp\psi
    =
    \begin{pmatrix}
        \partial_y\psi\\
        -\partial_x\psi
    \end{pmatrix},
    \qquad
    -\Delta\psi=\omega.
    \label{eq:fixed-time-streamfunction}
\end{equation}
Equivalently,
\begin{equation}
    \bm{u}
    =
    \nabla^\perp(-\Delta)^{-1}\omega.
    \label{eq:fixed-time-biot-savart}
\end{equation}
Periodic boundary conditions are imposed in both spatial directions:
\begin{equation}
    \omega(x+1,y,t)=\omega(x,y,t),
    \qquad
    \omega(x,y+1,t)=\omega(x,y,t),
    \label{eq:fixed-time-periodic-bc}
\end{equation}
together with the initial condition
\begin{equation}
    \omega(\bm{x},0)=\omega_0(\bm{x}).
    \label{eq:fixed-time-initial-condition}
\end{equation}
\subsection{Fixed-time operator-learning problem}
\label{subsec:fixed-time-operator-problem}
The objective is to learn a fixed-time solution operator rather than an
entire temporal trajectory. Let \(t_{\mathrm{in}}\) denote the input time
and let \(t_{\mathrm{out}}>t_{\mathrm{in}}\) denote the target time. The
exact solution operator is
\begin{equation}
    \mathcal{G}_{t_{\mathrm{in}}\rightarrow t_{\mathrm{out}}}:
    \omega(\cdot,t_{\mathrm{in}})
    \longmapsto
    \omega(\cdot,t_{\mathrm{out}}).
    \label{eq:fixed-time-solution-operator}
\end{equation}
In the present experiment,
\begin{equation}
    t_{\mathrm{in}}=t_0,
    \qquad
    t_{\mathrm{out}}=t_{10},
\end{equation}
and therefore
\begin{equation}
    \mathcal{G}_{0\rightarrow 10}:
    \omega(\cdot,t_0)
    \longmapsto
    \omega(\cdot,t_{10}).
    \label{eq:fixed-time-ten-step-map}
\end{equation}
Because the implementation uses zero-based snapshot indices,
\texttt{input-index=0} and \texttt{target-index=10} define a
ten-snapshot-interval forecast.
For the \(i\)-th realization, we define
\begin{equation}
    a^{(i)}(\bm{x})
    =
    \omega^{(i)}(\bm{x},t_0),
    \qquad
    u^{(i)}(\bm{x})
    =
    \omega^{(i)}(\bm{x},t_{10}).
    \label{eq:fixed-time-input-output}
\end{equation}
The supervised dataset is
\begin{equation}
    \mathcal{D}
    =
    \left\{
        \left(a^{(i)},u^{(i)}\right)
    \right\}_{i=1}^{N},
    \qquad
    u^{(i)}
    =
    \mathcal{G}_{0\rightarrow 10}
    \left(a^{(i)}\right).
    \label{eq:fixed-time-supervised-dataset}
\end{equation}
A neural operator \(\mathcal{G}_{\theta}\) is trained such that
\begin{equation}
    \widehat{u}^{(i)}
    =
    \mathcal{G}_{\theta}
    \left(a^{(i)}\right)
    \approx
    u^{(i)}.
    \label{eq:fixed-time-learned-operator}
\end{equation}
\subsection{Dataset and preprocessing}
\label{subsec:fixed-time-data}
We use the standard two-dimensional Navier--Stokes vorticity dataset
employed in neural-operator studies~\cite{li2021fourier}. Each realization is
stored on a uniform periodic grid with
\[
    N_x=N_y=64
\]
and contains 50 temporal snapshots. The complete data tensor may be written as
\begin{equation}
    \mathbf{W}
    \in
    \mathbb{R}^{5000\times64\times64\times50},
    \label{eq:fixed-time-data-tensor}
\end{equation}
up to the axis ordering used by the MATLAB or HDF5 data file.
For the fixed-time experiment, only the input and target snapshots are
extracted:
\begin{align}
    \mathbf{X}^{(i)}
    &=
    \mathbf{W}^{(i)}(:,:,0),
    \\
    \mathbf{Y}^{(i)}
    &=
    \mathbf{W}^{(i)}(:,:,10).
\end{align}
Thus,
\begin{equation}
    \mathbf{X},\mathbf{Y}
    \in
    \mathbb{R}^{N\times64\times64}.
\end{equation}
After vectorization,
\begin{equation}
    \mathbf{x}^{(i)}
    =
    \operatorname{vec}
    \left(
        \mathbf{X}^{(i)}
    \right)
    \in
    \mathbb{R}^{4096},
    \qquad
    \mathbf{y}^{(i)}
    =
    \operatorname{vec}
    \left(
        \mathbf{Y}^{(i)}
    \right)
    \in
    \mathbb{R}^{4096}.
    \label{eq:fixed-time-vectorized-data}
\end{equation}
The 5000 realizations are partitioned into
\begin{equation}
    N_{\mathrm{train}}=4000,
    \qquad
    N_{\mathrm{val}}=500,
    \qquad
    N_{\mathrm{test}}=500.
    \label{eq:fixed-time-split}
\end{equation}
The viscosity is
\[
    \nu=10^{-3}.
\]
\begin{table}[t]
    \centering
    \caption{Configuration of the fixed-time Navier--Stokes benchmark.}
    \label{tab:fixed-time-navier-stokes}
    \begin{tabular}{ll}
        \toprule
        Quantity & Value \\
        \midrule
        Governing equation
        & 2D forced incompressible Navier--Stokes \\
        State variable
        & Vorticity \(\omega\) \\
        Spatial domain
        & \(D=(0,1)^2\), periodic \\
        Viscosity
        & \(\nu=10^{-3}\) \\
        Number of realizations
        & \(5000\) \\
        Spatial resolution
        & \(64\times64\) \\
        Stored time snapshots
        & \(50\) \\
        Input snapshot index
        & \(0\) \\
        Target snapshot index
        & \(10\) \\
        Forecast horizon
        & \(10\) snapshot intervals \\
        Training/validation/test split
        & \(4000/500/500\) \\
        Full-field input dimension
        & \(4096\) \\
        Output dimension
        & \(4096\) \\
        Number of compressed measurements
        & \(q=128\) \\
        Compression ratio
        & \(4096/128=32\) \\
        \bottomrule
    \end{tabular}
\end{table}
\subsection{Reduced output representation}
\label{subsec:fixed-time-output-svd}
For the Two-Step models, the target fields are represented in a
low-dimensional, data-adaptive basis. Let
\begin{equation}
    \overline{\mathbf{y}}
    =
    \frac{1}{N_{\mathrm{train}}}
    \sum_{i=1}^{N_{\mathrm{train}}}
    \mathbf{y}^{(i)}
\end{equation}
denote the mean target field, and define the centered output matrix
\begin{equation}
    \mathbf{Y}_{c}
    =
    \begin{bmatrix}
        (\mathbf{y}^{(1)}-\overline{\mathbf{y}})^{\top}\\
        \vdots\\
        (\mathbf{y}^{(N_{\mathrm{train}})}
        -\overline{\mathbf{y}})^{\top}
    \end{bmatrix}.
\end{equation}
The singular value decomposition is
\begin{equation}
    \mathbf{Y}_{c}
    =
    \mathbf{U}\mathbf{\Sigma}\mathbf{V}^{\top}.
    \label{eq:fixed-time-output-svd}
\end{equation}
The first \(r\) right singular vectors define
\begin{equation}
    \mathbf{Q}
    =
    \begin{bmatrix}
        \mathbf{v}_{1} & \cdots & \mathbf{v}_{r}
    \end{bmatrix}
    \in
    \mathbb{R}^{4096\times r}.
    \label{eq:fixed-time-output-basis}
\end{equation}
The rank is chosen according to
\begin{equation}
    \frac{
        \sum_{j=1}^{r}\sigma_j^2
    }{
        \sum_j\sigma_j^2
    }
    \geq
    \eta_{\mathrm{SVD}},
    \label{eq:fixed-time-svd-energy}
\end{equation}
where \(\eta_{\mathrm{SVD}}\in(0,1)\) is a prescribed energy
threshold, subject to a prescribed maximum rank.
Each target field is approximated as
\begin{equation}
    \mathbf{y}^{(i)}
    \approx
    \overline{\mathbf{y}}
    +
    \mathbf{Q}\mathbf{c}^{(i)},
    \label{eq:fixed-time-output-reconstruction}
\end{equation}
where
\begin{equation}
    \mathbf{c}^{(i)}
    =
    \mathbf{Q}^{\top}
    \left(
        \mathbf{y}^{(i)}
        -
        \overline{\mathbf{y}}
    \right).
    \label{eq:fixed-time-output-coefficients}
\end{equation}
Hence, the Two-Step models learn the reduced map
\begin{equation}
    \widehat{\mathbf{c}}^{(i)}
    =
    \mathcal{B}_{\theta}
    \left(
        \mathcal{M}
        \left(
            a^{(i)}
        \right)
    \right),
    \label{eq:fixed-time-reduced-map}
\end{equation}
where \(\mathcal{M}\) denotes the input representation associated with
the particular model.
\subsection{Benchmark models}
\label{subsec:fixed-time-models}
\paragraph{Full-field Two-Step.}
The Full-field Two-Step model receives the complete input field:
\begin{equation}
    \mathcal{M}_{\mathrm{full}}(a)
    =
    a
    \in
    \mathbb{R}^{64\times64}.
\end{equation}
The branch predicts
\begin{equation}
    \widehat{\mathbf{c}}
    =
    \mathcal{B}^{\mathrm{full}}_{\theta}(a),
\end{equation}
and the output is reconstructed through
\begin{equation}
    \widehat{\mathbf{y}}
    =
    \overline{\mathbf{y}}
    +
    \mathbf{Q}\widehat{\mathbf{c}}.
\end{equation}
\paragraph{Sensor Two-Step.}
Let
\begin{equation}
    \mathcal{S}
    =
    \left\{
        \bm{x}_{s_1},\ldots,\bm{x}_{s_q}
    \right\}
\end{equation}
be a prescribed set of \(q=128\) sensor locations. The measurements are
\begin{equation}
    z_j^{\mathrm{sens}}
    =
    a(\bm{x}_{s_j}),
    \qquad
    j=1,\ldots,q.
    \label{eq:fixed-time-sensor-measurements}
\end{equation}
Thus,
\begin{equation}
    \mathcal{M}_{\mathrm{sens}}(a)
    =
    \begin{bmatrix}
        a(\bm{x}_{s_1}) &
        \cdots &
        a(\bm{x}_{s_q})
    \end{bmatrix}^{\top}
    \in
    \mathbb{R}^{q}.
\end{equation}
The corresponding compression ratio is
\begin{equation}
    \mathrm{CR}
    =
    \frac{4096}{128}
    =
    32.
\end{equation}
When a Fourier spectral-layer branch is employed, the point values are
placed on a sparse
grid,
\begin{equation}
    S(\bm{x})
    =
    \begin{cases}
        a(\bm{x}),
        & \bm{x}\in\mathcal{S},
        \\
        0,
        & \text{otherwise},
    \end{cases}
\end{equation}
and are accompanied by the observation mask
\begin{equation}
    M(\bm{x})
    =
    \begin{cases}
        1,
        & \bm{x}\in\mathcal{S},
        \\
        0,
        & \text{otherwise}.
    \end{cases}
\end{equation}
\paragraph{Fixed Topological Two-Step.}
Let
\begin{equation}
    \Psi
    =
    \left\{
        \psi_k
    \right\}_{k=1}^{K}
\end{equation}
denote a dictionary of localized multiscale test functions. For each
input field \(a\), define
\begin{equation}
    h_k(a)
    =
    \left\langle
        a,\psi_k
    \right\rangle
    =
    \int_D
    a(\bm{x})
    \psi_k(\bm{x})
    \,d\bm{x}.
    \label{eq:fixed-time-dictionary-functional}
\end{equation}
A subset of \(q=128\) predictive atoms is selected,
\begin{equation}
    \mathcal{I}
    =
    \{i_1,\ldots,i_q\},
\end{equation}
and the fixed measurements are
\begin{equation}
    z_j^{\mathrm{fix}}
    =
    \left\langle
        a,\psi_{i_j}
    \right\rangle,
    \qquad
    j=1,\ldots,q.
    \label{eq:fixed-time-fixed-functionals}
\end{equation}
The fixed measurement operator is therefore
\begin{equation}
    \mathcal{M}_{\mathrm{fix}}(a)
    =
    \begin{bmatrix}
        \langle a,\psi_{i_1}\rangle\\
        \vdots\\
        \langle a,\psi_{i_q}\rangle
    \end{bmatrix}
    \in
    \mathbb{R}^{q}.
    \label{eq:fixed-time-fixed-measurement-map}
\end{equation}
For a fair comparison, these measurements are synthesized into
a spatial proxy,
\begin{equation}
    \widetilde{a}_{\mathrm{fix}}(\bm{x})
    =
    \sum_{j=1}^{q}
    z_j^{\mathrm{fix}}
    \widetilde{\psi}_{i_j}(\bm{x}),
    \label{eq:fixed-time-fixed-proxy}
\end{equation}
where \(\widetilde{\psi}_{i_j}\) denotes the associated synthesis atom.
\paragraph{Adaptive Topological Two-Step.}
Let
\begin{equation}
    \bm{h}(a)
    =
    \begin{bmatrix}
        h_1(a) & \cdots & h_K(a)
    \end{bmatrix}
\end{equation}
contain all dictionary measurements, and let
\begin{equation}
    \mathbf{A}_0
    \in
    \mathbb{R}^{K\times q}
\end{equation}
denote the fixed selection matrix. The adaptive measurement matrix is
\begin{equation}
    \mathbf{A}_{\theta}
    =
    \mathbf{A}_{0}
    +
    \Delta\mathbf{A}_{\theta}.
    \label{eq:fixed-time-adaptive-matrix}
\end{equation}
The adaptive measurements are
\begin{equation}
    \bm{z}^{\mathrm{ad}}
    =
    \bm{h}(a)\mathbf{A}_{\theta}
    \in
    \mathbb{R}^{q}.
    \label{eq:fixed-time-adaptive-measurements}
\end{equation}
At initialization,
\begin{equation}
    \Delta\mathbf{A}_{\theta}=0,
\end{equation}
so the adaptive model coincides exactly with the fixed model at epoch zero.
Equivalently, the learned functionals are
\begin{equation}
    \phi_j^{\theta}
    =
    \sum_{k=1}^{K}
    (\mathbf{A}_{\theta})_{kj}\psi_k,
\end{equation}
with
\begin{equation}
    z_j^{\mathrm{ad}}
    =
    \langle a,\phi_j^{\theta}\rangle.
\end{equation}
A drift penalty controls departure from the fixed measurement system:
\begin{equation}
    \mathcal{L}_{\mathrm{drift}}
    =
    \lambda_{\mathrm{drift}}
    \left\|
        \Delta\mathbf{A}_{\theta}
    \right\|_F^2.
    \label{eq:fixed-time-drift-loss}
\end{equation}
\paragraph{Vanilla DeepONet.}
The Vanilla DeepONet~\cite{lu2021deeponet} uses a branch network to encode
the input field and a trunk network to encode the output coordinate:
\begin{equation}
    \bm{b}_{\theta}(a)
    =
    \begin{bmatrix}
        b_1(a),\ldots,b_p(a)
    \end{bmatrix},
\end{equation}
and
\begin{equation}
    \bm{t}_{\eta}(\bm{x})
    =
    \begin{bmatrix}
        t_1(\bm{x}),\ldots,t_p(\bm{x})
    \end{bmatrix}.
\end{equation}
The predicted target field is
\begin{equation}
    \widehat{u}(\bm{x};a)
    =
    \sum_{\ell=1}^{p}
    b_{\ell}(a)t_{\ell}(\bm{x})
    +
    b_0.
    \label{eq:fixed-time-vanilla-deeponet}
\end{equation}
In the common-backbone comparison, the branch is implemented using a
2D Fourier spectral-layer encoder, while the trunk is a coordinate
multilayer perceptron.
\subsection{2D Fourier spectral-layer branch architecture}
\label{subsec:fixed-time-fno}
For models using a Fourier spectral-layer branch~\cite{li2021fourier},
one spectral layer is
written schematically as
\begin{equation}
    v_{\ell+1}(\bm{x})
    =
    \sigma
    \left[
        W_{\ell}v_{\ell}(\bm{x})
        +
        \mathcal{F}^{-1}
        \left(
            R_{\ell}(\bm{k})
            \mathcal{F}[v_{\ell}](\bm{k})
        \right)(\bm{x})
    \right],
    \label{eq:fixed-time-fno-layer}
\end{equation}
where
\(\mathcal{F}\) denotes the discrete Fourier transform,
\(R_{\ell}\) is a learned complex spectral multiplier,
\(W_{\ell}\) is a local linear transformation,
and \(\sigma\) is a nonlinear activation.
For the present fixed-time problem, the branch acts on the two spatial
dimensions only.
\subsection{Training objective}
\label{subsec:fixed-time-training-loss}
Let \(\widehat{\mathbf{c}}^{(i)}\) and \(\mathbf{c}^{(i)}\) denote the
predicted and exact output coefficients. The coefficient-space mean
squared error is
\begin{equation}
    \mathcal{L}_{\mathrm{coef}}
    =
    \frac{1}{Br}
    \sum_{i=1}^{B}
    \left\|
        \widehat{\mathbf{c}}^{(i)}
        -
        \mathbf{c}^{(i)}
    \right\|_2^2,
    \label{eq:fixed-time-coefficient-loss}
\end{equation}
and the relative coefficient loss is
\begin{equation}
    \mathcal{L}_{\mathrm{rel}}
    =
    \frac{1}{B}
    \sum_{i=1}^{B}
    \frac{
        \left\|
            \widehat{\mathbf{c}}^{(i)}
            -
            \mathbf{c}^{(i)}
        \right\|_2
    }{
        \left\|
            \mathbf{c}^{(i)}
        \right\|_2
        +
        \varepsilon
    }.
    \label{eq:fixed-time-relative-coefficient-loss}
\end{equation}
The Two-Step objective is
\begin{equation}
    \mathcal{L}_{\mathrm{TwoStep}}
    =
    \mathcal{L}_{\mathrm{coef}}
    +
    \lambda_{\mathrm{rel}}
    \mathcal{L}_{\mathrm{rel}}.
    \label{eq:fixed-time-twostep-loss}
\end{equation}
For the adaptive model,
\begin{equation}
    \mathcal{L}_{\mathrm{Adaptive}}
    =
    \mathcal{L}_{\mathrm{TwoStep}}
    +
    \lambda_{\mathrm{drift}}
    \left\|
        \Delta\mathbf{A}_{\theta}
    \right\|_F^2.
    \label{eq:fixed-time-adaptive-loss}
\end{equation}
\subsection{Evaluation metrics}
\label{subsec:fixed-time-metrics}
The principal metric is the global relative \(L^2\) error:
\begin{equation}
    \varepsilon_{\mathrm{global}}
    =
    \frac{
        \left\|
            \widehat{\mathbf{Y}}
            -
            \mathbf{Y}
        \right\|_F
    }{
        \left\|
            \mathbf{Y}
        \right\|_F
    }.
    \label{eq:fixed-time-global-error}
\end{equation}
For the \(i\)-th test sample, the relative \(L^2\) error is
\begin{equation}
    \varepsilon_i
    =
    \frac{
        \left\|
            \widehat{\mathbf{y}}^{(i)}
            -
            \mathbf{y}^{(i)}
        \right\|_2
    }{
        \left\|
            \mathbf{y}^{(i)}
        \right\|_2
    }.
    \label{eq:fixed-time-sample-error}
\end{equation}
The mean sample-wise relative error is
\begin{equation}
    \overline{\varepsilon}
    =
    \frac{1}{N_{\mathrm{test}}}
    \sum_{i=1}^{N_{\mathrm{test}}}
    \varepsilon_i.
    \label{eq:fixed-time-mean-error}
\end{equation}
The comparison is designed to determine whether distributed linear
functionals preserve more predictive information than point sensors at
the same feature dimension \(q=128\), whether adaptive functionals
improve on their fixed counterparts, and how closely compressed-input
models approach models that observe the complete \(64\times64\) field.
\section{Additional details for the time-evolving Navier--Stokes experiment}
\label{app:time-evolving-ns-details}
\subsection{Discrete trajectory representation}
Each vorticity snapshot is stored on a \(64\times64\) periodic grid and is
therefore represented by a vector in
\[
    \mathbb{R}^{4096}.
\]
Each realization contains \(50\) stored snapshots. The first
\(T_{\mathrm{in}}=10\) snapshots define the input history,
\begin{equation}
    \bm{x}^{(i)}
    =
    \left[
        \omega^{(i)}(\cdot,t_0),
        \ldots,
        \omega^{(i)}(\cdot,t_9)
    \right],
\end{equation}
while the remaining \(T_{\mathrm{out}}=40\) snapshots define the target
trajectory,
\begin{equation}
    \bm{y}^{(i)}
    =
    \left[
        \omega^{(i)}(\cdot,t_{10}),
        \ldots,
        \omega^{(i)}(\cdot,t_{49})
    \right].
\end{equation}
Thus,
\begin{equation}
    \bm{x}^{(i)}
    \in
    \mathbb{R}^{64\times64\times10},
    \qquad
    \bm{y}^{(i)}
    \in
    \mathbb{R}^{64\times64\times40}.
\end{equation}
The realizations are divided into
\begin{equation}
    N_{\mathrm{train}}=4000,
    \qquad
    N_{\mathrm{val}}=500,
    \qquad
    N_{\mathrm{test}}=500.
\end{equation}
The implementation constructs the input and output windows directly from the
first ten and subsequent forty snapshots, respectively.
\subsection{Input and output function spaces}
Let
\begin{equation}
    \mathcal{V}
    =
    L_{\mathrm{per}}^2(D),
    \qquad
    D=(0,1)^2,
\end{equation}
denote the space of square-integrable periodic vorticity fields. The
time-evolving operator is
\begin{equation}
    \mathcal{G}:
    \mathcal{V}^{10}
    \longrightarrow
    \mathcal{V}^{40},
\end{equation}
with
\begin{equation}
    \mathcal{G}
    \left(
        \omega(\cdot,t_0),
        \ldots,
        \omega(\cdot,t_9)
    \right)
    =
    \left(
        \omega(\cdot,t_{10}),
        \ldots,
        \omega(\cdot,t_{49})
    \right).
\end{equation}
Equivalently, using discrete-time sequence spaces,
\begin{equation}
    \mathcal{G}:
    \ell^2
    \left(
        \{t_0,\ldots,t_9\};
        L_{\mathrm{per}}^2(D)
    \right)
    \longrightarrow
    \ell^2
    \left(
        \{t_{10},\ldots,t_{49}\};
        L_{\mathrm{per}}^2(D)
    \right).
\end{equation}
\subsection{Training-data standardization}
The input histories are standardized using the scalar mean and standard
deviation computed from the training inputs:
\begin{equation}
    \omega_{\mathrm{std}}
    =
    \frac{
        \omega-\mu_{\mathrm{in}}
    }{
        \sigma_{\mathrm{in}}
    }.
\end{equation}
The same \(\mu_{\mathrm{in}}\) and \(\sigma_{\mathrm{in}}\) are used for
training, validation, and testing. No validation or test statistics are used
during normalization. The same scalar standardization procedure is applied
to the synthesized topological proxy fields.
\subsection{Multiscale functional dictionary}
For each input snapshot, the topological representations are constructed
from a dictionary
\begin{equation}
    \mathcal{D}_K
    =
    \left\{
        \psi_1,\ldots,\psi_K
    \right\}
    \subset
    L_{\mathrm{per}}^2(D).
\end{equation}
The default dictionary size is
\begin{equation}
    K
    =
    \gamma q,
    \qquad
    \gamma=4,
    \qquad
    q=128,
\end{equation}
so that \(K=512\).
Let
\[
    \bm{c}=(c_x,c_y)
\]
denote a dictionary center and let
\[
    d_x(x,c_x)
    =
    \min
    \left(
        |x-c_x|,
        1-|x-c_x|
    \right),
\]
\[
    d_y(y,c_y)
    =
    \min
    \left(
        |y-c_y|,
        1-|y-c_y|
    \right)
\]
denote periodic coordinate distances. Define
\begin{equation}
    r_{\mathrm{per}}^2
    =
    d_x^2+d_y^2.
\end{equation}
For each grid-relative scale
\begin{equation}
    s
    \in
    \mathcal{S}
    =
    \{1.5,3,6,12\},
\end{equation}
the corresponding normalized spatial width is
\begin{equation}
    \sigma_s
    =
    \max
    \left(
        \frac{s}{\max(N_x,N_y)},
        \frac{1}{\max(N_x,N_y)}
    \right).
    \label{eq:app-ns-scale}
\end{equation}
For the \(64\times64\) grid,
\begin{equation}
    \sigma_s
    \in
    \left\{
        \frac{1.5}{64},
        \frac{3}{64},
        \frac{6}{64},
        \frac{12}{64}
    \right\}.
\end{equation}
The Gaussian envelope is
\begin{equation}
    g_{\bm{c},\sigma}(\bm{x})
    =
    \exp
    \left(
        -\frac{
            r_{\mathrm{per}}^2
        }{
            2\sigma^2
        }
    \right).
\end{equation}
The dictionary contains the following atom families:
\begin{equation}
\begin{aligned}
    \psi_{\bm{c},\sigma}^{(0)}(\bm{x})
    &=
    g_{\bm{c},\sigma}(\bm{x}),
    \\
    \psi_{\bm{c},\sigma}^{(x)}(\bm{x})
    &=
    \frac{
        d_x(x,c_x)
    }{
        \sigma
    }
    g_{\bm{c},\sigma}(\bm{x}),
    \\
    \psi_{\bm{c},\sigma}^{(y)}(\bm{x})
    &=
    \frac{
        d_y(y,c_y)
    }{
        \sigma
    }
    g_{\bm{c},\sigma}(\bm{x}),
    \\
    \psi_{\bm{c},\sigma}^{(\Delta)}(\bm{x})
    &=
    \left(
        \frac{
            r_{\mathrm{per}}^2
        }{
            \sigma^2
        }
        -2
    \right)
    g_{\bm{c},\sigma}(\bm{x}).
\end{aligned}
\label{eq:app-ns-dictionary-families}
\end{equation}
A constant atom is also included.
The implementation labels the second and third families as
\texttt{dx} and \texttt{dy}. Since unsigned periodic distances are used,
these should be interpreted as directionally weighted Gaussian atoms rather
than exact signed Gaussian derivatives. The dictionary construction and
scale conversion are implemented directly in the code.
\subsection{Atom normalization}
Each nonconstant atom is centered by subtracting its discrete mean:
\begin{equation}
    \psi_j
    \leftarrow
    \psi_j
    -
    \frac{1}{N_h}
    \sum_{m=1}^{N_h}
    \psi_j(\bm{x}_m),
\end{equation}
where \(N_h=4096\). It is then normalized using the discrete Euclidean norm:
\begin{equation}
    \psi_j
    \leftarrow
    \frac{
        \psi_j
    }{
        \left(
            \sum_{m=1}^{N_h}
            \psi_j(\bm{x}_m)^2
        \right)^{1/2}
    }.
\end{equation}
The assembled dictionary matrix is normalized columnwise once more for
numerical robustness.
\subsection{Functional measurements}
Let
\begin{equation}
    \Psi
    =
    \begin{bmatrix}
        \bm{\psi}_1 & \cdots & \bm{\psi}_K
    \end{bmatrix}
    \in
    \mathbb{R}^{4096\times K}
\end{equation}
denote the discrete dictionary matrix (denoted \(\Psi\) here to
avoid a clash with the mini-batch size \(B\) used in the training losses).
For the \(n\)-th input snapshot,
the complete dictionary-coordinate vector is
\begin{equation}
    \bm{h}(t_n)
    =
    \Psi^{\mathsf T}
    \bm{\omega}(t_n)
    \in
    \mathbb{R}^{K}.
\end{equation}
Equivalently,
\begin{equation}
    h_j(t_n)
    =
    \left\langle
        \omega(\cdot,t_n),
        \psi_j
    \right\rangle_h,
\end{equation}
where \(\langle\cdot,\cdot\rangle_h\) denotes the discrete grid inner
product used by the implementation.
For a batch of \(N\) trajectories, the full measurement tensor has shape
\begin{equation}
    H
    \in
    \mathbb{R}^{N\times T_{\mathrm{in}}\times K}.
\end{equation}
The implementation computes these measurements by multiplying each flattened
input snapshot by the dictionary matrix.
\subsection{Predictive selection of fixed measurements}
The fixed topological model selects \(q=128\) atoms from the complete
dictionary. Each dictionary coordinate is standardized over the training
set and scored according to its cross-covariance with the reduced output
coefficients.
Let
\begin{equation}
    \widetilde H
    \in
    \mathbb{R}^{N_{\mathrm{train}}
    \times T_{\mathrm{in}}\times K}
\end{equation}
denote the standardized measurement tensor and let
\begin{equation}
    C
    \in
    \mathbb{R}^{N_{\mathrm{train}}\times r}
\end{equation}
contain the centered output coefficients. For atom \(j\) and input time
\(t_n\), define
\begin{equation}
    \bm{\gamma}_{n,j}
    =
    \frac{1}{N_{\mathrm{train}}}
    \widetilde H_{:,n,j}^{\mathsf T}C.
\end{equation}
The time-aggregated predictive score is
\begin{equation}
    s_j
    =
    \left[
        \frac{1}{T_{\mathrm{in}}}
        \sum_{n=0}^{T_{\mathrm{in}}-1}
        \left\|
            \bm{\gamma}_{n,j}
        \right\|_2^2
    \right]^{1/2}.
    \label{eq:app-ns-predictive-score}
\end{equation}
The \(q\) atoms with the largest scores define the fixed measurement index
set
\begin{equation}
    \mathcal{I}_q.
\end{equation}
This selection is performed using training data only.
\subsection{Dual synthesis and proxy reconstruction}
Let
\begin{equation}
    \Psi_q
    =
    \Psi_{:,\mathcal{I}_q}
    \in
    \mathbb{R}^{4096\times q}
\end{equation}
contain the selected atoms. The regularized dual synthesis matrix is
\begin{equation}
    D_q
    =
    \Psi_q
    \left(
        \Psi_q^{\mathsf T}\Psi_q
        +
        \varepsilon_{\mathrm{dual}}I
    \right)^{-1},
    \qquad
    \varepsilon_{\mathrm{dual}}=10^{-5}.
    \label{eq:app-ns-dual-synthesis}
\end{equation}
For the selected measurements
\begin{equation}
    \bm{z}(t_n)
    =
    \bm{h}_{\mathcal{I}_q}(t_n)
    \in
    \mathbb{R}^{q},
\end{equation}
the synthesized proxy snapshot is
\begin{equation}
    \widetilde{\bm{\omega}}_q(t_n)
    =
    D_q\bm{z}(t_n).
\end{equation}
The complete proxy history belongs to
\begin{equation}
    \mathbb{R}^{64\times64\times10}.
\end{equation}
This proxy history is standardized using training-set proxy statistics and
then passed to the common three-dimensional spectral encoder.
\subsection{Sensor representation}
The sensor model uses \(q=128\) approximately uniformly distributed spatial
locations,
\begin{equation}
    \left\{
        \bm{x}_{s_1},
        \ldots,
        \bm{x}_{s_q}
    \right\}.
\end{equation}
For every input time,
\begin{equation}
    z_j^{\mathrm{sens}}(t_n)
    =
    \omega(\bm{x}_{s_j},t_n).
\end{equation}
The measured values are embedded into a sparse field
\begin{equation}
    \omega_{\mathrm{sparse}}
    \in
    \mathbb{R}^{64\times64\times10},
\end{equation}
and a binary mask
\begin{equation}
    m_{\mathrm{sens}}
    \in
    \mathbb{R}^{64\times64\times10}
\end{equation}
indicates the sensor locations. The FNO branch therefore receives the
two-channel tensor
\begin{equation}
    \left[
        \omega_{\mathrm{sparse}},
        m_{\mathrm{sens}}
    \right]
    \in
    \mathbb{R}^{64\times64\times10\times2}.
\end{equation}
This representation is constructed explicitly in the sensor-proxy routine.
\subsection{Adaptive topological measurements}
The adaptive model starts from the fixed selected coordinates and adds a
trainable correction formed from all dictionary coordinates:
\begin{equation}
    \bm{z}_{\mathrm{ad}}(t_n)
    =
    \bm{h}_{\mathcal{I}_q}(t_n)
    +
    \bm{h}(t_n)\Delta A_\theta,
\end{equation}
where
\begin{equation}
    \Delta A_\theta
    \in
    \mathbb{R}^{K\times q}.
\end{equation}
The correction matrix is initialized as
\begin{equation}
    \Delta A_\theta=0,
\end{equation}
so that the adaptive and fixed coordinates coincide at initialization.
The adaptive proxy is reconstructed using the same fixed dual matrix:
\begin{equation}
    \widetilde{\bm{\omega}}_{\mathrm{ad}}(t_n)
    =
    D_q
    \bm{z}_{\mathrm{ad}}(t_n).
\end{equation}
Thus, the adaptive model changes the measurement coordinates while retaining
the same synthesis space and spectral-encoder branch architecture.
\subsection{Common three-dimensional Fourier spectral encoder}
All four Two-Step models use the same three-dimensional FNO encoder acting
over the two spatial dimensions and the input-time dimension. The encoder
receives a tensor of the form
\begin{equation}
    X
    \in
    \mathbb{R}^{N_x\times N_y\times T_{\mathrm{in}}\times C_{\mathrm{in}}},
\end{equation}
where \(C_{\mathrm{in}}=1\) for the full-field and topological models and
\(C_{\mathrm{in}}=2\) for the sensor model.
Normalized coordinate channels
\begin{equation}
    (x,y,t)
\end{equation}
are concatenated to the input. Each spectral layer applies a truncated
three-dimensional Fourier convolution and a local pointwise convolution:
\begin{equation}
    v_{\ell+1}
    =
    \sigma
    \left[
        \mathcal{F}^{-1}
        \left(
            R_\ell
            \odot
            \mathcal{F}(v_\ell)
        \right)
        +
        W_\ell v_\ell
    \right].
\end{equation}
The default spectral truncation is
\begin{equation}
    (k_x,k_y,k_t)
    =
    (12,12,6),
\end{equation}
with width \(32\) and four spectral layers. After the last layer, global
averaging over the spatial and input-time coordinates produces the branch
latent vector.
\subsection{Reduced output trajectory basis}
Let
\begin{equation}
    Y_{\mathrm{train}}
    \in
    \mathbb{R}^{4000\times(4096\cdot40)}
\end{equation}
contain the flattened training output trajectories. The mean trajectory is
\begin{equation}
    \overline{\bm{y}}
    =
    \frac{1}{N_{\mathrm{train}}}
    \sum_{i=1}^{N_{\mathrm{train}}}
    \bm{y}^{(i)},
\end{equation}
and the centered data matrix is
\begin{equation}
    Y_c
    =
    Y_{\mathrm{train}}
    -
    \bm{1}
    \overline{\bm{y}}^{\mathsf T}.
\end{equation}
The economy singular-value decomposition is
\begin{equation}
    Y_c
    =
    U\Sigma V^{\mathsf T}.
\end{equation}
The retained rank is selected so that
\begin{equation}
    \frac{
        \sum_{j=1}^{r}\sigma_j^2
    }{
        \sum_{j}\sigma_j^2
    }
    \geq
    \eta_{\mathrm{SVD}},
\end{equation}
subject to
\begin{equation}
    r\leq r_{\max}.
\end{equation}
In the default configuration,
\begin{equation}
    \eta_{\mathrm{SVD}}=0.999,
    \qquad
    r_{\max}=128.
\end{equation}
The output basis is
\begin{equation}
    Q
    =
    V_{:,1:r}
    \in
    \mathbb{R}^{(4096\cdot40)\times r}.
\end{equation}
The coefficient vector is
\begin{equation}
    \bm{c}^{(i)}
    =
    Q^{\mathsf T}
    \left(
        \bm{y}^{(i)}
        -
        \overline{\bm{y}}
    \right),
\end{equation}
and the reconstructed trajectory is
\begin{equation}
    \widehat{\bm{y}}^{(i)}
    =
    \overline{\bm{y}}
    +
    Q\widehat{\bm{c}}^{(i)}.
\end{equation}
The SVD basis is constructed from the training outputs only.
\subsection{Coefficient standardization and Two-Step loss}
Each retained coefficient is standardized independently using its training
mean and standard deviation:
\begin{equation}
    c_{j,\mathrm{std}}
    =
    \frac{
        c_j-\mu_{c_j}
    }{
        \sigma_{c_j}
    }.
\end{equation}
The branch network predicts standardized coefficients. Before evaluating the
training loss, the implementation maps both predictions and targets back to
the physical coefficient scale:
\begin{equation}
    \widehat{\bm{c}}
    =
    \widehat{\bm{c}}_{\mathrm{std}}
    \odot
    \bm{\sigma}_c
    +
    \bm{\mu}_c.
\end{equation}
The Two-Step loss is
\begin{equation}
\begin{aligned}
    \mathcal{L}_{\mathrm{TwoStep}}
    &=
    \frac{1}{Br}
    \sum_{i=1}^{B}
    \left\|
        \widehat{\bm{c}}^{(i)}
        -
        \bm{c}^{(i)}
    \right\|_2^2
    \\
    &\quad+
    \lambda_{\mathrm{rel}}
    \frac{1}{B}
    \sum_{i=1}^{B}
    \frac{
        \left\|
            \widehat{\bm{c}}^{(i)}
            -
            \bm{c}^{(i)}
        \right\|_2
    }{
        \left\|
            \bm{c}^{(i)}
        \right\|_2+\varepsilon
    }.
\end{aligned}
\end{equation}
Computing the loss after undoing coefficient standardization restores the
physical POD/SVD energy weighting and avoids overemphasizing weak
high-index modes.
For the adaptive model, the drift penalty
\begin{equation}
    \mathcal{L}_{\mathrm{drift}}
    =
    \lambda_{\mathrm{drift}}
    \left\|
        \Delta A_\theta
    \right\|_F^2
\end{equation}
is added to the coefficient-space loss.
\subsection{Adaptive training schedule}
The adaptive branch is initialized from the trained fixed topological branch.
Training then proceeds in two stages.
In the first stage, the FNO branch is frozen and only
\(\Delta A_\theta\) is optimized:
\begin{equation}
    \min_{\Delta A_\theta}
    \mathcal{L}_{\mathrm{TwoStep}}
    +
    \mathcal{L}_{\mathrm{drift}}.
\end{equation}
In the second stage, the measurement correction and the FNO branch are
optimized jointly:
\begin{equation}
    \min_{\Delta A_\theta,\theta_{\mathrm{FNO}}}
    \mathcal{L}_{\mathrm{TwoStep}}
    +
    \mathcal{L}_{\mathrm{drift}}.
\end{equation}
The default schedule uses \(1000\) measurement-only epochs followed by
\(2000\) joint epochs.
\subsection{Vanilla DeepONet representation}
Vanilla DeepONet uses the complete standardized input history. Its branch
network is the same three-dimensional FNO encoder, producing a latent vector
\begin{equation}
    \bm{b}
    \in
    \mathbb{R}^{p}.
\end{equation}
The trunk network receives a space-time coordinate
\begin{equation}
    \bm{\xi}
    =
    (x,y,t)
    \in
    [0,1]^3
\end{equation}
and returns
\begin{equation}
    \bm{\tau}(\bm{\xi})
    \in
    \mathbb{R}^{p}.
\end{equation}
The predicted vorticity is
\begin{equation}
    \widehat{\omega}(\bm{\xi})
    =
    \bm{b}^{\mathsf T}
    \bm{\tau}(\bm{\xi})
    +
    b_0.
\end{equation}
Unlike the four Two-Step models, Vanilla DeepONet predicts space-time values
directly and does not use the common output SVD basis.
\subsection{Evaluation metrics}
For test realization \(i\), the sample-wise relative \(L^2\) error over the
entire predicted trajectory is
\begin{equation}
    e_i
    =
    \frac{
        \left\|
            \widehat{\bm{y}}^{(i)}
            -
            \bm{y}^{(i)}
        \right\|_2
    }{
        \left\|
            \bm{y}^{(i)}
        \right\|_2
    }.
\end{equation}
The mean sample-wise relative error is
\begin{equation}
    E_{\mathrm{mean}}
    =
    \frac{1}{N_{\mathrm{test}}}
    \sum_{i=1}^{N_{\mathrm{test}}}
    e_i.
\end{equation}
The global relative \(L^2\) error is
\begin{equation}
    E_{\mathrm{global}}
    =
    \frac{
        \left\|
            \widehat Y-Y
        \right\|_F
    }{
        \left\|
            Y
        \right\|_F
    }.
\end{equation}
The root-mean-square error is
\begin{equation}
    \mathrm{RMSE}
    =
    \sqrt{
        \frac{
            1
        }{
            N_{\mathrm{test}}N_hT_{\mathrm{out}}
        }
        \sum_{i=1}^{N_{\mathrm{test}}}
        \sum_{m=1}^{N_h}
        \sum_{n=1}^{T_{\mathrm{out}}}
        \left(
            \widehat{\omega}_{m,n}^{(i)}
            -
            \omega_{m,n}^{(i)}
        \right)^2
    }.
\end{equation}
The mean absolute error is
\begin{equation}
    \mathrm{MAE}
    =
    \frac{
        1
    }{
        N_{\mathrm{test}}N_hT_{\mathrm{out}}
    }
    \sum_{i=1}^{N_{\mathrm{test}}}
    \sum_{m=1}^{N_h}
    \sum_{n=1}^{T_{\mathrm{out}}}
    \left|
        \widehat{\omega}_{m,n}^{(i)}
        -
        \omega_{m,n}^{(i)}
    \right|.
\end{equation}
These are the metrics reported by the implementation.
\subsection{Interpretation of the model comparison}
The Full Two-Step, Sensor Two-Step, Fixed Topological Two-Step, and Adaptive
Topological Two-Step models use a common output SVD basis and the same
three-dimensional FNO encoder family. Their principal difference is the
representation of the input history.
The full-field model uses
\begin{equation}
    4096\times10
\end{equation}
scalar input values. The sensor and topological models use
\begin{equation}
    128\times10
\end{equation}
measurements, corresponding to a compression ratio
\begin{equation}
    \frac{
        4096\times10
    }{
        128\times10
    }
    =
    32.
\end{equation}
The fixed topological and sensor comparisons should be interpreted as
equal representation-budget comparisons. They are not necessarily equal
physical sensing-cost comparisons, because a global functional measurement
may require access to the full spatial vorticity field.
The adaptive model retains the same final measurement dimension \(q=128\)
per input time but learns corrections within the span of the complete
dictionary. Its comparison with the fixed model therefore evaluates whether
adapting the measurement functionals improves prediction while preserving
the same compressed coordinate dimension.
\end{document}